\documentclass{article}
\usepackage[utf8]{inputenc}
\usepackage{authblk}
\usepackage{setspace}
\usepackage[margin=1.25in]{geometry}
\usepackage{graphicx}
\graphicspath{ {./figures/} }
\usepackage{subcaption}
\usepackage{amsmath}

\usepackage[
    style=nejm,
    citestyle=numeric-comp,
    sorting=none,
    backend=bibtex,
    natbib=true
]{biblatex}

\usepackage{hyperref}
\usepackage{url}

\usepackage[utf8]{inputenc} 
\usepackage[T1]{fontenc}   
\usepackage{amsmath,amssymb,amsfonts,amsthm,bbm}
\usepackage{mathtools}
\usepackage{microtype}
\usepackage{xcolor}
\usepackage{pifont}

\usepackage{url}
\usepackage{booktabs}
\usepackage{nicefrac}
\usepackage{enumerate,enumitem}
\usepackage{epsfig}
\usepackage{algorithm,algorithmic}
\usepackage{framed}
\usepackage{setspace}
\usepackage[title]{appendix}
\usepackage{tikz}
\usetikzlibrary{positioning,calc,arrows.meta}
\usepackage[most]{tcolorbox}
\usepackage{pifont}
\usepackage{multirow}
\usepackage{tabulary}
\usepackage{colortbl}
\usepackage{tablefootnote}
\usepackage{threeparttable}
\allowdisplaybreaks[4]

\usepackage{hyperref}

\usepackage{aliascnt}
\usepackage{makecell}
\theoremstyle{plain}
\newtheorem{theorem}{Theorem}[section]

\newaliascnt{lemma}{theorem}
\newtheorem{lemma}[lemma]{Lemma}
\aliascntresetthe{lemma}

\newaliascnt{corollary}{theorem}

\aliascntresetthe{corollary}

\newaliascnt{proposition}{theorem}

\aliascntresetthe{proposition}

\theoremstyle{definition}
\newaliascnt{definition}{theorem}
\newtheorem{definition}[definition]{Definition}
\aliascntresetthe{definition}

\newaliascnt{assumption}{theorem}
\newtheorem{assumption}[assumption]{Assumption}
\aliascntresetthe{assumption}

\theoremstyle{remark}
\newaliascnt{remark}{theorem}
\newtheorem{remark}[remark]{Remark}
\aliascntresetthe{remark}

\usepackage[capitalize,noabbrev]{cleveref}
\crefname{equation}{Eq.}{Eqs.}
\Crefname{equation}{Eq.}{Eqs.}

\crefname{theorem}{Theorem}{Theorems}
\Crefname{theorem}{Theorem}{Theorems}

\crefname{lemma}{Lemma}{Lemmas}
\Crefname{lemma}{Lemma}{Lemmas}

\crefname{corollary}{Corollary}{Corollaries}
\Crefname{corollary}{Corollary}{Corollaries}

\crefname{proposition}{Proposition}{Propositions}
\Crefname{proposition}{Proposition}{Propositions}

\crefname{definition}{Definition}{Definitions}
\Crefname{definition}{Definition}{Definitions}

\crefname{assumption}{Assumption}{Assumptions}
\Crefname{assumption}{Assumption}{Assumptions}

\crefname{remark}{Remark}{Remarks}
\Crefname{remark}{Remark}{Remarks}

\usepackage{subcaption}

\title{Sharpness-Aware Muon: Matrix-Aware Optimization with Momentum-Guided Perturbations}

\author{Zhiwei Zhai}
\author{Wenjing Yan}
\author{Ying-Jun Angela Zhang}

\affil{Department of Information Engineering,\\
The Chinese University of Hong Kong.\\
\{zz024, wjyan, yjzhang\}@ie.cuhk.edu.hk}

\date{}
\begin{document}

\maketitle

\begin{abstract}
Muon is a matrix-aware optimizer that combines momentum with orthogonalization to produce structured update directions with strong empirical behavior. However, it only shapes the geometry of the update itself and does not explicitly control neighborhood worst-case loss or bias training toward flat regions, which are often linked to stronger generalization. To address this limitation, we introduce the sharpness-aware formulation of Muon. We first develop \textsc{MuSAM}, which incorporates Sharpness-Aware Minimization (SAM)-style perturbation into Muon's orthogonalized update and establishes a principled bridge between sharpness-aware training and matrix-aware optimization. We then show that this formulation faces an efficiency--stability tension: constructing the perturbation requires an extra gradient evaluation, while the perturbation itself is generated from a single minibatch gradient and is therefore noisy. To resolve this issue, we propose \textsc{MoMuSAM}, which turns Muon's momentum from an update variable into a sharpness-probing signal. By using normalized momentum as the perturbation direction, \textsc{MoMuSAM} removes the extra clean-gradient computation and replaces the single-batch perturbation proxy with a temporally aggregated one. We prove that both \textsc{MuSAM} and \textsc{MoMuSAM} attain the same asymptotic first-order stationarity order, and further show that \textsc{MoMuSAM} yields a more stable stochastic linearization than \textsc{MuSAM}. Experiments on standard benchmarks demonstrate that sharpness awareness consistently improves Muon.
\end{abstract}

\section{Introduction}
The generalization behavior of modern deep networks depends not only on how well they fit the training data, but also on the geometry of the optimization process that selects the final solution \citep{hochreiter1994simplifying, keskar2016large}. This observation has increasingly shaped modern optimizer design along two complementary directions. One direction, exemplified by \emph{Sharpness-Aware Minimization} (SAM), improves generalization by explicitly penalizing large neighborhood worst-case loss and thereby biasing training toward flatter regions of the landscape \citep{foret2020sharpness, andriushchenko2022towards, bartlett2023dynamics}. The other reshapes the update rule itself through structured transformations of gradient information, exploiting momentum, geometry, or parameter structure to alter the inductive bias of optimization beyond plain first-order descent \citep{chen2023symbolic,yuan2024mars,vyas2024soap,pethick2025training}. Muon is a particularly compelling instance of this second direction: by coupling momentum with orthogonalization, it produces matrix-aware update directions with distinctive geometry and strong empirical optimization behavior \citep{Muon2024}.

Despite the success of both directions, however, their interaction remains poorly understood. Sharpness-aware methods primarily control the local loss geometry explored during training, whereas structured optimizers primarily control the geometry of the update used to explore it. This distinction is particularly salient in Muon, where momentum is explicitly orthogonalized into matrix-aware search directions. Such a design gives the optimizer strong control over \emph{how} it moves through parameter space, while leaving unresolved whether those structured movements are guided toward locally flat, generalization-favorable regions. In this sense, Muon does not lack geometry; rather, it leaves unmodeled the part of geometry most directly tied to flatness, namely the local worst-case loss around the iterate. Existing sharpness-aware extensions of standard momentum methods suggest that internal optimizer states can be useful for constructing perturbations \citep{becker2024momentum, li2025vasso}, but this idea has not been systematically developed for matrix-aware orthogonalized optimization. Muon therefore provides a particularly revealing setting for studying whether update geometry and local loss geometry can be coordinated, which naturally raises the following question:

\begin{tcolorbox}[colback=gray!5,
    colframe=black!70, top=3pt, bottom=3pt,
    frame style={line width=10pt}, arc=2mm]
\textbf{Can the internal state of a matrix-aware optimizer be made to govern not only how updates are taken, but also which local geometries are favored during training?}
\end{tcolorbox}

Our answer is affirmative, and it leads to a new optimizer family. We begin by formulating \textsc{MuSAM}, a sharpness-aware Muon method that couples SAM-style local worst-case perturbation with Muon's orthogonalized momentum update. This formulation plays a foundational role: it shows that explicit sharpness control is fully compatible with Muon's matrix-aware update geometry, and thereby establishes the correct starting point for sharpness-aware Muon. More importantly, it reveals the exact challenge that such an integration creates. The sharpness perturbation is expensive to construct, because it requires an additional gradient that the base Muon update does not otherwise need \citep{wen2018smoothout, zhuang2022surrogate, zhao2022stochastic}. At the same time, it is built from a \emph{single minibatch gradient}, and is therefore an inherently noisy and batch-local proxy for local sharpness. Prior work has shown that such stochastic linearization can be unstable and may fail to faithfully capture the broader local geometry that ultimately matters for generalization \citep{reddi2016stochastic, wen2022does, kim2022fisher, li2025vasso}. In this way, \textsc{MuSAM} serves a dual purpose: it provides the first principled sharpness-aware formulation of Muon, and it exposes the precise efficiency--stability tension that the remainder of the paper resolves.

The key technical insight of this paper is that resolving this tension does not require introducing an external perturbation mechanism; instead, it requires identifying the \emph{right} sharpness signal within Muon's own optimization dynamics. In particular, Muon's momentum forms a matrix-valued, temporally aggregated state that carries information across both iterations and minibatches, and is therefore typically smoother, less myopic, and more robust to batch-level noise than an instantaneous stochastic gradient \citep{li2021heavy, cutkosky2019momentum, qian2025muon}. This leads to a principled design criterion for sharpness-aware Muon: the perturbation direction should reuse the temporal information
accumulated in Muon's momentum while also serving as a stable proxy for local ascent. Guided by this criterion, we design \textsc{MoMuSAM}, in which the normalized momentum generates the perturbation and the gradient is evaluated only once at the perturbed point. Rather than merely reusing an existing optimizer state, this design connects sharpness probing with Muon's temporal momentum dynamics, turning momentum from a pure update variable into a \emph{history-aware geometric probe} of local sharpness.

This design has consequences at three levels. Algorithmically, \textsc{MoMuSAM} eliminates the extra clean-gradient evaluation required by \textsc{MuSAM}, bringing sharpness-aware Muon much closer to the computational profile of vanilla Muon. Statistically, it replaces a highly local single-batch perturbation with a temporally aggregated one, yielding a more stable slope proxy for stochastic linearization. Theoretically, this distinction is substantive rather than cosmetic: our analysis shows that \textsc{MuSAM} and \textsc{MoMuSAM} attain the same asymptotic first-order stationarity order, while \textsc{MoMuSAM} admits a sharper certified
directional-stability bound in the stochastic regime. More broadly, our results reveal a new principle for optimizer design: internal optimizer states can encode reusable geometric information, and exploiting that information can lead to genuinely stronger training algorithms.
The contributions of this paper are summarized as follows.

\begin{itemize}[topsep=0pt, itemsep=2pt, parsep=1pt, leftmargin=0.7cm]
\item We introduce, to the best of our knowledge, the first systematic sharpness-aware formulation of Muon. In particular, we establish \textsc{MuSAM}, which unifies Muon's matrix-aware orthogonalized updates with an explicit SAM-style approximation to local worst-case training and makes sharpness control an intrinsic part of the Muon framework.

\item We uncover and resolve the fundamental efficiency--stability tension in sharpness-aware Muon. Our proposed \textsc{MoMuSAM} turns Muon's momentum from an update variable into a sharpness-probing signal, thereby reducing extra gradient cost and replacing the noisy single-batch perturbation with a more stable aggregated direction.

\item We provide convergence guarantees for both \textsc{MuSAM} and
\textsc{MoMuSAM}, showing that they attain the same
\(\mathcal O(T^{-1/4})\) average first-order stationarity order for
the original objective at both the clean iterates and the corresponding
perturbed query points. Beyond matching this convergence guarantee,
we prove that \textsc{MoMuSAM} improves the certified average
directional-linearization stability from
\(\mathcal O(T^{-3/4})\) for \textsc{MuSAM} to
\(\mathcal O(T^{-1})\), while retaining only one gradient evaluation
per iteration.

\item Through experiments on standard benchmarks, we show that introducing sharpness awareness consistently strengthens Muon, while \textsc{MoMuSAM} delivers a markedly better generalization--efficiency tradeoff than \textsc{MuSAM}. Taken together, these results position sharpness-aware Muon as a new optimizer family that tightly integrates structured matrix updates, local geometry control, and computational efficiency.
\end{itemize}

\section{Preliminaries}
\label{sec:prelim}

\subsection{Problem Formulation}
\label{subsec:problem}

We consider stochastic nonconvex optimization over a matrix-valued parameter
\(
W \in \mathbb{R}^{m \times n}
\),
with objective
\begin{equation}
    \min_{W \in \mathbb{R}^{m \times n}} f(W)
    \;:=\;
    \mathbb{E}_{\xi}\big[\ell(W;\xi)\big],
    \label{eq:prelim_problem}
\end{equation}
where \(\xi\) denotes a random sample (or minibatch) drawn from the data distribution, and
\(\ell(W;\xi)\) is the corresponding stochastic loss.
At iteration \(t\), given a minibatch \(\mathcal{B}_t\), we write the stochastic gradient as $\nabla \ell(W_t;\mathcal{B}_t).$
For simplicity of presentation, the analysis below is developed for a single matrix block \(W\); in a neural network with multiple matrix-valued hidden-layer parameters, the same construction is applied blockwise.

\paragraph{Notation.}
For a matrix \(X \in \mathbb{R}^{m \times n}\), \(\|X\|_F\) denotes the Frobenius norm,
\(\|X\|_\ast\) denotes the nuclear norm, and \(\|X\|_{\mathrm{op}}\) denotes the spectral (operator) norm.
We use the standard inequality $\|X\|_\ast \le \sqrt{r}\,\|X\|_F$,
where \(r\) is an upper bound on \(\mathrm{rank}(X)\). Without loss of generality, we assume $m \geq n$, so the rank of the matrix is at most $n$.
For \(X,Y \in \mathbb{R}^{m \times n}\), we define the matrix inner product
$\langle X,Y\rangle := \mathrm{Tr}(X^\top Y)$,
where \(\mathrm{Tr}(\cdot)\) denotes the trace.
We write \(\mathrm{Polar}(X)\) for the polar factor of \(X\), and use
\(\mathrm{NS\_Orth}(X)\) to denote its practical Newton--Schulz
approximation. For notational simplicity, whenever the minibatch
\(\mathcal B_t\) is clear from context, we write
\(\nabla\ell(W_t)\) as shorthand for
\(\nabla\ell(W_t;\mathcal B_t)\). Whenever a normalized direction
\(V/\|V\|_F\) appears below, it is understood to be zero when
\(V=0\). We reserve \(\eta>0\) for the learning rate,
\(\mu\in[0,1)\) for the momentum coefficient, and \(\rho>0\) for
the sharpness perturbation radius.

In this paper, the following standard assumptions from stochastic nonconvex optimization  \citep{ghadimi2013stochastic, bottou2018optimization, mi2022make, zhuang2022surrogate, li2025vasso, kim2026convergence} are made for the convergence analysis:

\begin{assumption}[Lower bounded objective]
\label{ass:lower_bound}
The objective \(f\) is bounded from below, i.e., there exists \(f_\star > -\infty\) such that $f(W) \ge f_\star, \forall W \in \mathbb{R}^{m \times n}.$
\end{assumption}

\begin{assumption}[Sample-wise Lipschitz smoothness]
\label{lipschiz}
For almost every sample \(\xi\), the stochastic loss
\(\ell(\cdot;\xi)\) is continuously differentiable and satisfies
\begin{equation}
\left\|
\nabla\ell(X;\xi)-\nabla\ell(Y;\xi)
\right\|_*
\le
L\|X-Y\|_{\mathrm{op}},
\qquad
\forall X,Y\in\mathbb R^{m\times n}.\notag
\end{equation}
\end{assumption}
By Jensen's inequality, \cref{lipschiz} implies that the same
smoothness bound holds almost surely for every sampled minibatch
gradient and also for the population gradient.

\begin{assumption}[Conditionally unbiased stochastic gradients]
\label{gradientvariance}
Let \(\mathcal F_{t-1}\) denote the history before sampling the
minibatch at iteration \(t\). The minibatch \(\mathcal B_t\),
consisting of \(B\) independent samples, is sampled independently
of \(\mathcal F_{t-1}\). For every
\(\mathcal F_{t-1}\)-measurable query point \(X_t\), we have
\[
\mathbb E\!\left[
\nabla\ell(X_t;\mathcal B_t)
\mid
\mathcal F_{t-1}
\right]
=
\nabla f(X_t), \quad\mathbb E\!\left[
\left\|
\nabla\ell(X_t;\mathcal B_t)-\nabla f(X_t)
\right\|_F^2
\mid
\mathcal F_{t-1}
\right]
\le
\frac{\sigma^2}{B}.
\]
\end{assumption}

\cref{lipschiz} is naturally expressed with respect to a norm and its dual norm; in the matrix setting, the operator--nuclear pair is especially compatible with orthogonalized or polar-type updates \citep{nesterov2013introductory,jaggi2013revisiting, beck2017first, kim2026convergence}. \cref{gradientvariance} is equally standard and captures the stochastic noise introduced by minibatch sampling, under which the variance of a minibatch estimator decreases proportionally to the batch size \citep{ghadimi2013stochastic, bottou2018optimization}.
Under these assumptions, we use the standard notion of first-order stationarity.

\begin{definition}[\(\varepsilon\)-stationary point]
\label{def:stationary}
A point \(W \in \mathbb{R}^{m \times n}\) is called an \(\varepsilon\)-stationary point if
$\|\nabla f(W)\|_* \le \varepsilon.$
For an iterative algorithm producing \(\{W_t\}_{t=0}^{T-1}\), we say that it attains \(\varepsilon\)-stationarity in \(T\) iterations if $\frac{1}{T}\sum_{t=0}^{T-1}\mathbb{E}\!\left[\|\nabla f(W_t)\|_*\right]
\le \varepsilon$.
\end{definition}

\subsection{Momentum Orthogonalization}
\label{subsec:muon}

Recent work on orthogonalized optimization, exemplified by Muon, advocates exploiting the intrinsic matrix structure of neural network parameters instead of flattening them into long vectors. The central idea is to first construct a matrix-valued momentum term and then orthogonalize it before taking an update step. This procedure preserves the dominant directional subspaces of the momentum while removing singular-value magnitudes, thereby yielding a structured search direction adapted to matrix parameters \citep{liu2025muon, an2025asgo, liu2025cosmos, shah2025practical}. 

For exposition, we assume \(m\ge n\); when \(m<n\), the polar
operation is applied to \(M_t^\top\), and the resulting direction is
transposed back. Denote \(M_t \in \mathbb{R}^{m\times n}\) as the momentum matrix and \(O_t \in \mathbb{R}^{m\times n}\) as the orthogonalized update direction. A generic Muon-style update takes the form
\begin{equation}
    M_t
    \;=\;
    \mu M_{t-1} + (1-\mu)\nabla \ell(W_t),
    \qquad \mu \in [0,1),
    \label{eq:muon_momentum_prelim}
\end{equation}
followed by
\begin{equation}
    O_t
    \;\in\;
    \arg\min_{O \in \mathbb{R}^{m\times n}}
    \|O-M_t\|_F
    \quad
    \text{s.t.}\quad
    O^\top O = I_n,
    \label{eq:muon_orth_problem_prelim}
\end{equation}
and the parameter update
\begin{equation}
    W_{t+1}
    \;=\;
    W_t - \eta O_t.
    \label{eq:muon_update_prelim}
\end{equation}
Thus, Muon updates along the closest column-orthonormal matrix to the momentum.

If \(M_t = U_t \Sigma_t V_t^\top\) is the thin singular value decomposition (SVD), then the solution to \cref{eq:muon_orth_problem_prelim} is the polar factor $O_t=\mathrm{Polar}(M_t)=U_tV_t^\top$.
In practice, this orthogonalization is not computed by exact SVD; Muon instead uses a few Newton--Schulz iterations \citep{Muon2024} to obtain an efficient SVD-free approximation,
$O_t \approx \mathrm{NS\_Orth}(M_t)$,
which relies mainly on matrix multiplications and is therefore much cheaper at scale.

Muon is compelling because it is explicitly matrix-aware: it does not merely accumulate gradients, but transforms momentum into a spectrally normalized update direction. However, this geometric normalization is still \emph{sharpness-agnostic}. It shapes the update direction, yet leaves uncontrolled the local worst-case loss around the iterate. As a result, Muon does not explicitly encourage convergence to flat regions, and any improvement in generalization arises only implicitly. This missing link is precisely what makes sharpness-aware Muon both important and nontrivial: it requires enriching a structured matrix optimizer with an explicit mechanism for local geometry control. \citep{keskar2016large, foret2020sharpness, li2025vasso}

\section{Algorithm Design}
\label{sec:method}

\subsection{A Sharpness-Aware Extension of Muon: MuSAM}
\label{subsec:musam}

We start from the most direct way to make Muon sharpness-aware. Since Muon already provides a structured, orthogonalized update rule for matrix-valued parameters, a natural next step is to explicitly incorporate local worst-case geometry into its update. To this end, we consider a single matrix parameter block
\(
W \in \mathbb{R}^{m\times n}
\)
and construct a sharpness-aware Muon update at each iteration. The same design is applied blockwise to all matrix-valued parameters in a full network.

\paragraph{Step 1: a local minimax objective.}
We define the local sharpness-aware objective
\begin{equation}
    \min_{W}\max_{\|\delta\|_F \le \rho}\ f(W+\delta),
    \label{eq:musam_minimax}
\end{equation}
where \(\rho>0\) is the perturbation radius.
The inner maximization searches for a worst-case perturbation inside a small Frobenius ball, and therefore explicitly measures the local sensitivity of the loss around the current iterate. This is precisely the mechanism through which sharpness-aware training biases optimization toward flatter regions of the loss landscape \citep{keskar2016large, li2021heavy, foret2020sharpness, andriushchenko2022towards}.

\paragraph{Step 2: stochastic linearization of the inner maximization.}
Directly solving the inner problem in \cref{eq:musam_minimax} is computationally intractable for deep networks. Following the SAM principle \citep{li2025vasso, becker2024momentum}, we approximate it by a first-order stochastic linearization:
\begin{equation}
\begin{aligned}
    \delta_t
    &=
    \arg\max_{\|\delta\|_F \le \rho}\ f(W_{t-1}+\delta)\notag\\
    &\approx
    \arg\max_{\|\delta\|_F \le \rho}
    \Big(
        f(W_{t-1})
        +
        \langle \nabla f(W_{t-1}), \delta \rangle
    \Big)
    \approx
    \arg\max_{\|\delta\|_F \le \rho}
    \Big(
        f(W_{t-1})
        +
        \langle \nabla \ell(W_{t-1}), \delta \rangle
    \Big),
\end{aligned}
\label{eq:musam_linearization}
\end{equation}
where the first approximation is the local first-order Taylor expansion of \(f(W_{t-1}+\delta)\), which is accurate when the perturbation radius \(\rho\) is small. The second replaces the full gradient \(\nabla f(W_{t-1})\) by the minibatch stochastic gradient \(\nabla \ell(W_{t-1})\), yielding a computationally tractable adversarial direction. By Cauchy--Schwarz, the maximizer of
\cref{eq:musam_linearization} admits the closed form
\begin{equation}
\delta_t
=
\rho\,
\frac{\nabla\ell(W_{t-1};\mathcal B_t)}{\|\nabla\ell(W_{t-1};\mathcal B_t)\|_F}.
\label{eq:musam_delta}
\end{equation}
Thus, the perturbation is the normalized
stochastic ascent direction.

\paragraph{Step 3: Muon update from the perturbed point.}
Once the perturbation is obtained, we evaluate the gradient at the perturbed iterate:
$\nabla\ell(W_{t-1}+\delta_t;\mathcal B_t)$,
and use it to update the momentum matrix:
\begin{equation}
    M_t
    \;=\;
    \mu M_{t-1} + (1-\mu)\nabla\ell(W_{t-1}+\delta_t;\mathcal B_t),
    \label{eq:musam_momentum}
\end{equation}
The orthogonalization and parameter update then follow the standard Muon pipeline introduced in \cref{eq:muon_orth_problem_prelim,eq:muon_update_prelim}. 
This yields a direct sharpness-aware Muon method, which we call \textsc{MuSAM}, as summarized in \cref{alg:musam}.

\paragraph{Why this construction is the right starting point.}
The above design is not an arbitrary hybrid. It is the most faithful sharpness-aware extension of Muon. The perturbation \(\delta_t\) is obtained from a stochastic
first-order approximation of the local inner maximization, and thus
serves as an explicit SAM-style probe of local worst-case geometry. The outer update, in contrast, remains purely Muon-style: the perturbed gradient is accumulated into a momentum matrix and then orthogonalized before the parameter update. In this way, \textsc{MuSAM} cleanly separates two roles: the perturbation probes local geometry, while the orthogonalized Muon step preserves the matrix-aware update structure. This makes \textsc{MuSAM} a principled baseline for asking whether explicit sharpness control can improve Muon.

\begin{algorithm}[t]
\caption{Sharpness-Aware Muon (\textsc{MuSAM})}
\label{alg:musam}
\begin{algorithmic}[1]
\REQUIRE Loss \(\ell(W)\), initial matrix parameter \(W_0 \in \mathbb{R}^{m\times n}\), learning rate \(\eta\), momentum coefficient \(\mu\in[0,1)\), perturbation radius \(\rho>0\)
\STATE Initialize \(M_0 \gets 0\)
\FOR{\(t = 1,2,\dots\)}
    \STATE Compute clean stochastic gradient $\nabla\ell(W_{t-1};\mathcal B_t)$
    \STATE Construct sharpness-aware perturbation $\delta_t \gets \rho\, \frac{\nabla\ell(W_{t-1};\mathcal B_t)}{\|\nabla\ell(W_{t-1};\mathcal B_t)\|_F}$
    \STATE Compute perturbed stochastic gradient $\nabla\ell(W_{t-1}+\delta_t;\mathcal B_t)$
    \STATE Update momentum matrix $M_t \gets \mu M_{t-1} + (1-\mu)\nabla \ell(W_{t-1}+\delta_t;\mathcal B_t)$
    \STATE Orthogonalize the momentum matrix $O_t \in \arg\min_{O\in\mathbb{R}^{m\times n}} \|O-M_t\|_F,
    \quad\!\!\!\!\text{s.t.}\quad\!\!\!\!
    O^\top O = I_n$
    \STATE Update parameter $W_t \gets W_{t-1} - \eta O_t$
\ENDFOR
\end{algorithmic}
\end{algorithm}

\paragraph{Practical considerations.}
\textsc{MuSAM} already establishes an important point: sharpness awareness can be introduced into Muon in a principled way, without giving up Muon's matrix-aware inductive bias. At the same time, this formulation exposes two practical issues. First, the perturbation in \cref{eq:musam_delta} requires a clean gradient at \(W_{t-1}\), while the actual update uses a second gradient at \(W_{t-1}+\delta_t\); this substantially increases the per-step cost, just as in vanilla SAM \citep{wen2018smoothout, zhuang2022surrogate, zhao2022stochastic}. Second, the perturbation direction is determined by a single minibatch gradient. In stochastic training, such a direction can be overly batch-local and sensitive to sampling noise, so the resulting adversarial probe may not faithfully reflect the broader local geometry that matters for generalization. Recent analyses describe this phenomenon as instability of stochastic linearization, or equivalently, as the emergence of a {friendly adversary} that weakens the intended sharpness signal \citep{becker2024momentum, li2025vasso}. 

\subsection{A Momentum-Guided Refinement: \textsc{MoMuSAM}}
\label{subsec:momusam}

We now turn Muon's momentum from a pure update variable into a sharpness-probing signal.

\paragraph{Core idea.}
The key observation is that Muon already maintains a matrix-valued momentum state \(M_{t-1}\) before the current backward pass. This state is available at no extra cost and aggregates gradient
information across previous iterations. We therefore use \(M_{t-1}\) as the slope proxy for constructing the local sharpness perturbation, rather than recomputing a fresh clean gradient at \(W_{t-1}\).

\paragraph{Momentum-guided local linearization.}
We use \(M_{t-1}\) as a \emph{proxy} for the local ascent direction when constructing the perturbation. This yields the momentum-guided perturbation
\begin{equation}
    \delta_t=\rho\dfrac{M_{t-1}}{\|M_{t-1}\|_F}.
    \label{eq:momusam_delta}
\end{equation}

Here the perturbation reuses Muon's matrix-valued momentum, while the
matrix-aware geometry is retained by the polar outer update.

\paragraph{Why momentum yields a more stable perturbation proxy.}
The key reason is that the momentum state aggregates gradient
information over time, rather than relying on a single minibatch
gradient. Starting from the recursion
\(
M_t
=
\mu M_{t-1}
+
(1-\mu)
\nabla\ell(W_{t-1}+\delta_t;\mathcal B_t),
\)
one can unroll it from \(M_0=0\) to obtain
\begin{equation}
M_{t-1}
=
(1-\mu)
\sum_{s=1}^{t-1}
\mu^{t-1-s}
\nabla\ell(W_{s-1}+\delta_s;\mathcal B_s).
\label{eq:momusam_ema}
\end{equation}

Thus, \(M_{t-1}\) combines perturbed stochastic gradients from
multiple iterations with exponentially decaying weights, reducing
its sensitivity to the fluctuations of any single minibatch.
Because these gradients are evaluated along a changing optimization
trajectory, the resulting stability depends on both stochastic-noise
averaging and momentum-tracking error. By controlling these two
effects jointly, \cref{thm:linearization_stability} shows that, under
the prescribed horizon-dependent hyperparameters, the certified
average achieved-linearization gap improves from
\(\mathcal O(T^{-3/4})\) for the single-batch perturbation of
\textsc{MuSAM} to \(\mathcal O(T^{-1})\) for the momentum-guided
perturbation of \textsc{MoMuSAM}. Hence, \(M_{t-1}\) provides a
smoother and more history-aware sharpness probe without being
interpreted as an exact solution of the inner maximization
\citep{becker2024momentum,li2025vasso}.

\paragraph{Update rule.}
After constructing the perturbation in \cref{eq:momusam_delta}, we evaluate the gradient at the perturbed point and then proceed with the standard Muon update, following \cref{eq:muon_orth_problem_prelim,eq:muon_update_prelim,eq:musam_momentum}. The resulting method is termed \textsc{MoMuSAM}, short for \emph{Momentum-guided Sharpness-Aware Muon}; its full procedure is summarized in \cref{alg:momusam} in \cref{subsectionb3}.

\begin{table}[t]
\centering
\caption{Comparison between
\textsc{Muon}, \textsc{MuSAM}, and \textsc{MoMuSAM}.}
\label{tab:momusam_advantages}
\vspace{2mm}
\begin{tabular}{lccc}
\specialrule{1.9pt}{2pt}{2pt}
Aspect
& \textsc{Muon}
& \textsc{MuSAM}
& \textsc{MoMuSAM} \\
\midrule
Perturbation source
& --
& $\nabla \ell(W_{t-1})$
& $M_{t-1}$ \\

Per-iteration gradient cost
& 1
& 2
& 1 \\

Directional stability$^\dagger$
& --
& $\mathcal O(T^{-3/4})$
& $\mathcal O(T^{-1})$ \\

First-order stationarity$^\diamond$
& $\mathcal O(T^{-1/4})$
& $\mathcal O(T^{-1/4})$
& $\mathcal O(T^{-1/4})$ \\
\specialrule{1.9pt}{2pt}{2pt}
\end{tabular}

\vspace{2.5mm}
\begin{minipage}{0.95\linewidth}
\footnotesize
$^\dagger$ The average directional-stability bounds for
\textsc{MuSAM} and \textsc{MoMuSAM} are established in
\cref{thm:linearization_stability}.\\
$^\diamond$ The stationarity rate of \textsc{Muon} follows from
\citet{shen2025convergence}; those of \textsc{MuSAM} and
\textsc{MoMuSAM} are established in
\cref{thm:musam_main,thm:momusam_main}, respectively.
\end{minipage}
\end{table}

\paragraph{What \textsc{MoMuSAM} buys us.}
The key differences among \textsc{Muon}, \textsc{MuSAM}, and
\textsc{MoMuSAM} are summarized in
\cref{tab:momusam_advantages}. Relative to the direct
sharpness-aware formulation, \textsc{MoMuSAM} provides three
complementary advantages.
\begin{itemize}[
    topsep=0pt,
    itemsep=3pt,
    parsep=1pt,
    leftmargin=0.65cm
]
\item
First, \textsc{MoMuSAM} eliminates the additional clean-gradient
evaluation required by \textsc{MuSAM}. It therefore reduces the
per-iteration gradient cost from two evaluations to one, recovering
the computational profile of vanilla Muon.

\item
Second, \textsc{MoMuSAM} replaces the instantaneous single-batch
perturbation of \textsc{MuSAM} with the temporally aggregated
momentum state. As established in
\cref{thm:linearization_stability}, this improves the certified
average achieved-linearization gap from
\(\mathcal O(T^{-3/4})\) to \(\mathcal O(T^{-1})\) under the same
horizon-dependent hyperparameters. Importantly, this sharper result
controls the complete direction-sensitive stability error, including
both stochastic fluctuation and momentum-tracking error, rather than
only an isolated variance component.

\item
Third, \textsc{MoMuSAM} obtains this stability improvement without
introducing an auxiliary estimator or optimizer state. At iteration
\(t\), the momentum \(M_t\) is orthogonalized to produce the current
Muon update and is subsequently normalized to construct the next
sharpness perturbation \(\delta_{t+1}\). The information accumulated
for optimization is therefore immediately reused for neighborhood
probing, tightly coupling sharpness awareness with Muon's temporal
state evolution without adding another backward pass.
\end{itemize}

\paragraph{Design philosophy.}
The key idea of \textsc{MoMuSAM} is to construct the perturbation from a signal that is less sensitive to the randomness of a single minibatch. Prior studies on momentum-guided and stabilized SAM variants suggest that such smoother adversarial directions can better capture the local geometry relevant to generalization under stochastic training \citep{becker2024momentum, li2025vasso}. What has remained missing, however, is an explicit theoretical characterization of this advantage. The next section fills this gap by showing that the momentum-guided perturbation in \textsc{MoMuSAM} yields a more stable stochastic linearization while preserving the same convergence order as \textsc{MuSAM}.

\section{Theoretical Analysis}
\label{sec:theory}

We now show that both \textsc{MuSAM} and \textsc{MoMuSAM} retain favorable convergence guarantees under the stochastic matrix-valued setting in \cref{sec:prelim}. Our analysis addresses two complementary questions. First, do the proposed sharpness-aware Muon variants converge to first-order stationary points, both at the clean iterate and at the perturbed iterate? Second, does the momentum-guided perturbation in \textsc{MoMuSAM} enjoy a genuine stability advantage over the raw-gradient perturbation used in \textsc{MuSAM}? The first question is answered by \cref{thm:musam_main,thm:momusam_main}, while the second is addressed by \cref{thm:linearization_stability}.
Throughout this section, let $r := \min\{m,n\}$. All proofs are provided in the appendix. 

\begin{theorem}[Convergence of \textsc{MuSAM}]
\label{thm:musam_main}
Suppose \cref{ass:lower_bound,lipschiz,gradientvariance} hold, and let \(\{W_t\}_{t=0}^{T-1}\) be generated by \textsc{MuSAM}. Then
\begin{align}
\frac{1}{T}\sum_{t=0}^{T-1} \mathbb E\|\nabla f(W_{t})\|_*
&\!\lesssim\!
\frac{f(W_0)\!-\!f_\star}{\eta T}
\!+\!
\frac{\sqrt{r}}{(1\!-\!\mu^2)T}
\sqrt{
\mu^2L (f(W_0)\!-\!f_\star)
\!+\!
(1\!-\!\mu)^2L^2\rho^2
\!+\!
(1\!-\!\mu)^2\frac{\sigma^2}{B}
}
\nonumber\\
&\quad\!+\!
\frac{\sqrt{r(1\!+\!\mu^2)}}{1\!-\!\mu^2}\!L\!\bigl(\mu\eta\!+\!(1\!-\!\mu)\rho\bigr)
\!+\!\!
\sqrt{\frac{r(1\!-\!\mu)}{1+\mu}}\!\left(\!\!
\frac{\sigma}{\sqrt B}\!+\!L\rho
\!\!\right)\!+\!
(1\!+\!\sqrt{r})L\rho
\!+\!
{L\eta}.
\end{align}
Moreover, if the hyperparameters are chosen as $\eta=\rho=T^{-3/4}$ and
$1-\mu=T^{-1/2}$,
then $\frac{1}{T}\sum_{t=0}^{T-1}\mathbb E\|\nabla f(W_t)\|_*
= \mathcal{O}(1/{T^\frac{1}{4}})$ and $\frac{1}{T}\sum_{t=0}^{T-1}\mathbb E\|\nabla f(W_t+\delta_{t+1})\|_*
=\mathcal{O}(1/{T^\frac{1}{4}}).$
\end{theorem}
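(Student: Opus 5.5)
The plan is the standard Muon descent analysis with operator/nuclear duality, with the perturbed query point treated as a biased gradient oracle.

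\textbf{Descent inequality.} By \cref{lipschiz}, the population gradient is $L$-smooth from $\|\cdot\|_{\mathrm{op}}$ to $\|\cdot\|_*$. Since $W_t=W_{t-1}-\eta O_t$ with $\|O_t\|_{\mathrm{op}}=1$, this gives
\[
f(W_t)\le f(W_{t-1})-\eta\langle \nabla f(W_{t-1}),O_t\rangle+\tfrac{L\eta^2}{2}.
\]
Because $O_t=\mathrm{Polar}(M_t)$, we have $\langle M_t,O_t\rangle=\|M_t\|_*$. Together with $|\langle X,O_t\rangle|\le\|X\|_*$, this yields
\[
-\langle \nabla f(W_{t-1}),O_t\rangle\le -\|\nabla f(W_{t-1})\|_*+2\|M_t-\nabla f(W_{t-1})\|_*.
\]
Telescoping and using \cref{ass:lower_bound} then gives
\[
\frac1T\sum_t\|\nabla f(W_{t-1})\|_*\le \frac{f(W_0)-f_\star}{\eta T}+\frac{2}{T}\sum_t\mathbb E\|E_t\|_*+\frac{L\eta}{2},
\]
where $E_t:=M_t-\nabla f(W_{t-1})$. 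Everything then reduces to bounding the averaged momentum error, and I will pass to Frobenius norms via $\|E_t\|_*\le\sqrt r\|E_t\|_F$.

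\textbf{Momentum error recursion.} Unrolling $M_t=\mu M_{t-1}+(1-\mu)g_t$ with $g_t=\nabla\ell(W_{t-1}+\delta_t;\mathcal B_t)$ gives
\[
E_t=\mu E_{t-1}+\mu\bigl(\nabla f(W_{t-2})-\nabla f(W_{t-1})\bigr)+(1-\mu)\bigl(g_t-\nabla f(W_{t-1})\bigr).
\]
I split the last increment into three pieces:
\begin{itemize}
\item \emph{Noise:} $\nabla\ell(W_{t-1};\mathcal B_t)-\nabla f(W_{t-1})$, which is conditionally mean zero with variance $\sigma^2/B$ by \cref{gradientvariance}.
\item \emph{Per-sample perturbation bias:} $\nabla\ell(W_{t-1}+\delta_t;\mathcal B_t)-\nabla\ell(W_{t-1};\mathcal B_t)$, which \cref{lipschiz} bounds almost surely by $L\|\delta_t\|_{\mathrm{op}}\le L\rho$ in nuclear norm, hence in Frobenius norm.
\item \emph{Drift:} the term $\mu(\nabla f(W_{t-2})-\nabla f(W_{t-1}))$ is bounded by $\mu L\eta$, since $\|W_{t-1}-W_{t-2}\|_{\mathrm{op}}=\eta$.
\end{itemize}
Unrolling the recursion sums geometric series. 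The deterministic pieces (drift and perturbation bias) contribute $\sum\mu^k\cdot(\mu L\eta+(1-\mu)L\rho)$. I use Cauchy--Schwarz with weights $\mu^{2k}$, which produces the $\sqrt{1+\mu^2}/(1-\mu^2)$ factors. The noise piece is a martingale sum, so its second moment is $(1-\mu)^2\sum\mu^{2k}\sigma^2/B=\frac{1-\mu}{1+\mu}\sigma^2/B$. This gives the $\sqrt{r(1-\mu)/(1+\mu)}\,\sigma/\sqrt B$ term, and the $L\rho$ sibling appears if I bound the per-sample bias inside the same square-root aggregation.

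\textbf{Initialization transient and the main obstacle.} Since $M_0=0$, we have $E_0=-\nabla f(W_0)$ (up to index shift), and $\|\nabla f(W_0)\|_F^2\lesssim L(f(W_0)-f_\star)$ by smoothness plus the lower bound. Its geometrically decaying contribution, $\sum_t\mu^t\|E_0\|$ combined with the first-step variance and bias, summed over $t$ and divided by $T$, produces the $\frac{\sqrt r}{(1-\mu^2)T}\sqrt{\cdots}$ term. The main obstacle is doing this bookkeeping carefully. The correlation between the noise in $\delta_t$ and in $g_t$ (same minibatch) must not break unbiasedness. That is exactly why I route the perturbation error through the almost-sure per-sample Lipschitz bound rather than through expectations. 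The extra $(1+\sqrt r)L\rho$ term arises when converting this bias between nuclear and Frobenius norms, and again in the perturbed-point statement below.

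\textbf{Rates and perturbed points.} Substituting $\eta=\rho=T^{-3/4}$ and $1-\mu=T^{-1/2}$ gives the following order for each term:
\begin{itemize}
\item $(f(W_0)-f_\star)/(\eta T)=T^{-1/4}$;
\item the transient is $O(T^{-1/2})$, since $1-\mu^2\asymp T^{-1/2}$;
\item the drift/bias term is $L(\mu\eta+(1-\mu)\rho)/(1-\mu^2)\asymp T^{-3/4}/T^{-1/2}=T^{-1/4}$;
\item the noise term is $\sqrt{1-\mu}=T^{-1/4}$;
\item the remaining terms are $O(T^{-3/4})$.
\end{itemize}
So the average is $\mathcal O(T^{-1/4})$. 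For the perturbed points, \cref{lipschiz} gives
\[
\|\nabla f(W_t+\delta_{t+1})\|_*\le\|\nabla f(W_t)\|_*+L\|\delta_{t+1}\|_{\mathrm{op}}\le\|\nabla f(W_t)\|_*+L\rho,
\]
and $L\rho=O(T^{-3/4})$, so the same rate transfers.
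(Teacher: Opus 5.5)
Your proof is correct, but it controls the momentum error by a different route than the paper. The paper's descent lemma measures the mismatch against the perturbed gradient, $M_t-\nabla f(W_{t-1}+\delta_t)$, and pays $2L\rho$ for switching points. It then bounds the clean-point error $C_t=M_t-\nabla f(W_{t-1})$ (your $E_t$) by a mean-square recursion. To handle the same-minibatch coupling, it splits $\nabla\ell(W_{t-1}+\delta_t;\mathcal B_t)-\nabla\ell(W_{t-1};\mathcal B_t)$ into two parts:
\begin{itemize}
\item its conditional mean given $\mathcal F_{t-1}$, absorbed together with the iterate drift into an $\mathcal F_{t-1}$-measurable term $\widetilde D_t$ with $\|\widetilde D_t\|_F\le L(\mu\eta+(1-\mu)\rho)$;
\item a centered part, merged with the clean noise.
\end{itemize}
Young's inequality with $\alpha=(1-\mu^2)/(2\mu^2)$ then gives $\mathbb E\|C_t\|_F^2\le\frac{1+\mu^2}{2}\mathbb E\|C_{t-1}\|_F^2+\cdots$. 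This is unrolled and converted back to the perturbed-point mismatch at a further cost of $2L^2\rho^2$.

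You instead keep $C_t$ in the descent step, unroll the linear recursion explicitly, and apply the triangle inequality pathwise. Since the perturbation difference is at most $L\rho$ almost surely, the coupling never has to be disentangled. Only the clean noise $\nabla\ell(W_{s-1};\mathcal B_s)-\nabla f(W_{s-1})$, a genuine martingale difference sequence, is treated in mean square. This gives $\mathbb E\|E_t\|_F\le\mu^t\|\nabla f(W_0)\|_F+\frac{\mu L\eta}{1-\mu}+L\rho+\sqrt{\frac{1-\mu}{1+\mu}}\frac{\sigma}{\sqrt B}$. Every resulting term is dominated, up to absolute constants, by a term of the stated bound. For instance, $\frac{\mu}{1-\mu}\le\frac{\sqrt2\,\mu\sqrt{1+\mu^2}}{1-\mu^2}$, and your perturbation contribution $2\sqrt r\,L\rho$ is covered by $(1+\sqrt r)L\rho$.

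Your route is therefore shorter and slightly tighter. The paper's mean-square recursion buys a template it reuses verbatim for \textsc{MoMuSAM*}. Its centering also lets the fluctuating part of the perturbation difference be averaged by the momentum, though this does not change the order, since an $\mathcal O(L\rho)$ term survives either way.

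Two remarks in your sketch should be dropped rather than repaired. First, the $\sqrt{1+\mu^2}/(1-\mu^2)$ factor does not come from any Cauchy--Schwarz step in your argument; the triangle inequality already yields $\mu L\eta/(1-\mu)$. Second, the uncentered perturbation difference cannot be placed inside the martingale square-root aggregation. Its conditional mean is nonzero, and it is correlated with the clean noise through $\mathcal B_t$. The paper obtains the $\sqrt{r(1-\mu)/(1+\mu)}\,L\rho$ term only after centering; for your proof that term is simply slack.
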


\begin{theorem}[Convergence of \textsc{MoMuSAM}]
\label{thm:momusam_main}
Suppose \cref{ass:lower_bound,lipschiz,gradientvariance} hold, and let \(\{W_t\}_{t=0}^{T-1}\) be generated by \textsc{MoMuSAM}. Then
\begin{align}
\frac{1}{T}\sum_{t=0}^{T-1} \mathbb E\|\nabla f(W_{t})\|_*
&\!\lesssim\!
\frac{f(W_0)\!-\!f_*}{\eta T}
\!+\!\frac{\sqrt r}{(1\!-\!\mu^2)T}\sqrt{
\mu^2L(f(W_0)\!-\!f_*)\!+\!
(1\!-\!\mu)^2\frac{\sigma^2}{B}}\notag\\
&\quad+
\frac{\sqrt{r(1\!+\!\mu^2)}}{1\!-\!\mu^2}\,L(\eta\!+\!\rho)\!+\!
\sqrt{\frac{r(1\!-\!\mu)}{1+\mu}}\;\frac{\sigma}{\sqrt B}
\!+\!L\rho
\!+\!{L\eta}.
\end{align}
Moreover, if the hyperparameters are chosen as $\eta=\rho=T^{-3/4}$ and
$1-\mu=T^{-1/2}$,
then $\frac{1}{T}\sum_{t=0}^{T-1}\mathbb E\|\nabla f(W_t)\|_*
=\mathcal{O}(1/{T^\frac{1}{4}})$ and $\frac{1}{T}\sum_{t=0}^{T-1}\mathbb E\|\nabla f(W_t+\delta_{t+1})\|_*
= \mathcal{O}(1/{T^\frac{1}{4}}).$
\end{theorem}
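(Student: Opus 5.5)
The plan follows the standard descent-lemma analysis for Muon in the operator/nuclear geometry, with the perturbation handled as an extra bias term. Throughout, write \(G_t:=\nabla\ell(W_{t-1}+\delta_t;\mathcal B_t)\) for the perturbed stochastic gradient entering \(M_t\); recall \(\delta_t=\rho M_{t-1}/\|M_{t-1}\|_F\), so \(\|\delta_t\|_F\le\rho\) and \(\|\delta_t\|_{\mathrm{op}}\le\rho\).

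\textbf{Step 1: one-step descent.} Since \(\|O_t\|_{\mathrm{op}}=1\), \cref{lipschiz} gives
\[
f(W_t)\le f(W_{t-1})-\eta\langle\nabla f(W_{t-1}),O_t\rangle+\tfrac{L\eta^2}{2}.
\]
Using \(\langle M_t,O_t\rangle=\|M_t\|_*\) and \(|\langle X,O_t\rangle|\le\|X\|_*\), this becomes
\[
f(W_t)\le f(W_{t-1})-\eta\|\nabla f(W_{t-1})\|_*+2\eta\|M_t-\nabla f(W_{t-1})\|_*+\tfrac{L\eta^2}{2}.
\]
Telescoping and applying \cref{ass:lower_bound} produces the terms \((f(W_0)-f_*)/(\eta T)\) and \(L\eta\). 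It then remains to bound the average of \(\mathbb E\|E_t\|_*\), where \(E_t:=M_t-\nabla f(W_{t-1})\).

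\textbf{Step 2: error recursion.} Write
\[
E_t=\mu E_{t-1}+\mu\bigl(\nabla f(W_{t-2})-\nabla f(W_{t-1})\bigr)+(1-\mu)\bigl(G_t-\nabla f(W_{t-1})\bigr).
\]
The middle drift term has nuclear norm at most \(L\eta\) by \cref{lipschiz}. Split the last term as
\[
G_t-\nabla f(W_{t-1})=\underbrace{\bigl(G_t-\nabla f(W_{t-1}+\delta_t)\bigr)}_{\zeta_t}+\bigl(\nabla f(W_{t-1}+\delta_t)-\nabla f(W_{t-1})\bigr).
\]
The second piece has nuclear norm at most \(L\rho\). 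The piece \(\zeta_t\) is a martingale difference, because \(\delta_t\) is \(\mathcal F_{t-1}\)-measurable; this is exactly where the momentum-based perturbation helps, and it is why no \(\rho\)-variance cross term appears, in contrast to MuSAM. Unroll the recursion:
\[
E_t=\mu^tE_0+\sum_s\mu^{t-s}\bigl[\mu\cdot\text{drift}_s+(1-\mu)\,\text{bias}_s\bigr]+(1-\mu)\sum_s\mu^{t-s}\zeta_s.
\]
Bound the deterministic drift and bias sums by the triangle inequality in the nuclear norm; the geometric sums give the \(\tfrac{\sqrt{r(1+\mu^2)}}{1-\mu^2}L(\eta+\rho)\) type term. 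For the noise sum, use \(\|\cdot\|_*\le\sqrt r\|\cdot\|_F\), Jensen, and orthogonality of martingale increments together with \cref{gradientvariance}:
\[
\sqrt{r}\,(1-\mu)\sqrt{\textstyle\sum_s\mu^{2(t-s)}\sigma^2/B}\;\le\;\sqrt{\tfrac{r(1-\mu)}{1+\mu}}\,\tfrac{\sigma}{\sqrt B}.
\]
The initialization term, with \(M_0=0\) so \(E_0=-\nabla f(W_0)\) up to indexing, is controlled by \(\|\nabla f(W_0)\|_F^2\le 2L(f(W_0)-f_*)\), which follows from smoothness and the lower bound. Its geometric sum over \(t\), divided by \(T\), gives the \(\tfrac{\sqrt r}{(1-\mu^2)T}\sqrt{\cdot}\) term.

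\textbf{Step 3: the perturbed point and the rate.} Use
\[
\|\nabla f(W_t+\delta_{t+1})\|_*\le\|\nabla f(W_t)\|_*+L\rho,
\]
which explains the extra \(L\rho\) term and transfers the guarantee to the perturbed query points. Then substitute \(\eta=\rho=T^{-3/4}\) and \(1-\mu=T^{-1/2}\), so that \(1-\mu^2\asymp T^{-1/2}\). Each term is then \(\mathcal O(T^{-1/4})\):
\begin{itemize}
\item \(1/(\eta T)=T^{-1/4}\);
\item \(L(\eta+\rho)/(1-\mu^2)=T^{-1/4}\);
\item \(\sqrt{1-\mu}=T^{-1/4}\);
\item \(T^{-1}/(1-\mu^2)=T^{-1/2}\);
\item \(\rho,\eta=T^{-3/4}\).
\end{itemize}

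\textbf{Main obstacle.} The delicate step is the noise term in Step 2. The stochastic errors \(\zeta_s\) are evaluated at trajectory-dependent points, so I must confirm that \(\mathbb E[\zeta_s\mid\mathcal F_{s-1}]=0\), which holds because \(W_{s-1}\) and \(\delta_s\) are \(\mathcal F_{s-1}\)-measurable. Only then can the cross terms be dropped, and the Frobenius-to-nuclear conversion must be applied only after the square-root (Jensen) step, to keep a single \(\sqrt r\) factor.
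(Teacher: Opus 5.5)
Your proof is correct, but it is organized differently from the paper's.

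\textbf{The paper's route.} It measures the momentum error against the \emph{perturbed} gradient, $E_t=M_t-\nabla f(W_{t-1}+\delta_t)$. Its descent lemma therefore carries an extra $2\eta L\rho$. Its drift term $\nabla f(W_{t-2}+\delta_{t-1})-\nabla f(W_{t-1}+\delta_t)$ has to be bounded by $L(\eta+2\rho)$, because consecutive normalized-momentum perturbations can differ by up to $2\rho$. It then runs a conditional second-moment recursion with Young's inequality ($\alpha=(1-\mu^2)/(2\mu^2)$, contraction factor $(1+\mu^2)/2$), unrolls it, and passes to the nuclear norm with a single $\sqrt r$ via Jensen.

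\textbf{Your route.} You measure the error against the \emph{clean} gradient, as the paper itself does with $C_t$ for \textsc{MuSAM}. You unroll the linear recursion, bound the drift and perturbation-bias sums pathwise in the nuclear norm, and reserve the Frobenius norm and martingale orthogonality for the noise sum alone. The key measurability fact, that $\delta_t$ is $\mathcal F_{t-1}$-measurable, is used correctly.

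\textbf{What each buys.} Your route is more elementary and actually sharper. Your bias contribution is $\mu L\eta/(1-\mu)+L\rho$ with no $\sqrt r$, whereas the paper's route yields $\sqrt{2r}\,\mu\sqrt{1+\mu^2}\,L(\eta+2\rho)/(1-\mu^2)$. In particular, your $\rho$-term is not inflated by $1/(1-\mu)$: in your decomposition the perturbation enters each step with weight $1-\mu$, so its geometric sum is at most $L\rho$. Under $\eta=\rho=T^{-3/4}$ and $1-\mu=T^{-1/2}$, both routes give $\mathcal O(T^{-1/4})$. Your bound, however, would still give $\mathcal O(T^{-1/4})+\mathcal O(L\rho)$ for a radius that does not shrink with $1-\mu$, while the paper's would blow up. The paper's second-moment template buys uniformity with its \textsc{MuSAM} proof and a mean-square tracking bound, at the price of the extra $\sqrt r$ and $1/(1-\mu^2)$ factors.

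\textbf{Two cosmetic points.} First, the drift and bias terms are trajectory-dependent rather than deterministic; you only use their almost-sure bounds, so this is harmless. Second, unrolling from $E_0=-\nabla f(W_0)$ needs the convention $W_{-1}:=W_0$. This is consistent, because $\delta_1=0$ makes the first drift and bias terms vanish.
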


\begin{remark}[Same asymptotic rate, different algorithmic role]
\label{rem:same_rate}
\cref{thm:musam_main,thm:momusam_main} show that the momentum-guided refinement in \textsc{MoMuSAM} does \emph{not} pay an optimization penalty in terms of asymptotic first-order stationarity: both \textsc{MuSAM} and \textsc{MoMuSAM} attain the same order at the clean iterate \(W_t\) and at the perturbed iterate \(W_t+\delta_{t+1}\). Thus, replacing the raw-gradient perturbation by a momentum-guided one is not a heuristic shortcut that sacrifices convergence guarantees.
\end{remark}

\begin{remark}[Where \textsc{MoMuSAM} improves]
\label{rem:momusam_advantage}
While \textsc{MuSAM} and \textsc{MoMuSAM} are of the same asymptotic order, \textsc{MoMuSAM} enjoys two additional advantages. First, it removes the extra clean-gradient evaluation required to construct the perturbation in \textsc{MuSAM}, thereby reducing the per-iteration gradient cost. Second, as shown in \cref{thm:linearization_stability}, it uses a more stable, temporally aggregated slope proxy for stochastic linearization. Hence, \textsc{MoMuSAM} preserves the same stationarity guarantee as \textsc{MuSAM} while offering a more efficient and statistically more stable perturbation mechanism.
\end{remark}

\paragraph{Stability of stochastic linearization.}
To evaluate the perturbation actually induced by a slope proxy, we use a
direction-sensitive population linearization criterion. For every nonzero
\(V\in\mathbb R^{m\times n}\), let
\(\mathsf n(V):=V/\|V\|_F\), and set \(\mathsf n(0):=0\).

\begin{definition}[Directional linearization quality]
\label{def:delta_stability}
Let \(g_t:=\nabla f(W_{t-1})\). Define
\begin{equation}
L_t(V)
:=
f(W_{t-1})
+
\rho\langle g_t,\mathsf n(V)\rangle,
\qquad
\mathcal E_t(V)
:=
L_t(g_t)-L_t(V).
\label{eq:directional_linearization_gap}
\end{equation}
A slope proxy \(V_t\) is said to be \(\Delta_t\)-stable if
\(
\mathbb E[\mathcal E_t(V_t)]\le\Delta_t.
\)
\end{definition}

The gap \(\mathcal E_t(V)\) measures the loss in population
linearization caused by using the perturbation direction induced by \(V\)
instead of the population-gradient direction. In particular,
\(\mathcal E_t(V)\ge0\) and
\begin{equation}
\mathcal E_t(V)
\le
2\rho\|V-\nabla f(W_{t-1})\|_F.
\label{eq:directional_gap_basic}
\end{equation}

\begin{theorem}[Stability of stochastic linearization]
\label{thm:linearization_stability}
Suppose
\cref{ass:lower_bound,lipschiz,gradientvariance}
hold.

\paragraph{(i) \textsc{MuSAM}.}
For the raw-gradient perturbation used by \textsc{MuSAM},
\begin{equation}
\mathbb E\!\left[
\mathcal E_t\!\left(
\nabla\ell(W_{t-1};\mathcal B_t)
\right)
\right]
\le
\frac{2\rho\sigma}{\sqrt B}.
\label{eq:musam_stability}
\end{equation}

\paragraph{(ii) \textsc{MoMuSAM}.}
For every \(t\ge2\), the momentum-guided perturbation used by
\textsc{MoMuSAM} satisfies
\begin{equation}
\mathbb E\!\left[
\mathcal E_t(M_{t-1})
\right]
\le
2\rho
\left[
L\left(
\rho+\frac{\eta}{1-\mu}
\right)
+
\sqrt{
\frac{1-\mu}{1+\mu}
\frac{1+\mu^{t-1}}{1-\mu^{t-1}}
}
\frac{\sigma}{\sqrt B}
\right].
\label{eq:momusam_stability}
\end{equation}

Moreover, if
\(
\eta=\rho=T^{-3/4}
\)
and
\(
1-\mu=T^{-1/2},
\)
then
\begin{align}
\frac{1}{T}
\sum_{t=1}^{T}
\mathbb E\!\left[
\mathcal E_t\!\left(
\nabla\ell(W_{t-1};\mathcal B_t)
\right)
\right]
=
\mathcal O\!\left(
\frac{\sigma}{\sqrt B\,T^{3/4}}
\right),
\frac{1}{T}
\sum_{t=1}^{T}
\mathbb E\!\left[
\mathcal E_t(M_{t-1})
\right]
=
\mathcal O\!\left(
\frac{L+\sigma/\sqrt B}{T}
\right).
\label{eq:momusam_stability_rate}
\end{align}
\end{theorem}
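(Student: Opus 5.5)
The plan is to reduce both parts to the basic inequality \eqref{eq:directional_gap_basic}, $\mathcal E_t(V)\le 2\rho\|V-\nabla f(W_{t-1})\|_F$. Then it remains to bound $\mathbb E\|V_t-g_t\|_F$ for each slope proxy and sum over $t$.

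\textbf{Part (i).} The query point $W_{t-1}$ is $\mathcal F_{t-1}$-measurable. Jensen's inequality and \cref{gradientvariance} then give
\[
\mathbb E\|\nabla\ell(W_{t-1};\mathcal B_t)-g_t\|_F\le\sqrt{\mathbb E\|\cdot\|_F^2}\le \sigma/\sqrt B .
\]
This yields \eqref{eq:musam_stability}. With $\rho=T^{-3/4}$ it gives the stated average rate immediately.

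\textbf{Part (ii).} The key observation is that $\mathsf n(\cdot)$ is scale invariant, so $\mathcal E_t(M_{t-1})=\mathcal E_t(\widehat M_{t-1})$ with $\widehat M_{t-1}:=M_{t-1}/(1-\mu^{t-1})$.

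\emph{Unrolling.} By \eqref{eq:momusam_ema}, $\widehat M_{t-1}=\sum_{s=1}^{t-1}w_sG_s$, where $w_s=(1-\mu)\mu^{t-1-s}/(1-\mu^{t-1})$ sums to one and $G_s=\nabla\ell(Y_s;\mathcal B_s)$ with $Y_s=W_{s-1}+\delta_s$. I split
\[
\widehat M_{t-1}-g_t=\sum_s w_s\bigl(G_s-\nabla f(Y_s)\bigr)+\sum_s w_s\bigl(\nabla f(Y_s)-\nabla f(W_{t-1})\bigr).
\]

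\emph{Tracking term.} I use $\|\cdot\|_F\le\|\cdot\|_*$ together with \cref{lipschiz}. This gives $\|\nabla f(Y_s)-\nabla f(W_{t-1})\|_F\le L\|Y_s-W_{t-1}\|_{\mathrm{op}}$. Next, $\|\delta_s\|_{\mathrm{op}}\le\|\delta_s\|_F=\rho$ and $\|O_k\|_{\mathrm{op}}=1$, so $\|Y_s-W_{t-1}\|_{\mathrm{op}}\le \rho+\eta(t-s)$. Finally, $\sum_s w_s(t-s)\le 1+\mu/(1-\mu)=1/(1-\mu)$. Together these give $L(\rho+\eta/(1-\mu))$.

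\emph{Noise term.} Each $Y_s$ is $\mathcal F_{s-1}$-measurable, so the summands $\xi_s=G_s-\nabla f(Y_s)$ form a martingale difference sequence and the cross terms vanish in $\mathbb E\|\sum_s w_s\xi_s\|_F^2$. Hence this second moment is at most $\frac{\sigma^2}{B}\sum_s w_s^2$. A geometric-sum computation gives
\[
\sum_s w_s^2=\frac{(1-\mu)^2(1-\mu^{2(t-1)})}{(1-\mu^2)(1-\mu^{t-1})^2}=\frac{1-\mu}{1+\mu}\,\frac{1+\mu^{t-1}}{1-\mu^{t-1}} .
\]
Jensen's inequality then produces exactly \eqref{eq:momusam_stability}.

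\textbf{Rates.} With $\eta=\rho=T^{-3/4}$ and $1-\mu=T^{-1/2}$:
\begin{itemize}
\item The term $2\rho L\rho$ is $O(T^{-3/2})$.
\item The term $2\rho L\eta/(1-\mu)=2L\,T^{-3/2}T^{1/2}$ is $O(T^{-1})$.
\end{itemize}
The main obstacle is averaging the noise factor, which blows up for small $t$ because $1-\mu^{t-1}$ is tiny there. Write $k=t-1$. I plan to use the elementary bound $1-\mu^{k}\ge (1-e^{-1})\min\{k(1-\mu),1\}$, which follows from $\mu^k\le e^{-k(1-\mu)}$. Since $1+\mu^{k}\le 2$, the noise factor is then
\[
\sqrt{\frac{1-\mu}{1+\mu}\,\frac{1+\mu^{k}}{1-\mu^{k}}}\lesssim \max\Bigl\{\sqrt{1-\mu},\,\tfrac{1}{\sqrt{k}}\Bigr\}.
\]
Averaging over $t\le T$ gives $O(\sqrt{1-\mu}+T^{-1/2})=O(T^{-1/4})$. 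Multiplying by $2\rho\sigma/\sqrt B$ yields $O\bigl(\sigma/(\sqrt B\,T)\bigr)$.

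The boundary term $t=1$ (where $M_0=0$, so $\mathcal E_1=\rho\|\nabla f(W_0)\|_F$ by the convention $\mathsf n(0)=0$) is not covered by \eqref{eq:momusam_stability}, which holds only for $t\ge2$. I handle it separately: it contributes $\rho\|\nabla f(W_0)\|_F/T=O(T^{-7/4})$, a deterministic constant times a negligible factor. Combining all pieces gives \eqref{eq:momusam_stability_rate}.
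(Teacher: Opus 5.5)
Your proposal is correct and follows the paper's proof essentially step for step: the reduction to $2\rho\|V-\nabla f(W_{t-1})\|_F$, the scale-invariant bias-corrected momentum, and the tracking/martingale-noise split all match. The same holds for the weighted-delay bound $\le 1/(1-\mu)$, the identity $\sum_s w_s^2=\frac{1-\mu}{1+\mu}\frac{1+\mu^{t-1}}{1-\mu^{t-1}}$, and the separate $t=1$ term. The differences are only cosmetic: you control $1-\mu^{k}$ via $\mu^k\le e^{-k(1-\mu)}$ rather than the paper's Bernoulli bound $\Gamma_{k+1}(\mu)\le 2(1-\mu)+2/k$, and you keep $\|\nabla f(W_0)\|_F$ as a constant where the paper bounds it by $\sqrt{2L(f(W_0)-f_\star)}$.
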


\begin{remark}[Sharper certified stochastic stability]
\label{rem:momusam_stability_advantage}
In the stochastic regime with minibatch noise,
\cref{thm:linearization_stability} establishes a sharper certified
asymptotic upper-bound order for the stochastic-linearization error
of \textsc{MoMuSAM}. Under the same horizon-dependent
hyperparameters, the single-batch perturbation of \textsc{MuSAM}
admits an average stability bound of order
\(\mathcal O(T^{-3/4})\), whereas the temporally aggregated
perturbation of \textsc{MoMuSAM} improves this order to
\(\mathcal O(T^{-1})\). The improvement follows from averaging
stochastic gradient noise across iterations, while the accompanying
momentum-tracking error remains controlled at the same
\(\mathcal O(T^{-1})\) scale. Thus, the advantage is obtained for the
complete stability bound, rather than only for an isolated variance
term.
\end{remark}

\begin{remark}[Sharper stability without an optimization penalty]
\label{rem:momusam_theoretical_picture}
Taken together,
\cref{thm:momusam_main,thm:linearization_stability}
show that \textsc{MoMuSAM} preserves the
\(\mathcal O(T^{-1/4})\) first-order stationarity order of
\textsc{MuSAM}, while improving the certified average
stochastic-linearization stability from
\(\mathcal O(T^{-3/4})\) to \(\mathcal O(T^{-1})\).
Moreover, \textsc{MoMuSAM} achieves this sharper stability guarantee
using only one gradient evaluation per iteration, thereby retaining
the per-iteration gradient cost of vanilla Muon.
\end{remark}

\section{Experiments}
\label{sec:experiments}

We evaluate the proposed \textsc{MuSAM} and \textsc{MoMuSAM} against a diverse set of baselines spanning both sharpness-aware and matrix-aware optimization. On the sharpness-aware side, we compare with standard SAM~\citep{foret2020sharpness}, MSAM~\citep{becker2024momentum}, and VASSO~\citep{li2025vasso}, which represent the canonical SAM formulation, a momentum-guided sharpness-aware variant, and a variance-suppressed stabilization approach, respectively. On the matrix-aware side, we compare with Muon~\citep{Muon2024}, the most relevant baseline for assessing whether explicit sharpness control can enhance matrix-aware orthogonalized optimization.
Our experiments cover both image classification and language modeling tasks. For image classification, we evaluate on Fashion-MNIST (FMNIST)~\citep{xiao2017fashion} using a multilayer perceptron (MLP), and on CIFAR-10 and CIFAR-100~\citep{krizhevsky2009learning} using CifarNet and ResNet-18, respectively. For language modeling, we adopt the NanoGPT framework~\citep{karpathy2022nanoGPT} to reproduce a 124M-parameter GPT-2 model~\citep{radford2019language} trained on OpenWebText~\citep{Gokaslan2019OpenWeb}.

In \cref{fig:accuracy4algorithm}, we compare the convergence performance of different optimizers across FMNIST, CIFAR-10, CIFAR-100, and OpenWebText. The results show that introducing sharpness awareness into Muon consistently improves the final performance, with \textsc{MuSAM} already outperforming the baselines in most cases and \textsc{MoMuSAM} delivering the strongest overall results. This trend can be understood from the complementary roles of the two components: Muon improves the \emph{update geometry} through matrix-aware momentum orthogonalization, whereas sharpness-aware perturbation improves the \emph{local loss geometry} explored during training by biasing optimization toward regions with smaller neighborhood worst-case loss. The further gain of \textsc{MoMuSAM} over \textsc{MuSAM} shows that the quality of the perturbation signal is also crucial. \textsc{MuSAM} builds its perturbation from a single minibatch gradient, so it remains sensitive to stochastic noise and batch-local fluctuations. \textsc{MoMuSAM}, in contrast, uses the momentum state, which aggregates information across iterations and minibatches. This makes the perturbation direction smoother, more stable, and more informed by Muon's accumulated optimization history, leading to a consistently better tradeoff between convergence quality, generalization, and computational efficiency.

\begin{figure*}[t]
\centering
\begin{minipage}[b]{0.2448\textwidth}
    \includegraphics[width=\textwidth]{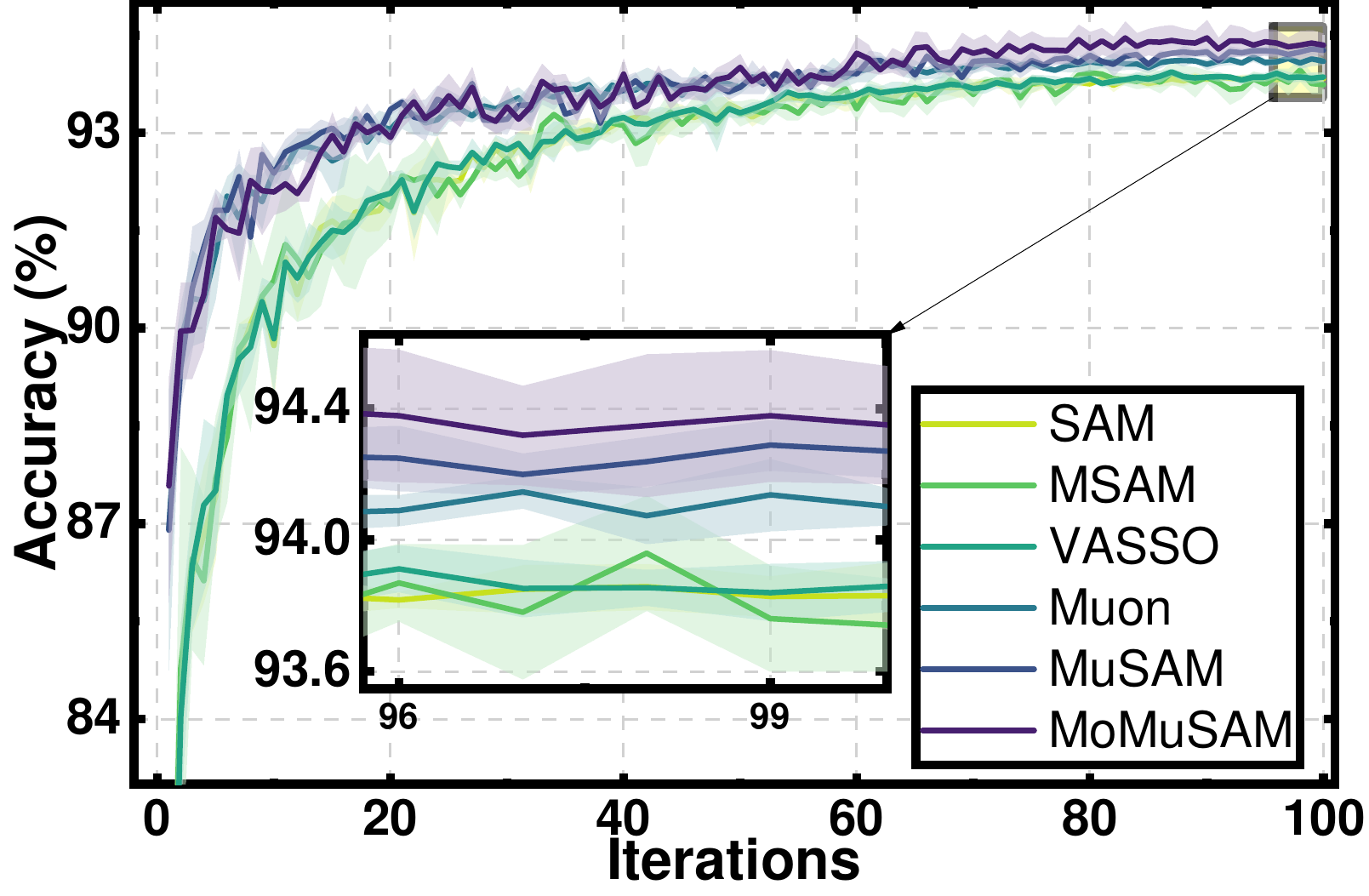}
    \par\vspace{0.1cm}
    \makebox[\textwidth]{{(a) FMNIST}}
\end{minipage}
\begin{minipage}[b]{0.2448\textwidth}
    \includegraphics[width=\textwidth]{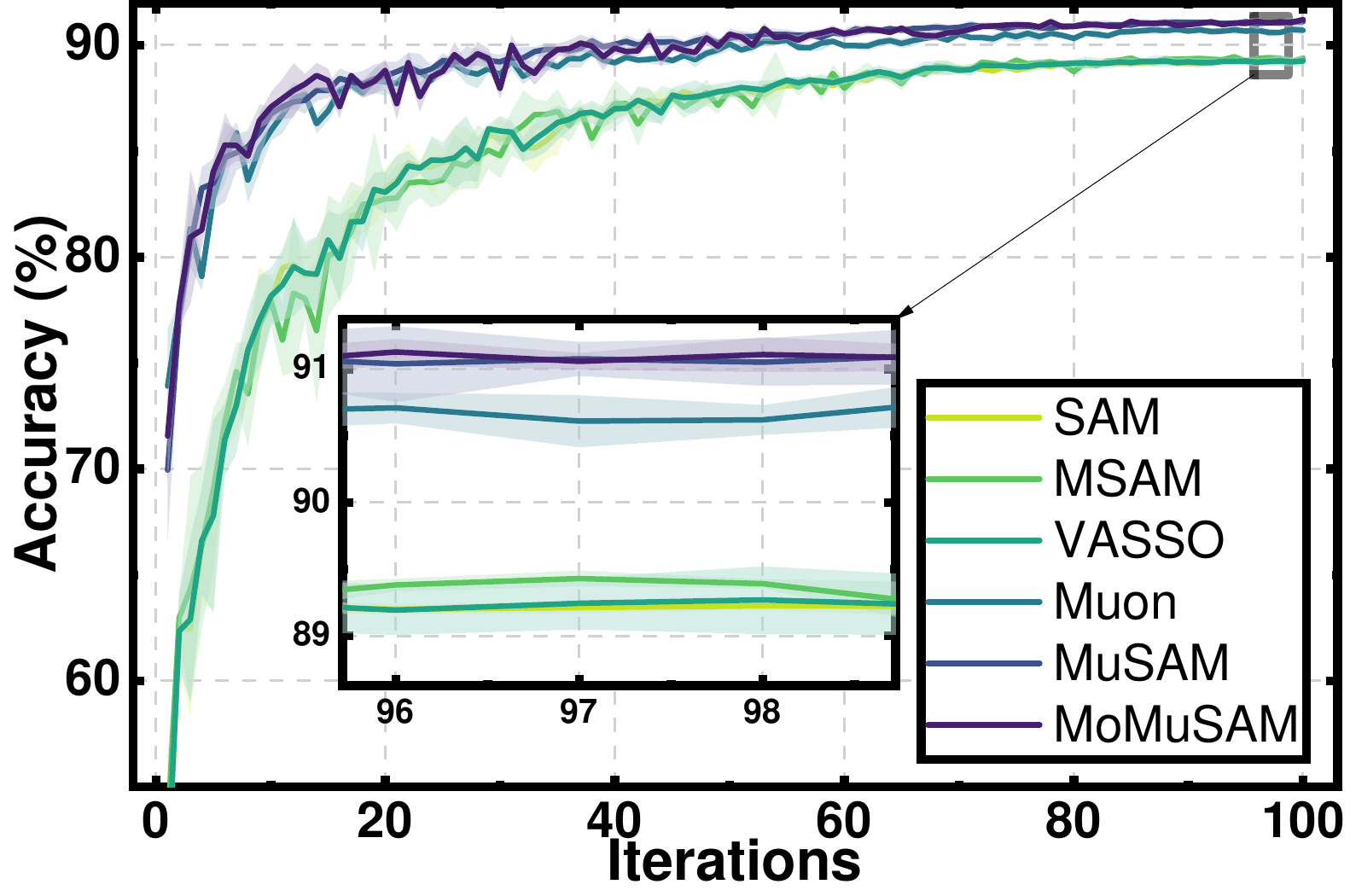}
    \par\vspace{0.1cm}
    \makebox[\textwidth]{{(b) CIFAR-10}}
\end{minipage}
\begin{minipage}[b]{0.2448\textwidth}
    \includegraphics[width=\textwidth]{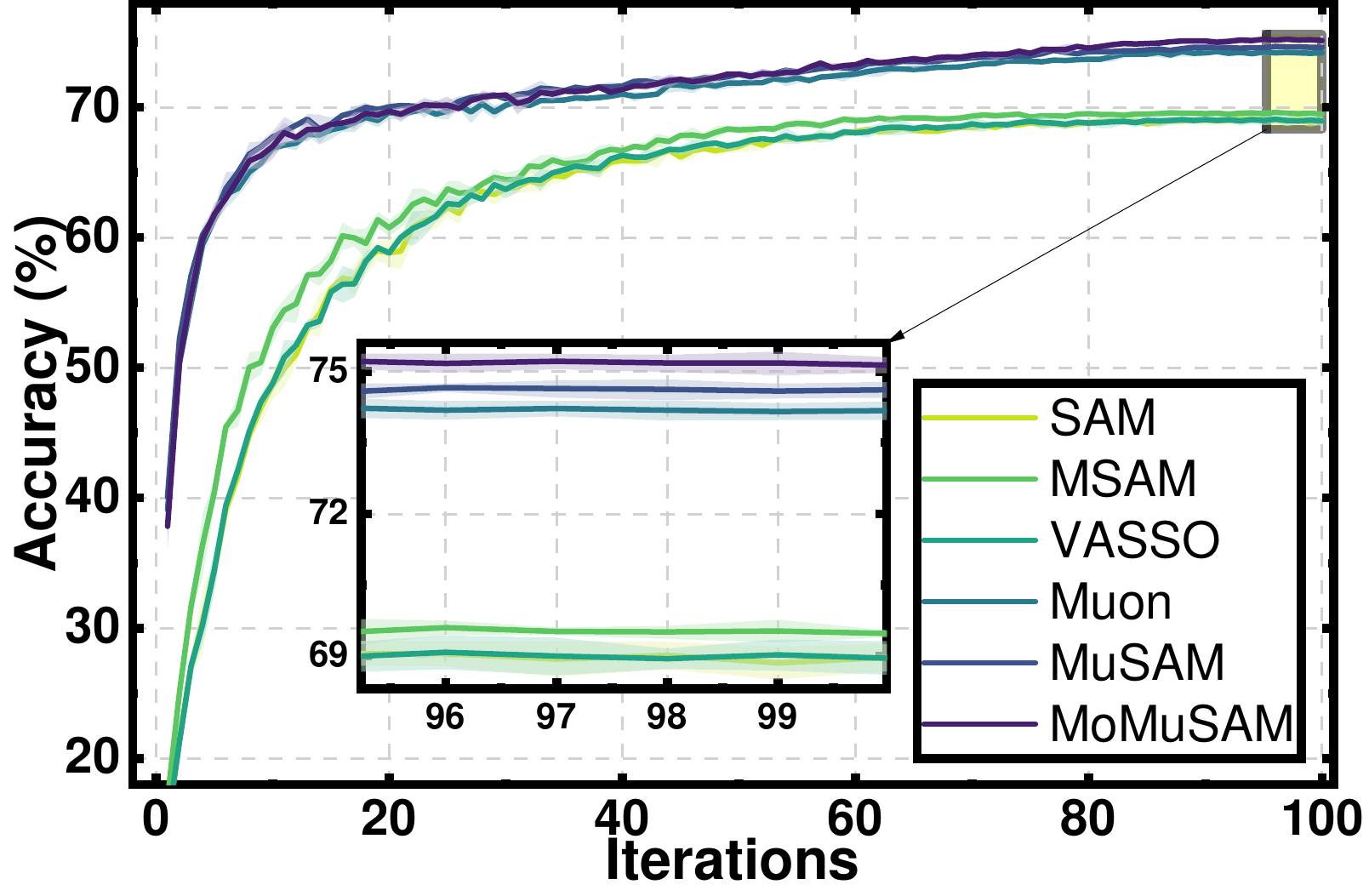}
    \par\vspace{0.1cm}
    \makebox[\textwidth]{{(c) CIFAR-100}}
\end{minipage}
\begin{minipage}[b]{0.2448\textwidth}
    \includegraphics[width=\textwidth]{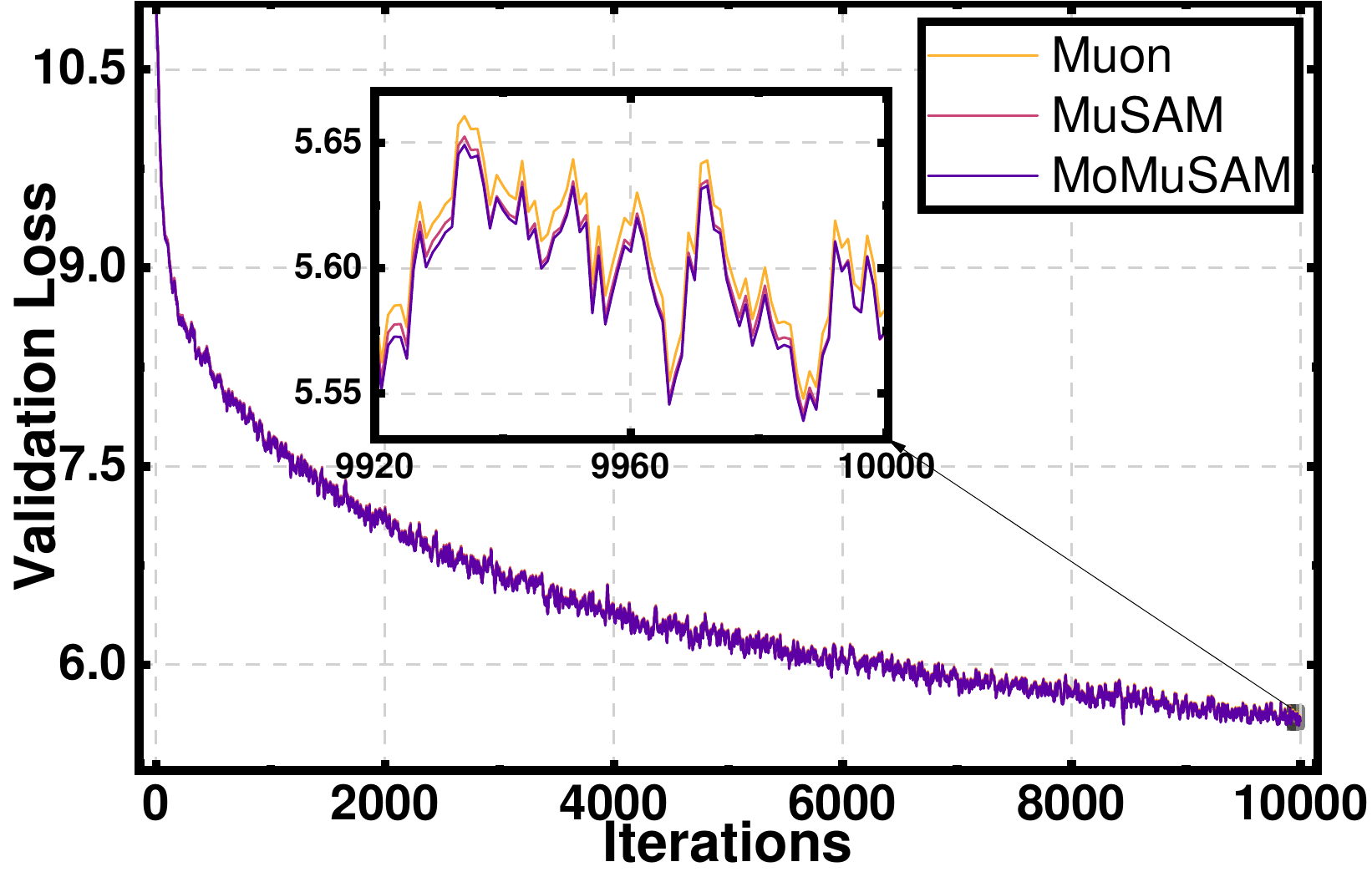}
    \par\vspace{0.1cm}
    \makebox[\textwidth]{{(d) OpenWebText}}
\end{minipage}
\caption{Performance comparison of different baselines across multiple datasets, showing test accuracy on FMNIST, CIFAR-10, and CIFAR-100, as well as validation loss on OpenWebText.}
\label{fig:accuracy4algorithm}
\end{figure*}

\begin{figure*}[t]
\centering
\begin{minipage}[t]{0.496\textwidth}
    \centering
    \captionsetup{type=figure}
    \setcounter{subfigure}{0} 
    \begin{subfigure}[t]{0.493\linewidth}
        \centering
        \includegraphics[width=\linewidth]{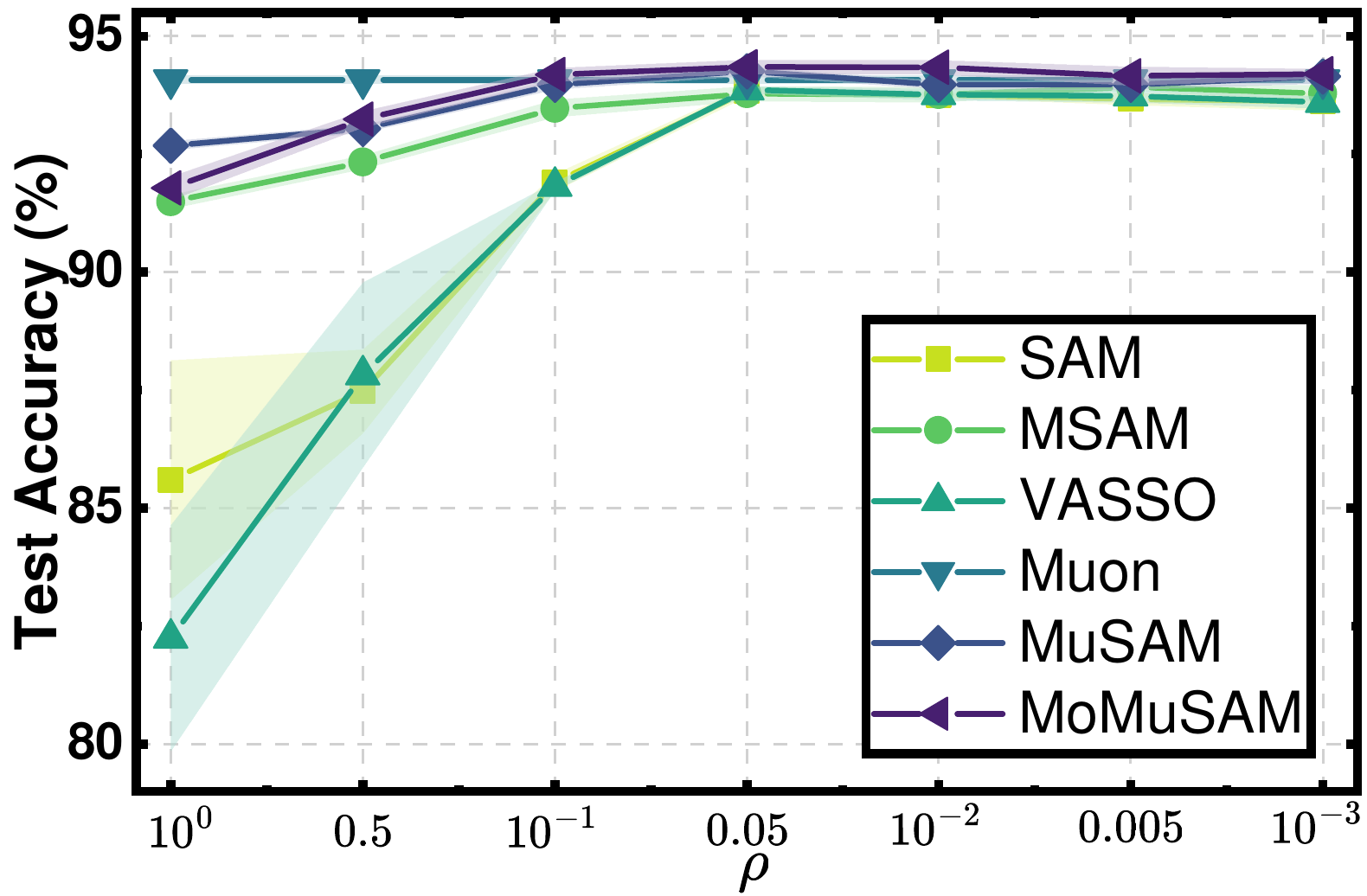}
        \caption{FMNIST}
        \label{fig:left-a}
    \end{subfigure}
    \hfill
    \begin{subfigure}[t]{0.493\linewidth}
        \centering
        \includegraphics[width=\linewidth]{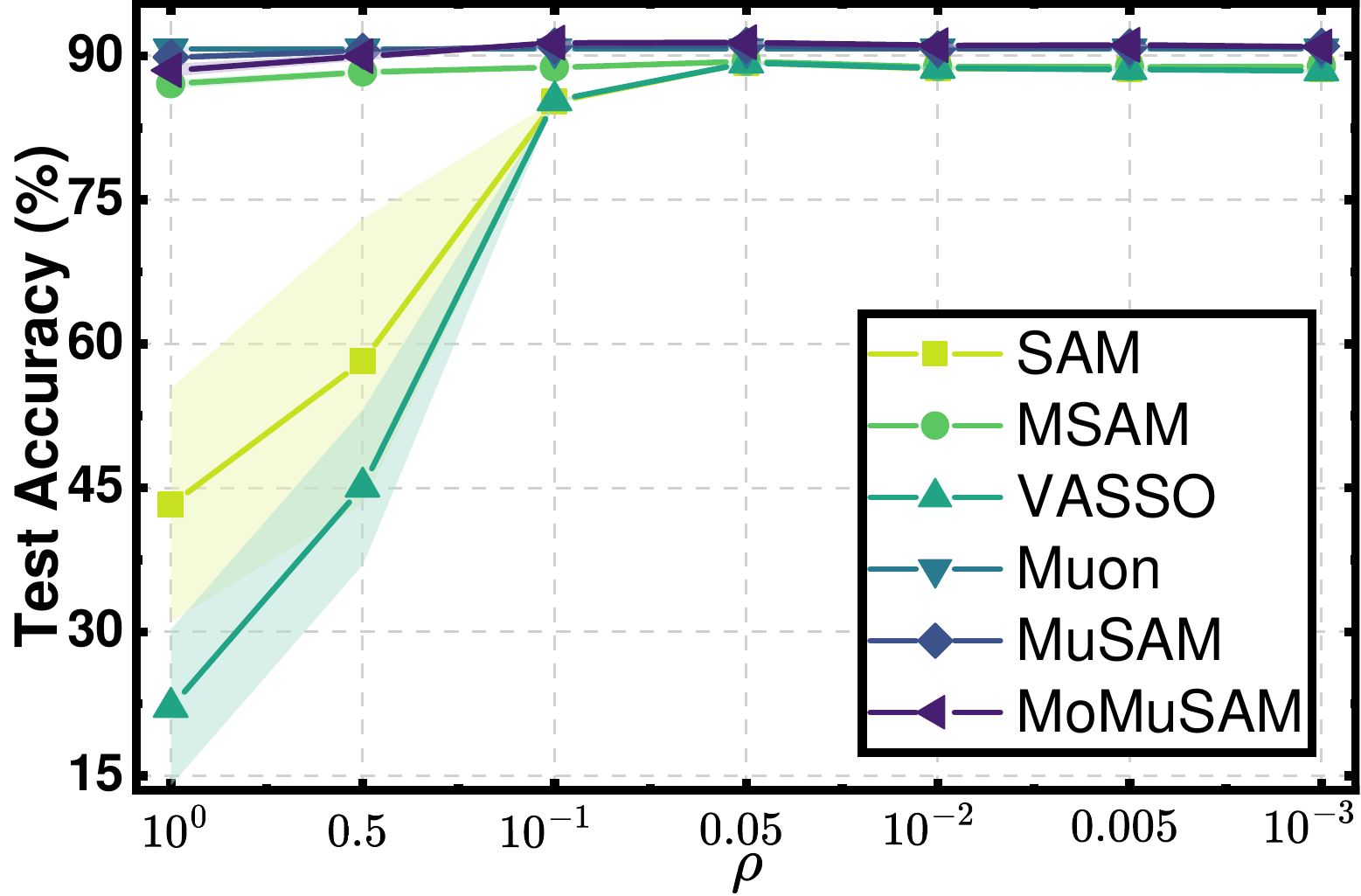}
        \caption{CIFAR-10}
        \label{fig:left-b}
    \end{subfigure}
    \caption{Sensitivity analysis on $\rho$.}
    \label{fig:rho}
\end{minipage}
\hfill
\begin{minipage}[t]{0.496\textwidth}
    \centering
    \captionsetup{type=figure}
    \setcounter{subfigure}{0} 

    \begin{subfigure}[t]{0.493\linewidth}
        \centering
        \includegraphics[width=\linewidth]{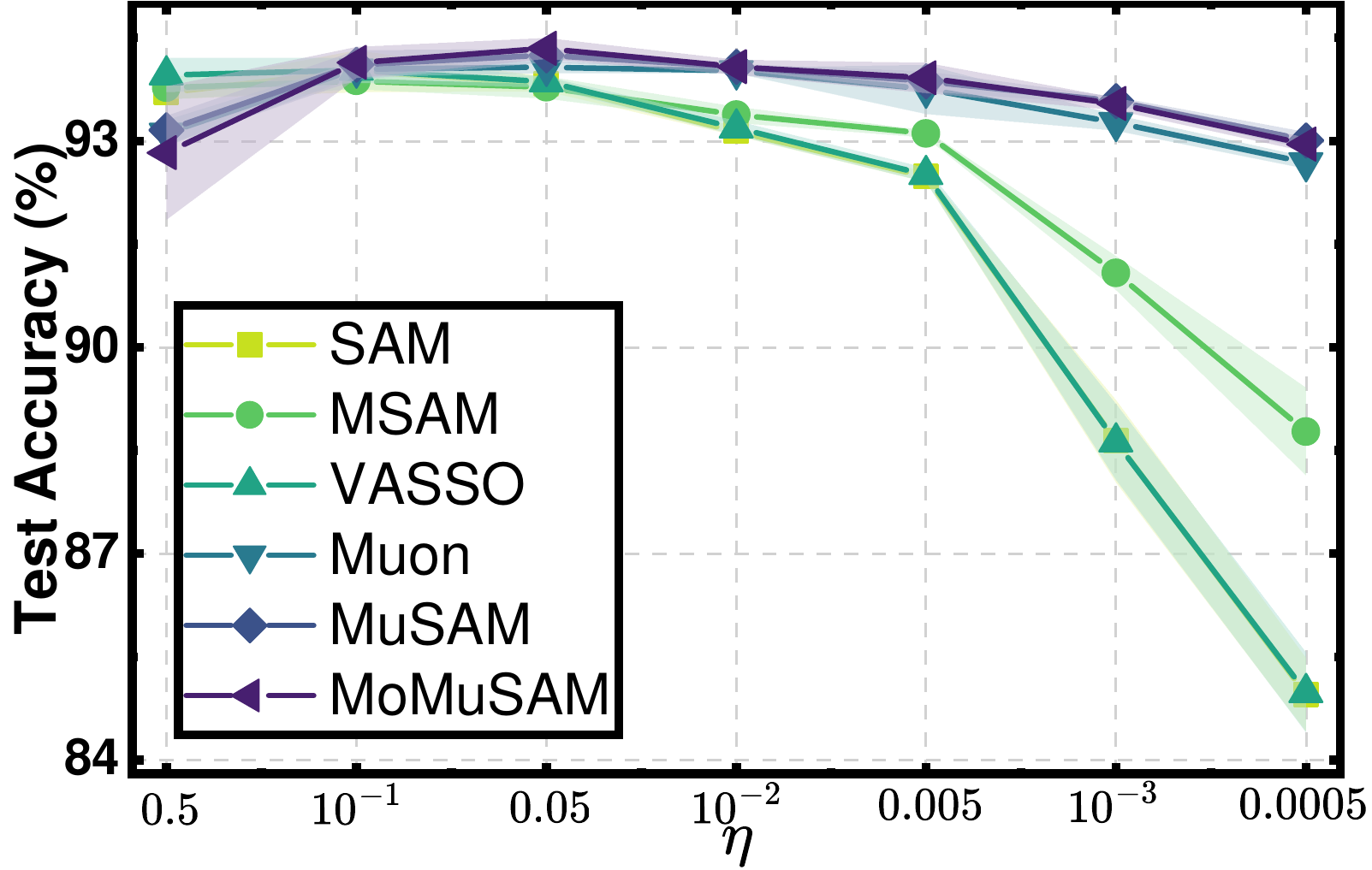}
        \caption{FMNIST}
        \label{fig:right-a}
    \end{subfigure}
    \hfill
    \begin{subfigure}[t]{0.493\linewidth}
        \centering
        \includegraphics[width=\linewidth]{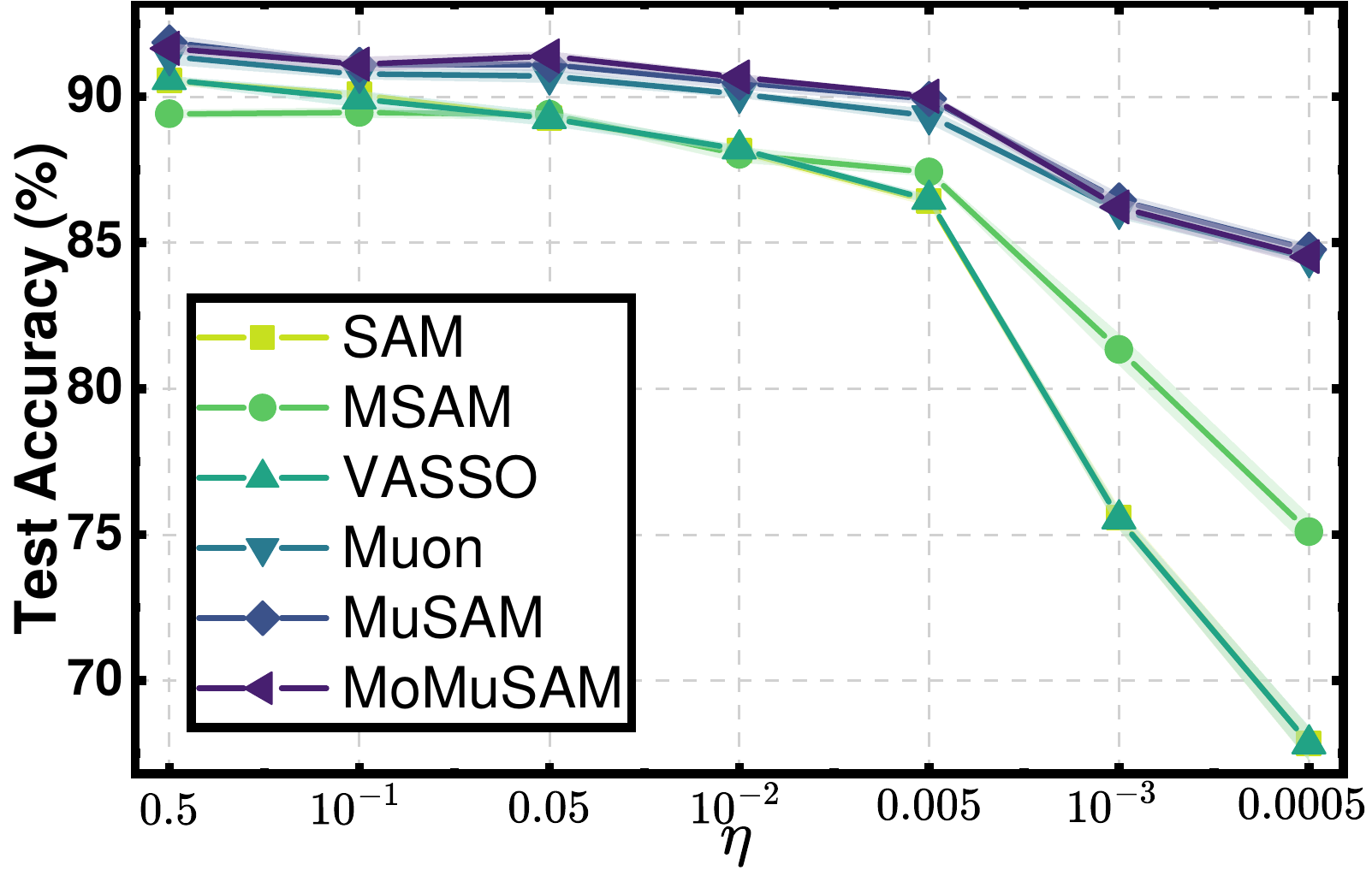}
        \caption{CIFAR-10}
        \label{fig:right-b}
    \end{subfigure}
    \caption{Sensitivity analysis on $\eta$.}
    \label{fig:stepsize}
\end{minipage}
\end{figure*}

\cref{fig:rho,fig:stepsize} examine the sensitivity of different optimizers to the perturbation radius \(\rho\) and learning rate \(\eta\). The results reveal a clear robustness advantage for our proposed methods. With respect to \(\rho\), \textsc{MuSAM} and \textsc{MoMuSAM} remain competitive over a substantially wider range of perturbation magnitudes than the SAM-style baselines. On both FMNIST and CIFAR-10, the performance of SAM, MSAM, and VASSO deteriorates rapidly when \(\rho\) becomes too large, suggesting that their perturbations can easily become overly aggressive and destabilize training. By contrast, \textsc{MuSAM} and \textsc{MoMuSAM} maintain strong accuracy across a much broader range of \(\rho\), with \textsc{MoMuSAM} consistently achieving the best or near-best performance. This indicates that the matrix-aware Muon update already provides a stabilizing optimization geometry, and that our sharpness-aware methods preserve this robustness while further enhancing generalization. A similar pattern is observed for the learning rate in \cref{fig:stepsize}. As \(\eta\) decreases, SAM, MSAM, and VASSO suffer much more pronounced degradation, whereas \textsc{MuSAM} and \textsc{MoMuSAM} remain comparatively stable and continue to deliver the strongest final accuracy over a broad range of stepsizes. This robustness is especially meaningful because it shows that our gains do not rely on fragile hyperparameter tuning. Overall, these sensitivity results show that the proposed methods are not only stronger in peak accuracy, but also substantially easier to tune in practice. Additional experimental results and setups are provided in \cref{experimentsec}.

\section{Conclusion}
In this paper, we introduced the first sharpness-aware formulation of Muon and established a new optimizer family that unifies Muon's matrix-aware orthogonalized updates with explicit local geometry control. Our proposed \textsc{MuSAM} incorporates explicit local geometry control into Muon's matrix-aware updates, while \textsc{MoMuSAM} further turns momentum into a sharpness-probing signal, removing the extra clean-gradient computation and stabilizing the perturbation direction. Theoretically, both methods attain the same asymptotic first-order stationarity order, while \textsc{MoMuSAM} yields a more stable stochastic linearization. Empirically, sharpness awareness consistently strengthens Muon, and \textsc{MoMuSAM} delivers a markedly better generalization--efficiency tradeoff.

\printbibliography

@article{hochreiter1994simplifying,
  title={Simplifying neural nets by discovering flat minima},
  author={Hochreiter, Sepp and Schmidhuber, J{\"u}rgen},
  journal={Advances in neural information processing systems},
  volume={7},
  year={1994}
}

@article{keskar2016large,
  title={On large-batch training for deep learning: Generalization gap and sharp minima},
  author={Keskar, Nitish Shirish and Mudigere, Dheevatsa and Nocedal, Jorge and Smelyanskiy, Mikhail and Tang, Ping Tak Peter},
  journal={arXiv preprint arXiv:1609.04836},
  year={2016}
}

@article{foret2020sharpness,
  title={Sharpness-aware minimization for efficiently improving generalization},
  author={Foret, Pierre and Kleiner, Ariel and Mobahi, Hossein and Neyshabur, Behnam},
  journal={arXiv preprint arXiv:2010.01412},
  year={2020}
}

@inproceedings{andriushchenko2022towards,
  title={Towards understanding sharpness-aware minimization},
  author={Andriushchenko, Maksym and Flammarion, Nicolas},
  booktitle={International conference on machine learning},
  pages={639--668},
  year={2022},
  organization={PMLR}
}

@article{wen2022does,
  title={How does sharpness-aware minimization minimize sharpness?},
  author={Wen, Kaiyue and Ma, Tengyu and Li, Zhiyuan},
  journal={arXiv preprint arXiv:2211.05729},
  year={2022}
}

@article{bartlett2023dynamics,
  title={The dynamics of sharpness-aware minimization: Bouncing across ravines and drifting towards wide minima},
  author={Bartlett, Peter L and Long, Philip M and Bousquet, Olivier},
  journal={Journal of Machine Learning Research},
  volume={24},
  number={316},
  pages={1--36},
  year={2023}
}

@article{becker2024momentum,
  title={Momentum-sam: Sharpness aware minimization without computational overhead},
  author={Becker, Marlon and Altrock, Frederick and Risse, Benjamin},
  journal={arXiv preprint arXiv:2401.12033},
  year={2024}
}

@online{Muon2024,
  author = {Jordan, Keller and Jin, Yiding and Boza, Victor and Jiacheng, Yang and Cecista, Franciszek and Newhouse, Landon and Bernstein, Jeremy},
  title = {Muon: An Optimizer for Hidden Layers in Neural Networks},
  year = {2024},
  url = {https://kellerjordan.github.io/posts/muon}
}

@article{wen2018smoothout,
  title={Smoothout: Smoothing out sharp minima to improve generalization in deep learning},
  author={Wen, Wei and Wang, Yandan and Yan, Feng and Xu, Cong and Wu, Chunpeng and Chen, Yiran and Li, Hai},
  journal={arXiv preprint arXiv:1805.07898},
  year={2018}
}

@article{zhuang2022surrogate,
  title={Surrogate gap minimization improves sharpness-aware training},
  author={Zhuang, Juntang and Gong, Boqing and Yuan, Liangzhe and Cui, Yin and Adam, Hartwig and Dvornek, Nicha and Tatikonda, Sekhar and Duncan, James and Liu, Ting},
  journal={arXiv preprint arXiv:2203.08065},
  year={2022}
}

@article{zhao2022stochastic,
  title={Stochastic scheduled sharpness-aware minimization for efficiently training deep neural networks},
  author={Zhao, Yang and Zhang, Hao and Ss-sam, Xiuyuan Hu},
  journal={arXiv preprint arXiv:2203.09962},
  volume={8},
  pages={18},
  year={2022}
}

@inproceedings{reddi2016stochastic,
  title={Stochastic Frank-Wolfe methods for nonconvex optimization},
  author={Reddi, Sashank J and Sra, Suvrit and P{\'o}czos, Barnab{\'a}s and Smola, Alex},
  booktitle={2016 54th annual Allerton conference on communication, control, and computing (Allerton)},
  pages={1244--1251},
  year={2016},
  organization={IEEE}
}

@article{li2025vasso,
  title={Vasso: Variance suppression for sharpness-aware minimization},
  author={Li, Bingcong and Zhang, Yilang and Giannakis, Georgios B},
  journal={arXiv preprint arXiv:2509.02433},
  year={2025}
}

@article{li2021heavy,
  title={Heavy ball momentum for conditional gradient},
  author={Li, Bingcong and Sadeghi, Alireza and Giannakis, Georgios},
  journal={Advances in Neural Information Processing Systems},
  volume={34},
  pages={21244--21255},
  year={2021}
}

@article{cutkosky2019momentum,
  title={Momentum-based variance reduction in non-convex sgd},
  author={Cutkosky, Ashok and Orabona, Francesco},
  journal={Advances in neural information processing systems},
  volume={32},
  year={2019}
}

@article{qian2025muon,
  title={Muon is Provably Faster with Momentum Variance Reduction},
  author={Qian, Xun and Rammal, Hussein and Kovalev, Dmitry and Richtarik, Peter},
  journal={arXiv preprint arXiv:2512.16598},
  year={2025}
}

@article{ghadimi2013stochastic,
  title={Stochastic first-and zeroth-order methods for nonconvex stochastic programming},
  author={Ghadimi, Saeed and Lan, Guanghui},
  journal={SIAM journal on optimization},
  volume={23},
  number={4},
  pages={2341--2368},
  year={2013},
  publisher={SIAM}
}

@article{bottou2018optimization,
  title={Optimization methods for large-scale machine learning},
  author={Bottou, L{\'e}on and Curtis, Frank E and Nocedal, Jorge},
  journal={SIAM review},
  volume={60},
  number={2},
  pages={223--311},
  year={2018},
  publisher={SIAM}
}

@article{mi2022make,
  title={Make sharpness-aware minimization stronger: A sparsified perturbation approach},
  author={Mi, Peng and Shen, Li and Ren, Tianhe and Zhou, Yiyi and Sun, Xiaoshuai and Ji, Rongrong and Tao, Dacheng},
  journal={Advances in Neural Information Processing Systems},
  volume={35},
  pages={30950--30962},
  year={2022}
}

@article{kim2026convergence,
  title={Convergence of Muon with Newton-Schulz},
  author={Kim, Gyu Yeol and Oh, Min-hwan},
  journal={arXiv preprint arXiv:2601.19156},
  year={2026}
}

@book{nesterov2013introductory,
  title={Introductory lectures on convex optimization: A basic course},
  author={Nesterov, Yurii},
  volume={87},
  year={2013},
  publisher={Springer Science \& Business Media}
}

@inproceedings{jaggi2013revisiting,
  title={Revisiting Frank-Wolfe: Projection-free sparse convex optimization},
  author={Jaggi, Martin},
  booktitle={International conference on machine learning},
  pages={427--435},
  year={2013},
  organization={PMLR}
}

@book{beck2017first,
  title={First-order methods in optimization},
  author={Beck, Amir},
  year={2017},
  publisher={SIAM}
}

@article{liu2025muon,
  title={Muon is scalable for llm training},
  author={Liu, Jingyuan and Su, Jianlin and Yao, Xingcheng and Jiang, Zhejun and Lai, Guokun and Du, Yulun and Qin, Yidao and Xu, Weixin and Lu, Enzhe and Yan, Junjie and others},
  journal={arXiv preprint arXiv:2502.16982},
  year={2025}
}

@article{an2025asgo,
  title={Asgo: Adaptive structured gradient optimization},
  author={An, Kang and Liu, Yuxing and Pan, Rui and Ren, Yi and Ma, Shiqian and Goldfarb, Donald and Zhang, Tong},
  journal={arXiv preprint arXiv:2503.20762},
  year={2025}
}

@article{liu2025cosmos,
  title={Cosmos: A hybrid adaptive optimizer for memory-efficient training of llms},
  author={Liu, Liming and Xu, Zhenghao and Zhang, Zixuan and Kang, Hao and Li, Zichong and Liang, Chen and Chen, Weizhu and Zhao, Tuo},
  journal={arXiv preprint arXiv:2502.17410},
  year={2025}
}

@article{shah2025practical,
  title={Practical efficiency of muon for pretraining},
  author={Shah, Ishaan and Polloreno, Anthony M and Stratos, Karl and Monk, Philip and Chaluvaraju, Adarsh and Hojel, Andrew and Ma, Andrew and Thomas, Anil and Tanwer, Ashish and Shah, Darsh J and others},
  journal={arXiv preprint arXiv:2505.02222},
  year={2025}
}

@article{si2025adamuon,
  title={Adamuon: Adaptive muon optimizer},
  author={Si, Chongjie and Zhang, Debing and Shen, Wei},
  journal={arXiv preprint arXiv:2507.11005},
  year={2025}
}

@article{lau2025polargrad,
  title={Polargrad: A class of matrix-gradient optimizers from a unifying preconditioning perspective},
  author={Lau, Tim Tsz-Kit and Long, Qi and Su, Weijie},
  journal={arXiv preprint arXiv:2505.21799},
  year={2025}
}

@article{zhang2025adagrad,
  title={Adagrad meets muon: Adaptive stepsizes for orthogonal updates},
  author={Zhang, Minxin and Liu, Yuxuan and Schaeffer, Hayden},
  journal={arXiv preprint arXiv:2509.02981},
  year={2025}
}

@article{riabinin2025gluon,
  title={Gluon: Making muon \& scion great again!(bridging theory and practice of lmo-based optimizers for llms)},
  author={Riabinin, Artem and Shulgin, Egor and Gruntkowska, Kaja and Richt{\'a}rik, Peter},
  journal={arXiv preprint arXiv:2505.13416},
  year={2025}
}

@article{pethick2025training,
  title={Training deep learning models with norm-constrained lmos},
  author={Pethick, Thomas and Xie, Wanyun and Antonakopoulos, Kimon and Zhu, Zhenyu and Silveti-Falls, Antonio and Cevher, Volkan},
  journal={arXiv preprint arXiv:2502.07529},
  year={2025}
}

@article{bernstein2024old,
  title={Old optimizer, new norm: An anthology},
  author={Bernstein, Jeremy and Newhouse, Laker},
  journal={arXiv preprint arXiv:2409.20325},
  year={2024}
}

@article{kovalev2025understanding,
  title={Understanding gradient orthogonalization for deep learning via non-euclidean trust-region optimization},
  author={Kovalev, Dmitry},
  journal={arXiv preprint arXiv:2503.12645},
  year={2025}
}

@article{sfyraki2025lions,
  title={Lions and muons: Optimization via stochastic frank-wolfe},
  author={Sfyraki, Maria-Eleni and Wang, Jun-Kun},
  journal={arXiv preprint arXiv:2506.04192},
  year={2025}
}

@article{chen2025muon,
  title={Muon optimizes under spectral norm constraints},
  author={Chen, Lizhang and Li, Jonathan and Liu, Qiang},
  journal={arXiv preprint arXiv:2506.15054},
  year={2025}
}

@article{fan2025implicit,
  title={Implicit bias of spectral descent and muon on multiclass separable data},
  author={Fan, Chen and Schmidt, Mark and Thrampoulidis, Christos},
  journal={arXiv preprint arXiv:2502.04664},
  year={2025}
}

@article{shen2025convergence,
  title={On the convergence analysis of muon},
  author={Shen, Wei and Huang, Ruichuan and Huang, Minhui and Shen, Cong and Zhang, Jiawei},
  journal={arXiv preprint arXiv:2505.23737},
  year={2025}
}

@article{sato2025convergence,
  title={Convergence bound and critical batch size of muon optimizer},
  author={Sato, Naoki and Naganuma, Hiroki and Iiduka, Hideaki},
  journal={arXiv preprint arXiv:2507.01598},
  year={2025}
}

@article{li2025note,
  title={A note on the convergence of muon},
  author={Li, Jiaxiang and Hong, Mingyi},
  journal={arXiv preprint arXiv:2502.02900},
  year={2025}
}

@article{chang2025convergence,
  title={On the Convergence of Muon and Beyond},
  author={Chang, Da and Liu, Yongxiang and Yuan, Ganzhao},
  journal={arXiv preprint arXiv:2509.15816},
  year={2025}
}

@article{li2018visualizing,
  title={Visualizing the loss landscape of neural nets},
  author={Li, Hao and Xu, Zheng and Taylor, Gavin and Studer, Christoph and Goldstein, Tom},
  journal={Advances in neural information processing systems},
  volume={31},
  year={2018}
}

@article{yao2018hessian,
  title={Hessian-based analysis of large batch training and robustness to adversaries},
  author={Yao, Zhewei and Gholami, Amir and Lei, Qi and Keutzer, Kurt and Mahoney, Michael W},
  journal={Advances in Neural Information Processing Systems},
  volume={31},
  year={2018}
}

@article{liu2020loss,
  title={On the loss landscape of adversarial training: Identifying challenges and how to overcome them},
  author={Liu, Chen and Salzmann, Mathieu and Lin, Tao and Tomioka, Ryota and S{\"u}sstrunk, Sabine},
  journal={Advances in Neural Information Processing Systems},
  volume={33},
  pages={21476--21487},
  year={2020}
}

@article{damian2021label,
  title={Label noise sgd provably prefers flat global minimizers},
  author={Damian, Alex and Ma, Tengyu and Lee, Jason D},
  journal={Advances in Neural Information Processing Systems},
  volume={34},
  pages={27449--27461},
  year={2021}
}

@inproceedings{dinh2017sharp,
  title={Sharp minima can generalize for deep nets},
  author={Dinh, Laurent and Pascanu, Razvan and Bengio, Samy and Bengio, Yoshua},
  booktitle={International Conference on Machine Learning},
  pages={1019--1028},
  year={2017},
  organization={PMLR}
}

@article{chaudhari2019entropy,
  title={Entropy-sgd: Biasing gradient descent into wide valleys},
  author={Chaudhari, Pratik and Choromanska, Anna and Soatto, Stefano and LeCun, Yann and Baldassi, Carlo and Borgs, Christian and Chayes, Jennifer and Sagun, Levent and Zecchina, Riccardo},
  journal={Journal of Statistical Mechanics: Theory and Experiment},
  volume={2019},
  number={12},
  pages={124018},
  year={2019},
  publisher={IOP Publishing and SISSA}
}

@article{wu2020adversarial,
  title={Adversarial weight perturbation helps robust generalization},
  author={Wu, Dongxian and Xia, Shu-Tao and Wang, Yisen},
  journal={Advances in neural information processing systems},
  volume={33},
  pages={2958--2969},
  year={2020}
}

@inproceedings{zheng2021regularizing,
  title={Regularizing neural networks via adversarial model perturbation},
  author={Zheng, Yaowei and Zhang, Richong and Mao, Yongyi},
  booktitle={Proceedings of the IEEE/CVF Conference on Computer Vision and Pattern Recognition},
  pages={8156--8165},
  year={2021}
}

@inproceedings{kwon2021asam,
  title={Asam: Adaptive sharpness-aware minimization for scale-invariant learning of deep neural networks},
  author={Kwon, Jungmin and Kim, Jeongseop and Park, Hyunseo and Choi, In Kwon},
  booktitle={International conference on machine learning},
  pages={5905--5914},
  year={2021},
  organization={PMLR}
}

@inproceedings{kim2022fisher,
  title={Fisher sam: Information geometry and sharpness aware minimisation},
  author={Kim, Minyoung and Li, Da and Hu, Shell X and Hospedales, Timothy},
  booktitle={International Conference on Machine Learning},
  pages={11148--11161},
  year={2022},
  organization={PMLR}
}

@inproceedings{liu2022towards,
  title={Towards efficient and scalable sharpness-aware minimization},
  author={Liu, Yong and Mai, Siqi and Chen, Xiangning and Hsieh, Cho-Jui and You, Yang},
  booktitle={Proceedings of the IEEE/CVF Conference on Computer Vision and Pattern Recognition},
  pages={12360--12370},
  year={2022}
}

@article{jiang2023adaptive,
  title={An adaptive policy to employ sharpness-aware minimization},
  author={Jiang, Weisen and Yang, Hansi and Zhang, Yu and Kwok, James},
  journal={arXiv preprint arXiv:2304.14647},
  year={2023}
}

@article{du2022sharpness,
  title={Sharpness-aware training for free},
  author={Du, Jiawei and Zhou, Daquan and Feng, Jiashi and Tan, Vincent and Zhou, Joey Tianyi},
  journal={Advances in Neural Information Processing Systems},
  volume={35},
  pages={23439--23451},
  year={2022}
}

@misc{karpathy2022nanoGPT,
  author       = {Andrej Karpathy},
  title        = {nanoGPT},
  year         = {2022},
  url          = {https://github.com/karpathy/nanoGPT}
}

@article{radford2019language,
  title={Language models are unsupervised multitask learners},
  author={Radford, Alec and Wu, Jeffrey and Child, Rewon and Luan, David and Amodei, Dario and Sutskever, Ilya and others},
  journal={OpenAI blog},
  volume={1},
  number={8},
  pages={9},
  year={2019}
}

@misc{Gokaslan2019OpenWeb,  
	title={OpenWebText Corpus},
	author={Aaron Gokaslan and Vanya Cohen},
	howpublished={\url{http://Skylion007.github.io/OpenWebTextCorpus}}, 
	year={2019}
}

@article{xiao2017fashion,
  title={Fashion-mnist: a novel image dataset for benchmarking machine learning algorithms},
  author={Xiao, Han and Rasul, Kashif and Vollgraf, Roland},
  journal={arXiv preprint arXiv:1708.07747},
  year={2017}
}

@article{krizhevsky2009learning,
  title={Learning multiple layers of features from tiny images},
  author={Krizhevsky, Alex and Hinton, Geoffrey and others},
  year={2009},
  publisher={Toronto, ON, Canada}
}

@article{vyas2024soap,
  title={Soap: Improving and stabilizing shampoo using adam},
  author={Vyas, Nikhil and Morwani, Depen and Zhao, Rosie and Kwun, Mujin and Shapira, Itai and Brandfonbrener, David and Janson, Lucas and Kakade, Sham},
  journal={arXiv preprint arXiv:2409.11321},
  year={2024}
}

@article{chen2023symbolic,
  title={Symbolic discovery of optimization algorithms},
  author={Chen, Xiangning and Liang, Chen and Huang, Da and Real, Esteban and Wang, Kaiyuan and Pham, Hieu and Dong, Xuanyi and Luong, Thang and Hsieh, Cho-Jui and Lu, Yifeng and others},
  journal={Advances in neural information processing systems},
  volume={36},
  pages={49205--49233},
  year={2023}
}

@article{yuan2024mars,
  title={Mars: Unleashing the power of variance reduction for training large models},
  author={Yuan, Huizhuo and Liu, Yifeng and Wu, Shuang and Zhou, Xun and Gu, Quanquan},
  journal={arXiv preprint arXiv:2411.10438},
  year={2024}
}

\newpage
\appendix
\section{Related Works}
\paragraph{Muon and matrix-aware orthogonalized optimization.}
Recent work has increasingly explored optimizers that exploit the intrinsic matrix structure of neural network parameters rather than treating each layer as a long vector. Within this line, Muon \citep{Muon2024} is a representative matrix-aware optimizer that orthogonalizes the momentum matrix before updating the weights, yielding structured search directions with strong empirical performance, especially at large scale \citep{liu2025muon, shah2025practical}. In practice, Muon replaces exact SVD-based polar decomposition with a few Newton--Schulz iterations, resulting in an efficient SVD-free implementation based primarily on matrix multiplications \citep{kim2026convergence}. Around Muon, a growing ecosystem of related methods has emerged, including AdaMuon, COSMOS, ASGO, PolarGrad, AdaGO, Gluon, and other matrix-aware or LMO-based optimizers \citep{si2025adamuon, liu2025cosmos, an2025asgo, lau2025polargrad, zhang2025adagrad, riabinin2025gluon, pethick2025training}. On the theoretical side, Muon has been interpreted through non-Euclidean trust-region geometry, stochastic Frank--Wolfe viewpoints, spectral-norm constrained optimization, and implicit-bias analyses \citep{bernstein2024old, kovalev2025understanding, sfyraki2025lions, chen2025muon, fan2025implicit}. Nevertheless, existing Muon literature remains centered on optimization geometry, convergence, and efficiency, while exposing a persistent gap between practical finite-step Newton--Schulz implementations and analyses based on exact polar factors, as well as relatively slow or restrictive guarantees for several Muon variants \citep{shen2025convergence, sato2025convergence, li2025note, chang2025convergence}. Crucially for our setting, these works do not establish any explicit mechanism by which Muon controls local sharpness or steers optimization toward flat regions: Muon reshapes the update direction, but any generalization benefit from flatness can only arise implicitly. This makes sharpness-aware Muon especially compelling, because Muon's matrix-aware inductive bias shapes update geometry, while explicit sharpness control governs the local loss geometry most relevant to generalization.

\paragraph{Sharpness-aware optimization.}
Sharpness-aware optimization is motivated by the long-standing observation that flatter minima often correlate with stronger generalization, although this connection is subtle and not universal \citep{hochreiter1994simplifying, keskar2016large, li2018visualizing, yao2018hessian, liu2020loss, damian2021label, dinh2017sharp}. Early efforts such as Entropy-SGD and adversarial weight perturbation already sought to bias optimization toward flatter regions \citep{chaudhari2019entropy, wu2020adversarial, zheng2021regularizing}, but SAM has become the dominant formulation by explicitly minimizing a local worst-case loss through adversarial perturbations in parameter space \citep{foret2020sharpness}. Its empirical success has motivated a broad family of variants, including ASAM, Fisher-SAM, GSAM, and a range of efficiency-oriented methods such as LookSAM, SS-SAM, adaptive SAM policies, and SAF \citep{kwon2021asam, kim2022fisher, zhuang2022surrogate, liu2022towards, zhao2022stochastic, jiang2023adaptive, du2022sharpness}. At the same time, this line of work has exposed two persistent limitations. First, vanilla SAM typically requires an additional forward--backward pass to construct the perturbation, which substantially increases training cost \citep{foret2020sharpness, wen2022does, becker2024momentum}. Second, the perturbation is produced by stochastic linearization from a single minibatch gradient, making it a batch-local and noisy proxy for sharpness; recent work shows that such adversaries can become unstable or overly friendly, thereby weakening the intended sharpness signal and limiting the generalization--computation tradeoff \citep{andriushchenko2022towards, becker2024momentum, li2025vasso}. These limitations are especially relevant in our setting: once sharpness-aware training is brought into Muon, the key challenge is no longer only how to inject local geometry control, but how to do so in a way that respects Muon's structured update geometry while avoiding the overhead and instability of single-batch perturbations. This is precisely the gap addressed by our momentum-guided sharpness-aware Muon design.

\section{Proof}
\subsection{Useful lemmas}
\begin{lemma}\label{lemma:sam_descent}
Under \cref{ass:lower_bound,lipschiz,gradientvariance}, the Muon+SAM update guarantees that
\begin{align}
f(W_t)
&\le
f(W_{t-1})
-\eta\|\nabla f(W_{t-1})\|_*
+2\eta\|M_t-\nabla f(W_{t-1}+\delta_t)\|_*
+2\eta L\rho
+\frac{L\eta^2}{2}.
\label{eq:sam_descent_final}
\end{align}
\end{lemma}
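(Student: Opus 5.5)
The plan is the standard one-step descent argument for polar-factor updates, with the SAM perturbation handled as an extra Lipschitz error. Start from the update $W_t=W_{t-1}-\eta O_t$ with $O_t=\mathrm{Polar}(M_t)$, so that $\|O_t\|_{\mathrm{op}}=1$ whenever $M_t$ has full column rank. The bound below only needs $\|O_t\|_{\mathrm{op}}\le 1$ together with $\langle M_t,O_t\rangle=\|M_t\|_*$; both hold for $U_tV_t^\top$ built from the thin SVD restricted to nonzero singular values, and also for any column-orthonormal completion.

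By \cref{lipschiz}, which passes to the population gradient via Jensen, $f$ is $L$-smooth in the operator/nuclear pairing. The usual integral form of Taylor's theorem then gives
\[
f(W_t)\le f(W_{t-1})-\eta\langle \nabla f(W_{t-1}),O_t\rangle+\tfrac{L\eta^2}{2}\|O_t\|_{\mathrm{op}}^2 .
\]
The last term is at most $L\eta^2/2$. This accounts for the final term of \cref{eq:sam_descent_final}.

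Next, lower bound the inner product. Write $g:=\nabla f(W_{t-1})$ and split $g=M_t+(g-M_t)$. Then
\[
\langle g,O_t\rangle=\|M_t\|_*+\langle g-M_t,O_t\rangle\ge \|M_t\|_*-\|g-M_t\|_*,
\]
using the duality $|\langle X,O\rangle|\le\|X\|_*\|O\|_{\mathrm{op}}$. The triangle inequality gives $\|M_t\|_*\ge\|g\|_*-\|g-M_t\|_*$, hence
\[
\langle g,O_t\rangle\ge\|g\|_*-2\|M_t-g\|_*.
\]

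It remains to move from $g=\nabla f(W_{t-1})$ to the perturbed point. By the triangle inequality,
\[
\|M_t-g\|_*\le\|M_t-\nabla f(W_{t-1}+\delta_t)\|_*+\|\nabla f(W_{t-1}+\delta_t)-\nabla f(W_{t-1})\|_*,
\]
and Lipschitzness bounds the second term by $L\|\delta_t\|_{\mathrm{op}}\le L\|\delta_t\|_F\le L\rho$. Multiplying by $2\eta$ produces exactly $2\eta\|M_t-\nabla f(W_{t-1}+\delta_t)\|_*+2\eta L\rho$. Collecting terms yields \cref{eq:sam_descent_final}.

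The inequality is deterministic and pathwise, so neither \cref{gradientvariance} nor \cref{ass:lower_bound} is actually used. It also holds for any perturbation with $\|\delta_t\|_F\le\rho$, so it covers both \textsc{MuSAM} and \textsc{MoMuSAM}.

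The main technical point to handle carefully is the polar-factor identity $\langle M_t,O_t\rangle=\|M_t\|_*$ together with $\|O_t\|_{\mathrm{op}}\le1$, especially when $M_t$ is rank-deficient and the argmin in \cref{eq:muon_orth_problem_prelim} is not unique. I would verify that every minimizer has the form $U_tV_t^\top$ plus an orthonormal completion acting on the null space of $M_t$. Such a completion leaves the inner product unchanged and keeps the operator norm equal to $1$. Everything else is routine norm duality.
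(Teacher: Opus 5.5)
Your proposal is correct and follows the paper's proof step for step: the operator-norm descent lemma with $\|O_t\|_{\mathrm{op}}\le 1$, the split $\langle g,O_t\rangle=\|M_t\|_*+\langle g-M_t,O_t\rangle$ combined with nuclear/operator duality and the reverse triangle inequality, and finally the Lipschitz transfer to $W_{t-1}+\delta_t$ at a cost of $L\rho$. Your extra care about rank-deficient $M_t$ is sound, and it is easier than you plan. On the constraint set $\|O-M_t\|_F^2=n-2\langle O,M_t\rangle+\|M_t\|_F^2$, so every minimizer maximizes $\langle O,M_t\rangle$, and that maximum equals $\|M_t\|_*$ by von Neumann's trace inequality. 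You are also right that \cref{ass:lower_bound,gradientvariance} are never actually used.
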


\begin{proof}
Let the exact SVD of $M_t$ be $M_t = Q_t\Sigma_tR_t^\top$,
and define $U_t := \mathrm{Polar}(M_t)=Q_tR_t^\top$.
By the descent lemma implied by \cref{lipschiz}, applied to
\(W_t = W_{t-1}-\eta U_t\), we have
\begin{align}
f(W_t)
&\le
f(W_{t-1})
+\langle \nabla f(W_{t-1}),W_t-W_{t-1}\rangle_F
+\frac{L}{2}\|W_t-W_{t-1}\|_{\mathrm{op}}^2
\nonumber\\
&=
f(W_{t-1})
-\eta\langle \nabla f(W_{t-1}),U_t\rangle_F
+\frac{L\eta^2}{2}\|U_t\|_{\mathrm{op}}^2
\nonumber\\
&\le
f(W_{t-1})
-\eta\langle \nabla f(W_{t-1}),U_t\rangle_F
+\frac{L\eta^2}{2}.
\label{eq:sam_descent_start_main}
\end{align}
Next,
\begin{align}
\langle \nabla f(W_{t-1}),U_t\rangle_F
&=
\langle M_t,U_t\rangle_F
+\langle \nabla f(W_{t-1})-M_t,U_t\rangle_F
\nonumber\\
&=
\|M_t\|_*
+\langle \nabla f(W_{t-1})-M_t,U_t\rangle_F
\nonumber\\
&\ge
\|M_t\|_*-\|\nabla f(W_{t-1})-M_t\|_*\,\|U_t\|_{\mathrm{op}}
\nonumber\\
&\ge
\|M_t\|_*-\|\nabla f(W_{t-1})-M_t\|_*.
\label{eq:sam_inner_lb_1_main}
\end{align}
By the reverse triangle inequality for the nuclear norm,
\begin{equation}
\|M_t\|_*
\ge
\|\nabla f(W_{t-1})\|_*-\|\nabla f(W_{t-1})-M_t\|_*.
\label{eq:sam_reverse_triangle_main}
\end{equation}
Combining \cref{eq:sam_inner_lb_1_main} and \cref{eq:sam_reverse_triangle_main}, we obtain
\begin{equation}
\langle \nabla f(W_{t-1}),U_t\rangle_F
\ge
\|\nabla f(W_{t-1})\|_*
-2\|\nabla f(W_{t-1})-M_t\|_*.
\label{eq:sam_inner_lb_2_main}
\end{equation}
Substituting \cref{eq:sam_inner_lb_2_main} into \cref{eq:sam_descent_start_main} yields
\begin{align}
f(W_t)
&\le
f(W_{t-1})
-\eta\|\nabla f(W_{t-1})\|_*
+2\eta\|\nabla f(W_{t-1})-M_t\|_*
+\frac{L\eta^2}{2}.
\label{eq:sam_descent_mid_main}
\end{align}
Now add and subtract $\nabla f(W_{t-1}+\delta_t)$:
\begin{align}
\|\nabla f(W_{t-1})-M_t\|_*
&\le
\|\nabla f(W_{t-1})-\nabla f(W_{t-1}+\delta_t)\|_*
+\|\nabla f(W_{t-1}+\delta_t)-M_t\|_*.
\end{align}
By \cref{lipschiz},
\begin{align}
\|\nabla f(W_{t-1})-\nabla f(W_{t-1}+\delta_t)\|_*\le
L\|\delta_t\|_{\mathrm{op}}
\le
L\|\delta_t\|_F
\le
L\rho.
\end{align}
Therefore,
\begin{equation}
\|\nabla f(W_{t-1})-M_t\|_*
\le
\|M_t-\nabla f(W_{t-1}+\delta_t)\|_*+L\rho.
\end{equation}
Substituting the above into \cref{eq:sam_descent_mid_main} proves \cref{eq:sam_descent_final}.
\end{proof}

\begin{lemma}\label{lemma:sam_avg_stationarity}
Under \cref{ass:lower_bound,lipschiz,gradientvariance}, we have
\begin{equation}
\label{eq:sam_avg_grad_start_main}
\frac{1}{T}\sum_{t=1}^T \mathbb E\|\nabla f(W_{t-1})\|_*
\le
\frac{f(W_0)-f_\star}{\eta T}
+\frac{2}{T}\sum_{t=1}^T \mathbb E\|M_t-\nabla f(W_{t-1}+\delta_t)\|_*
+2L\rho+\frac{L\eta}{2},
\end{equation}
where $f_\star$ is a lower bound of $f$.
\end{lemma}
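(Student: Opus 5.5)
This is the standard telescoping argument applied to the one-step descent inequality of \cref{lemma:sam_descent}. First, \cref{eq:sam_descent_final} holds pathwise, for every realization of the minibatches. We rearrange it to isolate the stationarity measure:
\begin{equation*}
\eta\|\nabla f(W_{t-1})\|_*
\le
f(W_{t-1})-f(W_t)
+2\eta\|M_t-\nabla f(W_{t-1}+\delta_t)\|_*
+2\eta L\rho
+\frac{L\eta^2}{2}.
\end{equation*}
Next we sum this for $t=1,\dots,T$. The function-value differences telescope to $f(W_0)-f(W_T)$, and the constant terms contribute $2\eta L\rho T + L\eta^2 T/2$.

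We then use \cref{ass:lower_bound} to bound $-f(W_T)\le -f_\star$. This gives $f(W_0)-f(W_T)\le f(W_0)-f_\star$, where $W_0$ is deterministic. Taking total expectations of both sides is legitimate because every term is nonnegative or integrable, and linearity of expectation handles the sum. Dividing through by $\eta T$ then yields exactly \cref{eq:sam_avg_grad_start_main}. The error term $\frac{2}{T}\sum_t \mathbb E\|M_t-\nabla f(W_{t-1}+\delta_t)\|_*$ is carried along untouched, since controlling it is deferred to later lemmas.

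There is no genuine obstacle here. The only point needing care is the lower-bound step: it should be applied after summing, so that a single $f_\star$ appears rather than one per iteration. Measurability, or finiteness of the expectations, could also be flagged. It follows from $\|U_t\|_{\mathrm{op}}\le 1$, which keeps the iterates bounded in terms of $W_0$ and $\eta t$, together with Lipschitz continuity of the gradients.

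Finally, the argument never uses the specific form of $\delta_t$ beyond $\|\delta_t\|_F\le\rho$, and this bound was already absorbed into \cref{lemma:sam_descent}. So the same lemma serves both \textsc{MuSAM} and \textsc{MoMuSAM}.
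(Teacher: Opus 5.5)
Your proposal is correct and matches the paper's proof: sum the one-step bound of \cref{lemma:sam_descent} over $t=1,\dots,T$, telescope, lower-bound the final function value by $f_\star$, and divide by $\eta T$. The only difference is the order of steps: you apply $f(W_T)\ge f_\star$ pathwise and then take expectations, whereas the paper takes expectations first and uses $\mathbb E[f(W_T)]\ge f_\star$. The two orderings are equivalent.
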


\begin{proof}
Taking expectation in \cref{eq:sam_descent_final}, summing over $t=1,\dots,T$, and using the telescoping identity
\[
\sum_{t=1}^T\bigl(f(W_t)-f(W_{t-1})\bigr)=f(W_T)-f(W_0),
\]
we obtain
\begin{align}
&\mathbb E[f(W_T)]-\mathbb E[f(W_0)]\notag\\
&\le
-\eta\sum_{t=1}^T \mathbb E\|\nabla f(W_{t-1})\|_*
+2\eta\sum_{t=1}^T \mathbb E\|M_t-\nabla f(W_{t-1}+\delta_t)\|_*
+2\eta LT\rho
+\frac{TL\eta^2}{2}.
\end{align}
Using $\mathbb E[f(W_T)]\ge f_\star$ and rearranging gives \cref{eq:sam_avg_grad_start_main}.
\end{proof}

\subsection{Theoretical Analysis for \cref{alg:musam}}

\begin{lemma}[Same-minibatch estimator decomposition]
\label{lemma:sam_mismatch_recursion}
According to \cref{lipschiz,gradientvariance}, we have
\begin{align}
&\mathbb E[\nabla\ell(W_{t-1};\mathcal B_t)
-
\nabla f(W_{t-1})
+
\nabla\ell(W_{t-1}+\delta_t;\mathcal B_t)-\nabla\ell(W_{t-1};\mathcal B_t)\notag\\
&-\mathbb E[\nabla\ell(W_{t-1}+\delta_t;\mathcal B_t)-\nabla\ell(W_{t-1};\mathcal B_t)\mid\mathcal F_{t-1}]\mid\mathcal F_{t-1}]=0
\label{eq:zeta_conditional_mean}
\end{align}
and
\begin{align}
&\mathbb E\![
\|\nabla\ell(W_{t-1};\mathcal B_t)
-
\nabla f(W_{t-1})
+
\nabla\ell(W_{t-1}+\delta_t;\mathcal B_t)-\nabla\ell(W_{t-1};\mathcal B_t)\notag\\
&-\mathbb E[\nabla\ell(W_{t-1}+\delta_t;\mathcal B_t)-\nabla\ell(W_{t-1};\mathcal B_t)\mid\mathcal F_{t-1}]\|_F^2
\mid
\mathcal F_{t-1}
]
\le
2\left(
\frac{\sigma^2}{B}
+
L^2\rho^2
\right).
\label{eq:zeta_conditional_variance1}
\end{align}

Define
\(
C_t:=M_t-\nabla f(W_{t-1}),
\)
for every \(t\ge2\),
\begin{align}
C_t
&=
\mu C_{t-1}
+
\widetilde D_t
+
(1-\mu)(
\nabla\ell(W_{t-1};\mathcal B_t)
-
\nabla f(W_{t-1})
+
\nabla\ell(W_{t-1}+\delta_t;\mathcal B_t)-\nabla\ell(W_{t-1};\mathcal B_t)\notag\\
&\quad-\mathbb E[\nabla\ell(W_{t-1}+\delta_t;\mathcal B_t)-\nabla\ell(W_{t-1};\mathcal B_t)\mid\mathcal F_{t-1}]),
\label{eq:Ct_recursion_main}
\end{align}
where
\begin{equation}
\widetilde D_t
:=
\mu\left(
\nabla f(W_{t-2})-\nabla f(W_{t-1})
\right)
+
(1-\mu)\mathbb E[\nabla\ell(W_{t-1}+\delta_t;\mathcal B_t)-\nabla\ell(W_{t-1};\mathcal B_t)\mid\mathcal F_{t-1}]
\label{eq:Dt_definition_main}
\end{equation}
is \(\mathcal F_{t-1}\)-measurable and satisfies
\begin{equation}
\|\widetilde D_t\|_F
\le
L\bigl(
\mu\eta+(1-\mu)\rho
\bigr).
\label{eq:Dt_bound_main}
\end{equation}
\end{lemma}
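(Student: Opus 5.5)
We prove the three claims in order; each is a short computation from the algorithm and \cref{lipschiz,gradientvariance}. Throughout, write
\[
\zeta_t:=\nabla\ell(W_{t-1};\mathcal B_t)-\nabla f(W_{t-1}),\qquad \Delta_t:=\nabla\ell(W_{t-1}+\delta_t;\mathcal B_t)-\nabla\ell(W_{t-1};\mathcal B_t).
\]
The random vector in \cref{eq:zeta_conditional_mean} is then $\zeta_t+\Delta_t-\mathbb E[\Delta_t\mid\mathcal F_{t-1}]$. Note that $W_{t-1}$ is $\mathcal F_{t-1}$-measurable, while $\delta_t$ depends on $\mathcal B_t$. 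This is why the paper centers $\Delta_t$ rather than invoking unbiasedness at the perturbed point.

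\textbf{Conditional mean.} By \cref{gradientvariance} applied to the $\mathcal F_{t-1}$-measurable point $X_t=W_{t-1}$, we have $\mathbb E[\zeta_t\mid\mathcal F_{t-1}]=0$. By construction, $\Delta_t-\mathbb E[\Delta_t\mid\mathcal F_{t-1}]$ also has zero conditional mean. Linearity of conditional expectation gives \cref{eq:zeta_conditional_mean}; integrability of $\Delta_t$ follows from the bound below.

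\textbf{Conditional second moment.} Use $\|a+b\|_F^2\le2\|a\|_F^2+2\|b\|_F^2$. The first term satisfies $\mathbb E[\|\zeta_t\|_F^2\mid\mathcal F_{t-1}]\le\sigma^2/B$ by \cref{gradientvariance}. For the second term, the conditional variance is at most the conditional second moment: $\mathbb E[\|\Delta_t-\mathbb E[\Delta_t\mid\mathcal F_{t-1}]\|_F^2\mid\mathcal F_{t-1}]\le\mathbb E[\|\Delta_t\|_F^2\mid\mathcal F_{t-1}]$. Sample-wise smoothness then controls $\Delta_t$ pointwise:
\[
\|\Delta_t\|_F\le\|\Delta_t\|_*\le L\|\delta_t\|_{\mathrm{op}}\le L\|\delta_t\|_F\le L\rho.
\]
Here \cref{lipschiz} holds almost surely for the minibatch average by Jensen's inequality, as noted after the assumption. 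Combining the two terms gives $2(\sigma^2/B+L^2\rho^2)$.

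\textbf{Recursion.} Start from $M_t=\mu M_{t-1}+(1-\mu)\nabla\ell(W_{t-1}+\delta_t;\mathcal B_t)$ and subtract $\nabla f(W_{t-1})=\mu\nabla f(W_{t-1})+(1-\mu)\nabla f(W_{t-1})$. This gives
\[
C_t=\mu\bigl(M_{t-1}-\nabla f(W_{t-2})\bigr)+\mu\bigl(\nabla f(W_{t-2})-\nabla f(W_{t-1})\bigr)+(1-\mu)\bigl(\nabla\ell(W_{t-1}+\delta_t;\mathcal B_t)-\nabla f(W_{t-1})\bigr).
\]
The first bracket is $C_{t-1}$, which is defined for $t\ge2$ since it needs $W_{t-2}$. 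Next write $\nabla\ell(W_{t-1}+\delta_t;\mathcal B_t)-\nabla f(W_{t-1})=\zeta_t+\Delta_t$, then add and subtract $\mathbb E[\Delta_t\mid\mathcal F_{t-1}]$. Collecting the $\mathcal F_{t-1}$-measurable pieces into $\widetilde D_t$ yields \cref{eq:Ct_recursion_main,eq:Dt_definition_main}.

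For measurability, $W_{t-2}$ and $W_{t-1}$ are in the history, and a conditional expectation given $\mathcal F_{t-1}$ is $\mathcal F_{t-1}$-measurable.

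For \cref{eq:Dt_bound_main}, apply the triangle inequality together with $\|\cdot\|_F\le\|\cdot\|_*$ and the population smoothness. Since $W_{t-1}-W_{t-2}=-\eta O_{t-1}$ with $\|O_{t-1}\|_{\mathrm{op}}=1$, the first piece is bounded by $\mu L\eta$. Conditional Jensen applied to $\|\Delta_t\|_F\le L\rho$ bounds the second piece by $(1-\mu)L\rho$.

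\textbf{Main obstacle.} The only subtle point is measurability. Because $\delta_t$ uses the same minibatch $\mathcal B_t$, $\nabla\ell(W_{t-1}+\delta_t;\mathcal B_t)$ is not conditionally unbiased for $\nabla f(W_{t-1}+\delta_t)$. The centering by $\mathbb E[\Delta_t\mid\mathcal F_{t-1}]$ handles this, and every remaining bound is deterministic or pointwise.
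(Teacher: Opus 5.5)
Your proposal is correct and matches the paper's proof essentially step for step. It uses the same split into the clean-point noise \(\zeta_t\) and the centered perturbation difference, bounds the conditional variance by the conditional second moment using the pointwise estimate \(\|\Delta_t\|_F\le\|\Delta_t\|_*\le L\rho\), derives the recursion for \(C_t\) by the same add-and-subtract step, and gets \(\|\widetilde D_t\|_F\le L(\mu\eta+(1-\mu)\rho)\) from the same triangle inequality, with your remarks on integrability and on why the centering is needed serving as small clarifications the paper leaves implicit.
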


\begin{proof}
Because \(\nabla\ell(W_{t-1};\mathcal B_t)\) is evaluated at the
\(\mathcal F_{t-1}\)-measurable point \(W_{t-1}\),
\cref{gradientvariance} gives
\(
\mathbb E\!\left[
\nabla\ell(W_{t-1};\mathcal B_t)-\nabla f(W_{t-1})
\mid
\mathcal F_{t-1}
\right]
=0.
\)
Moreover, we have
\[
\mathbb E[\nabla\ell(W_{t-1}+\delta_t;\mathcal B_t)-\nabla\ell(W_{t-1};\mathcal B_t)-\mathbb E[\nabla\ell(W_{t-1}+\delta_t;\mathcal B_t)-\nabla\ell(W_{t-1};\mathcal B_t)\mid\mathcal F_{t-1}]\mid\mathcal F_{t-1}]=0.
\]
Adding these identities proves
\cref{eq:zeta_conditional_mean}.

By \cref{lipschiz} and \(\|\delta_t\|_F\le\rho\),
\begin{align}
\|\nabla\ell(W_{t-1}+\delta_t;\mathcal B_t)-\nabla\ell(W_{t-1};\mathcal B_t)\|_F
&\le
\|\nabla\ell(W_{t-1}+\delta_t;\mathcal B_t)-\nabla\ell(W_{t-1};\mathcal B_t)\|_*
\nonumber\\
&\le
L\|\delta_t\|_{\mathrm{op}}
\le
L\rho.
\label{eq:Rt_bound_main}
\end{align}
Consequently,
\begin{align}
&\|\mathbb E[\nabla\ell(W_{t-1}+\delta_t;\mathcal B_t)-\nabla\ell(W_{t-1};\mathcal B_t)\mid\mathcal F_{t-1}]\|_F\notag\\
&\le
\mathbb E[
\|\nabla\ell(W_{t-1}+\delta_t;\mathcal B_t)-\nabla\ell(W_{t-1};\mathcal B_t)\|_F
\mid
\mathcal F_{t-1}
]
\le
L\rho,
\end{align}
and the conditional variance identity gives
\begin{align}
&\mathbb E\!\left[
\|\nabla\ell(W_{t-1}\!+\!\delta_t;\mathcal B_t)\!-\!\nabla\ell(W_{t-1};\mathcal B_t)\!-\!\mathbb E[\nabla\ell(W_{t-1}\!+\!\delta_t;\mathcal B_t)\!-\!\nabla\ell(W_{t-1};\mathcal B_t)\!\mid\!\mathcal F_{t-1}]\|_F^2
\!\mid\!
\mathcal F_{t-1}
\!\right]\notag\\
&\le
\mathbb E\!\left[
\|\nabla\ell(W_{t-1}+\delta_t;\mathcal B_t)-\nabla\ell(W_{t-1};\mathcal B_t)\|_F^2
\mid
\mathcal F_{t-1}
\right]
\le
L^2\rho^2.
\end{align}
Using
\(\|A+B\|_F^2\le2\|A\|_F^2+2\|B\|_F^2\)
and \cref{gradientvariance}, we obtain
\begin{align}
&\mathbb E\![
\|\nabla\ell(W_{t-1};\mathcal B_t)
-
\nabla f(W_{t-1})
+
\nabla\ell(W_{t-1}+\delta_t;\mathcal B_t)-\nabla\ell(W_{t-1};\mathcal B_t)\notag\\
&\quad-\mathbb E[\nabla\ell(W_{t-1}+\delta_t;\mathcal B_t)-\nabla\ell(W_{t-1};\mathcal B_t)\mid\mathcal F_{t-1}]\|_F^2
\mid
\mathcal F_{t-1}
]\notag\\
&\le
2\,
\mathbb E\!\left[
\|\nabla\ell(W_{t-1};\mathcal B_t)-\nabla f(W_{t-1})\|_F^2
\mid
\mathcal F_{t-1}
\right]
\nonumber\\
&\quad+
\!2
\mathbb E[
\|\nabla\ell(W_{t-1}\!+\!\delta_t;\mathcal B_t)\!-\!\nabla\ell(W_{t-1};\mathcal B_t)\!-\!\mathbb E[\nabla\ell(W_{t-1}\!+\!\delta_t;\mathcal B_t)\!-\!\nabla\ell(W_{t-1};\mathcal B_t)\!\mid\!\mathcal F_{t-1}]\|_F^2
\!\!\mid\!
\mathcal F_{t-1}
]
\nonumber\\
&\le
2\left(
\frac{\sigma^2}{B}
+
L^2\rho^2
\right),
\end{align}
which proves \cref{eq:zeta_conditional_variance1}.

Next, the momentum recursion gives
\begin{align}
C_t
&=
\mu M_{t-1}
+
(1-\mu)\nabla\ell(W_{t-1}+\delta_t;\mathcal B_t)
-
\nabla f(W_{t-1})
\nonumber\\
&=
\mu\left(
M_{t-1}-\nabla f(W_{t-2})
\right)
+
\mu\left(
\nabla f(W_{t-2})-\nabla f(W_{t-1})
\right)
\nonumber\\
&\quad+
(1-\mu)\mathbb E[\nabla\ell(W_{t-1}+\delta_t;\mathcal B_t)-\nabla\ell(W_{t-1};\mathcal B_t)\mid\mathcal F_{t-1}]
\notag\\
&\quad+
(1-\mu)(\nabla\ell(W_{t-1};\mathcal B_t)
-
\nabla f(W_{t-1})
+
\nabla\ell(W_{t-1}+\delta_t;\mathcal B_t)-\nabla\ell(W_{t-1};\mathcal B_t)\notag\\
&\quad\quad\quad\quad\qquad-\mathbb E[\nabla\ell(W_{t-1}+\delta_t;\mathcal B_t)-\nabla\ell(W_{t-1};\mathcal B_t)\mid\mathcal F_{t-1}])
\nonumber\\
&=
\mu C_{t-1}
+
\widetilde D_t
+
(1-\mu)(
\nabla\ell(W_{t-1};\mathcal B_t)
-
\nabla f(W_{t-1})
+
\nabla\ell(W_{t-1}+\delta_t;\mathcal B_t)-\nabla\ell(W_{t-1};\mathcal B_t)\notag\\
&\quad-\mathbb E[\nabla\ell(W_{t-1}+\delta_t;\mathcal B_t)-\nabla\ell(W_{t-1};\mathcal B_t)\mid\mathcal F_{t-1}]).
\end{align}
This proves \cref{eq:Ct_recursion_main}.

Finally, \cref{lipschiz} and the exact-polar update imply
\[
\left\|
\nabla f(W_{t-2})-\nabla f(W_{t-1})
\right\|_F
\le
L\|W_{t-2}-W_{t-1}\|_{\mathrm{op}}
\le
L\eta.
\]
Combining this inequality with \(\|\mathbb E[\nabla\ell(W_{t-1}+\delta_t;\mathcal B_t)-\nabla\ell(W_{t-1};\mathcal B_t)\mid\mathcal F_{t-1}]\|_F\le L\rho\)
gives
\(
\|\widetilde D_t\|_F
\le
L\bigl(
\mu\eta+(1-\mu)\rho
\bigr),
\)
which proves \cref{eq:Dt_bound_main}.
\end{proof}

\begin{theorem}
\label{thm:sam_main_bound}
Suppose
\cref{ass:lower_bound,lipschiz,gradientvariance}
hold.

\paragraph{(i) Recursion for the clean-point estimator error.}
For every \(t\ge2\),
\begin{align}
\mathbb E\|C_t\|_F^2
&\le
\frac{1+\mu^2}{2}
\mathbb E\|C_{t-1}\|_F^2
+
\frac{1+\mu^2}{1-\mu^2}
L^2\bigl(
\mu\eta+(1-\mu)\rho
\bigr)^2
\nonumber\\
&\quad+
2(1-\mu)^2
\left(
\frac{\sigma^2}{B}
+
L^2\rho^2
\right).
\label{eq:At_recursion_main}
\end{align}

\paragraph{(ii) Unrolled bound.}
For every \(t\ge1\),
\begin{align}
\mathbb E\|C_t\|_F^2
&\le
\left(
\frac{1+\mu^2}{2}
\right)^{t-1}
\mathbb E\|C_1\|_F^2
+
\frac{2(1+\mu^2)}{(1-\mu^2)^2}
L^2\bigl(
\mu\eta+(1-\mu)\rho
\bigr)^2
\nonumber\\
&\quad+
\frac{4(1-\mu)}{1+\mu}
\left(
\frac{\sigma^2}{B}
+
L^2\rho^2
\right).
\label{eq:At_unrolled_main}
\end{align}

\paragraph{(iii) Pointwise mismatch bound.}
Let
\(
E_t
:=
M_t-\nabla f(W_{t-1}+\delta_t).
\)
Then, for every \(t\ge1\),
\begin{align}
\mathbb E\|E_t\|_F^2
&\le
2
\left(
\frac{1+\mu^2}{2}
\right)^{t-1}
\mathbb E\|C_1\|_F^2
+
\frac{4(1+\mu^2)}{(1-\mu^2)^2}
L^2\bigl(
\mu\eta+(1-\mu)\rho
\bigr)^2
\nonumber\\
&\quad+
\frac{8(1-\mu)}{1+\mu}
\left(
\frac{\sigma^2}{B}
+
L^2\rho^2
\right)
+
2L^2\rho^2.
\label{eq:Et_second_main}
\end{align}
Writing \(r:=\min\{m,n\}\), we further have
\begin{align}
\mathbb E\|E_t\|_*
&\le
\sqrt{2r}
\left(
\frac{1+\mu^2}{2}
\right)^{(t-1)/2}
\sqrt{\mathbb E\|C_1\|_F^2}+
\frac{2\sqrt{r(1+\mu^2)}}{1-\mu^2}
L\bigl(
\mu\eta+(1-\mu)\rho
\bigr)
\nonumber\\
&\quad+
2
\sqrt{
\frac{2r(1-\mu)}{1+\mu}
}
\sqrt{
\frac{\sigma^2}{B}+L^2\rho^2
}
+
\sqrt{2r}\,L\rho.
\label{eq:Et_first_main}
\end{align}

\paragraph{(iv) Averaged mismatch bound.}
For every \(T\ge1\),
\begin{align}
\frac{2}{T}
\sum_{t=1}^{T}
\mathbb E\|E_t\|_*
&\le
\frac{
8\sqrt{
2r\mathbb E\|C_1\|_F^2
}
}{
(1-\mu^2)T
}
+
\frac{
4\sqrt{r(1+\mu^2)}
}{
1-\mu^2
}
L\bigl(
\mu\eta+(1-\mu)\rho
\bigr)
\nonumber\\
&\quad+
4
\sqrt{
\frac{2r(1-\mu)}{1+\mu}
}
\sqrt{
\frac{\sigma^2}{B}+L^2\rho^2
}
+
2\sqrt{2r}\,L\rho.
\label{eq:avg_Et_main}
\end{align}

\paragraph{(v) Averaged stationarity bound.}
We have
\begin{align}
&\frac{1}{T}
\sum_{t=0}^{T-1}
\mathbb E\|\nabla f(W_t)\|_*
\nonumber\\
&\le
\frac{f(W_0)-f_\star}{\eta T}
+
\frac{8\sqrt{2r}}{(1-\mu^2)T}
\sqrt{
4\mu^2L\bigl(f(W_0)-f_\star\bigr)
+
4(1-\mu)^2L^2\rho^2
+
2(1-\mu)^2\frac{\sigma^2}{B}
}
\nonumber\\
&\quad+
\frac{4\sqrt{r(1+\mu^2)}}{1-\mu^2}
L\bigl(
\mu\eta+(1-\mu)\rho
\bigr)+
4
\sqrt{
\frac{2r(1-\mu)}{1+\mu}
}
\sqrt{
\frac{\sigma^2}{B}+L^2\rho^2
}
+
(2+2\sqrt{2r})L\rho
+
\frac{L\eta}{2}.
\label{eq:main_sam_bound_theorem}
\end{align}

In particular, if \(T\ge2\),
\(\eta=\rho=T^{-3/4}\), and \(1-\mu=T^{-1/2},
\)
then
\begin{align}
&\frac{1}{T}
\sum_{t=0}^{T-1}
\mathbb E\|\nabla f(W_t)\|_*\notag\\
&=
\mathcal O\!\left(
\frac{f(W_0)-f_\star+\sqrt r\,L}{T^{1/4}}
+
\frac{\sqrt r\,\sigma}{\sqrt B\,T^{1/4}}
+
\frac{
\sqrt{rL(f(W_0)-f_\star)}
}{
T^{1/2}
}
+
\frac{(1+\sqrt r)L}{T^{3/4}}
\right),
\label{eq:main_sam_rate_theorem}
\end{align}
and
\begin{align}
&\frac{1}{T}
\sum_{t=0}^{T-1}
\mathbb E
\|\nabla f(W_t+\delta_{t+1})\|_*\notag\\
&=
\mathcal O\!\left(
\frac{f(W_0)-f_\star+\sqrt r\,L}{T^{1/4}}
+
\frac{\sqrt r\,\sigma}{\sqrt B\,T^{1/4}}
+
\frac{
\sqrt{rL(f(W_0)-f_\star)}
}{
T^{1/2}
}
+
\frac{(1+\sqrt r)L}{T^{3/4}}
\right).
\label{eq:main_sam_rate_theorempur}
\end{align}
\end{theorem}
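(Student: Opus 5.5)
We prove the five parts in order. Parts (i)--(iv) control the estimator error $C_t$ and the mismatch $E_t$; part (v) then follows by plugging into \cref{lemma:sam_avg_stationarity}.

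\textbf{Part (i).} We start from the decomposition in \cref{lemma:sam_mismatch_recursion}. Write $\zeta_t$ for the bracketed zero-mean noise term, so that
\[
C_t=\mu C_{t-1}+\widetilde D_t+(1-\mu)\zeta_t .
\]
\begin{itemize}
\item Since $C_{t-1}$ and $\widetilde D_t$ are $\mathcal F_{t-1}$-measurable and $\mathbb E[\zeta_t\mid\mathcal F_{t-1}]=0$ by \cref{eq:zeta_conditional_mean}, the cross terms vanish. This gives
\[
\mathbb E\|C_t\|_F^2=\mathbb E\|\mu C_{t-1}+\widetilde D_t\|_F^2+(1-\mu)^2\mathbb E\|\zeta_t\|_F^2 .
\]
\item Apply Young's inequality $\|a+b\|^2\le(1+\epsilon)\|a\|^2+(1+1/\epsilon)\|b\|^2$ with $\epsilon$ chosen so that $(1+\epsilon)\mu^2=\tfrac{1+\mu^2}{2}$. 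That is, $\epsilon=\tfrac{1-\mu^2}{2\mu^2}$, and then $1+1/\epsilon=\tfrac{1+\mu^2}{1-\mu^2}$.
\item Bound $\|\widetilde D_t\|_F$ using \cref{eq:Dt_bound_main}, and bound $\mathbb E\|\zeta_t\|_F^2$ using \cref{eq:zeta_conditional_variance1}.
\end{itemize}
This yields \cref{eq:At_recursion_main} directly. If $\mu=0$, take the trivial case separately.

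\textbf{Part (ii).} Unroll the recursion with contraction factor $q:=\tfrac{1+\mu^2}{2}$, and sum the geometric series using $1/(1-q)=2/(1-\mu^2)$.
\begin{itemize}
\item The drift constant becomes $\tfrac{2(1+\mu^2)}{(1-\mu^2)^2}L^2(\cdot)^2$.
\item The noise constant becomes $\tfrac{4(1-\mu)^2}{1-\mu^2}=\tfrac{4(1-\mu)}{1+\mu}$.
\end{itemize}

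\textbf{Part (iii).} Write
\[
E_t=C_t+\bigl(\nabla f(W_{t-1})-\nabla f(W_{t-1}+\delta_t)\bigr).
\]
\begin{itemize}
\item The second term has Frobenius norm at most $L\rho$, via the nuclear-norm Lipschitz bound. Using $\|a+b\|^2\le 2\|a\|^2+2\|b\|^2$ then gives \cref{eq:Et_second_main}.
\item For the nuclear norm, use $\|E_t\|_*\le\sqrt r\|E_t\|_F$, Jensen ($\mathbb E\|E_t\|_F\le\sqrt{\mathbb E\|E_t\|_F^2}$), and subadditivity of the square root. This gives \cref{eq:Et_first_main}.
\end{itemize}

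\textbf{Part (iv).} Average the bound of part (iii) over $t$. The only nontrivial sum is
\[
\sum_{t\ge1}q^{(t-1)/2}=\frac{1}{1-\sqrt q}.
\]
Since $1-\sqrt q=\tfrac{1-q}{1+\sqrt q}\ge\tfrac{1-q}{2}=\tfrac{1-\mu^2}{4}$, this sum is at most $\tfrac{4}{1-\mu^2}$. The remaining terms are constant in $t$ and average to themselves.

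\textbf{Bounding $\mathbb E\|C_1\|_F^2$ (main obstacle).} With $M_0=0$,
\[
C_1=(1-\mu)\nabla\ell(W_0+\delta_1;\mathcal B_1)-\nabla f(W_0).
\]
Split it as
\[
C_1=(1-\mu)\bigl[\nabla\ell(W_0+\delta_1)-\nabla\ell(W_0)\bigr]+(1-\mu)\bigl[\nabla\ell(W_0)-\nabla f(W_0)\bigr]-\mu\nabla f(W_0).
\]
\begin{itemize}
\item The first bracket has squared norm at most $L^2\rho^2$.
\item The second has expected squared norm at most $\sigma^2/B$.
\item For the last, use the standard consequence of $L$-smoothness and the lower bound $f_\star$: $\|\nabla f(W_0)\|_F^2\le 2L(f(W_0)-f_\star)$. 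This holds since the Frobenius norm is dominated by the nuclear norm and the operator norm by the Frobenius norm, so the Euclidean descent argument applies.
\end{itemize}
A crude three- or four-way split then gives
\[
\mathbb E\|C_1\|_F^2\lesssim 4\mu^2L(f(W_0)-f_\star)+4(1-\mu)^2L^2\rho^2+2(1-\mu)^2\sigma^2/B .
\]
The exact constants are where I expect the bookkeeping to be delicate.

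\textbf{Part (v).} Substitute this bound into part (iv), and then part (iv) into \cref{lemma:sam_avg_stationarity}, re-indexing $t-1\mapsto t$. This gives \cref{eq:main_sam_bound_theorem}.

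\textbf{Rates.} Take $\eta=\rho=T^{-3/4}$ and $1-\mu=T^{-1/2}$, so that $1-\mu^2\asymp T^{-1/2}$. The terms then scale as follows.

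| Term | Order |
|---|---|
| $(f(W_0)-f_\star)/(\eta T)$ | $T^{-1/4}$ |
| $\sqrt r\,T^{-1}(1-\mu^2)^{-1}\sqrt{\cdots}$ | $T^{-1/2}$ |
| drift term $\tfrac{\sqrt r}{1-\mu^2}L(\eta+\rho)$ | $T^{1/2}\cdot T^{-3/4}=T^{-1/4}$ |
| noise term $\sqrt{1-\mu}$ | $T^{-1/4}$ |
| $L\rho$, $L\eta$ | $T^{-3/4}$ |

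This proves \cref{eq:main_sam_rate_theorem}.

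For the perturbed points, use
\[
\|\nabla f(W_t+\delta_{t+1})\|_*\le\|\nabla f(W_t)\|_*+L\|\delta_{t+1}\|_{\mathrm{op}}\le\|\nabla f(W_t)\|_*+L\rho .
\]
This adds only $L T^{-3/4}$ and gives \cref{eq:main_sam_rate_theorempur}.
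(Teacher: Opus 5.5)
Your proposal follows the paper's proof essentially step for step. The shared steps are:
\begin{itemize}
\item conditional orthogonality plus Young's inequality with \(\alpha=(1-\mu^2)/(2\mu^2)\) for (i);
\item geometric unrolling with \(q=(1+\mu^2)/2\) for (ii);
\item the split \(\|E_t\|_F^2\le 2\|C_t\|_F^2+2L^2\rho^2\) together with \(\sqrt r\) and Jensen for (iii);
\item the bound \(1/(1-\sqrt q)\le 4/(1-\mu^2)\) for (iv);
\item the gradient-gap inequality at \(W_0\);
\item the additive \(L\rho\) shift for the perturbed points.
\end{itemize}
Your rate bookkeeping also matches the paper's.

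The one place where your sketch does not deliver the statement as written is the bound on \(\mathbb E\|C_1\|_F^2\). A crude three-way split of your decomposition, via \(\|a+b+c\|_F^2\le 3(\|a\|_F^2+\|b\|_F^2+\|c\|_F^2)\), gives
\[
6\mu^2L(f(W_0)-f_\star)+3(1-\mu)^2L^2\rho^2+3(1-\mu)^2\sigma^2/B.
\]
This does not imply the explicit constants in \cref{eq:main_sam_bound_theorem}.

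You also cannot rescue the constants by treating your first two brackets as orthogonal. In \textsc{MuSAM}, \(\delta_1\) is built from \(\mathcal B_1\), so \(\nabla\ell(W_0+\delta_1;\mathcal B_1)-\nabla\ell(W_0;\mathcal B_1)\) is random and correlated with the sampling noise.

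The paper handles this as in \cref{lemma:sam_mismatch_recursion}: center the perturbation increment by its conditional expectation. Write \(C_1=A_1+(1-\mu)\zeta_1\), where
\[
A_1:=-\mu\nabla f(W_0)+(1-\mu)\,\mathbb E\bigl[\nabla\ell(W_0+\delta_1;\mathcal B_1)-\nabla\ell(W_0;\mathcal B_1)\mid\mathcal F_0\bigr]
\]
is \(\mathcal F_0\)-measurable and \(\zeta_1\) is the conditionally centered noise term. The cross term then vanishes, and
\[
\mathbb E\|C_1\|_F^2=\|A_1\|_F^2+(1-\mu)^2\mathbb E\|\zeta_1\|_F^2\le 2\mu^2\|\nabla f(W_0)\|_F^2+2(1-\mu)^2L^2\rho^2+2(1-\mu)^2\Bigl(\frac{\sigma^2}{B}+L^2\rho^2\Bigr).
\]
Using \(\|\nabla f(W_0)\|_F^2\le 2L(f(W_0)-f_\star)\), this gives exactly \(4\mu^2L(f(W_0)-f_\star)+4(1-\mu)^2L^2\rho^2+2(1-\mu)^2\sigma^2/B\).

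This issue affects only the explicit constants in part (v). It does not affect the \(\mathcal O(T^{-1/4})\) rates.
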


\begin{proof}
We prove the statements in order.

\paragraph{Proof of (i).}
By
\cref{eq:Ct_recursion_main,eq:zeta_conditional_mean},
conditioned on \(\mathcal F_{t-1}\), we have
\begin{align}
\mathbb E\!\left[
\|C_t\|_F^2
\mid
\mathcal F_{t-1}
\right]
&=
\left\|
\mu C_{t-1}+\widetilde D_t
\right\|_F^2
+
(1-\mu)^2
\mathbb E\!\left[
\|\zeta_t\|_F^2
\mid
\mathcal F_{t-1}
\right].
\label{eq:Ct_second_expand_main}
\end{align}
For any \(\alpha>0\), Young's inequality gives
\(
\|A+B\|_F^2
\le
(1+\alpha)\|A\|_F^2
+
\left(
1+\frac{1}{\alpha}
\right)
\|B\|_F^2.
\)
For \(\mu\in(0,1)\), apply this inequality with
\(A=\mu C_{t-1}\),
\(B=\widetilde D_t\), and
\(
\alpha
=
\frac{1-\mu^2}{2\mu^2}.
\)
Then
\(
\mu^2(1+\alpha)
=
\frac{1+\mu^2}{2}\) and \(1+\frac{1}{\alpha}
=
\frac{1+\mu^2}{1-\mu^2}.
\)
Using
\cref{eq:Dt_bound_main,eq:zeta_conditional_variance1}
in \cref{eq:Ct_second_expand_main}, and then taking total
expectation, gives
\begin{align}
\mathbb E\|C_t\|_F^2
\le
\frac{1+\mu^2}{2}
\mathbb E\|C_{t-1}\|_F^2
+
\frac{1+\mu^2}{1-\mu^2}
L^2\bigl(
\mu\eta+(1-\mu)\rho
\bigr)^2
+
2(1-\mu)^2
\left(
\frac{\sigma^2}{B}+L^2\rho^2
\right).
\end{align}
This proves \cref{eq:At_recursion_main}. For \(\mu=0\), \cref{eq:Ct_recursion_main} and conditional
orthogonality give
\(
\mathbb E\|C_t\|_F^2
\le
L^2\rho^2
+
2\left(\frac{\sigma^2}{B}+L^2\rho^2\right),
\)
which is bounded by the right-hand side of
\cref{eq:At_recursion_main} evaluated at \(\mu=0\). Hence, the same
recursion also holds for \(\mu=0\).

\paragraph{Proof of (ii).}
Let
\(
q:=\frac{1+\mu^2}{2}.
\)
Unrolling \cref{eq:At_recursion_main} from \(C_1\) gives
\begin{align}
\mathbb E\|C_t\|_F^2
\le
q^{t-1}\mathbb E\|C_1\|_F^2
+
\frac{1-q^{t-1}}{1-q}
\left[
\frac{1+\mu^2}{1-\mu^2}
L^2\bigl(
\mu\eta+(1-\mu)\rho
\bigr)^2
+
2(1-\mu)^2
\left(
\frac{\sigma^2}{B}+L^2\rho^2
\right)
\right].
\end{align}
Because
\(
1-q=\frac{1-\mu^2}{2},
\)
we obtain
\begin{align}
\mathbb E\|C_t\|_F^2
\le
q^{t-1}\mathbb E\|C_1\|_F^2
+
\frac{2(1+\mu^2)}{(1-\mu^2)^2}
L^2\bigl(
\mu\eta+(1-\mu)\rho
\bigr)^2+
\frac{4(1-\mu)}{1+\mu}
\left(
\frac{\sigma^2}{B}+L^2\rho^2
\right),
\end{align}
which proves \cref{eq:At_unrolled_main}.

\paragraph{Proof of (iii).}
By definition,
\(
E_t
=
C_t
-
\left(
\nabla f(W_{t-1}+\delta_t)
-
\nabla f(W_{t-1})
\right).
\)
Therefore,
\begin{align}
\|E_t\|_F^2
\le
2\|C_t\|_F^2
+
2
\left\|
\nabla f(W_{t-1}+\delta_t)
-
\nabla f(W_{t-1})
\right\|_F^2
\le
2\|C_t\|_F^2+2L^2\rho^2,
\end{align}
where we used
\cref{lipschiz} and \(\|\delta_t\|_F\le\rho\).
Combining this inequality with
\cref{eq:At_unrolled_main} proves
\cref{eq:Et_second_main}.

Furthermore,
\(
\|E_t\|_*
\le
\sqrt r\,\|E_t\|_F.
\)
Jensen's inequality and
\(\sqrt{x_1+\cdots+x_k}\le\sum_{i=1}^k\sqrt{x_i}\)
then give \cref{eq:Et_first_main}.

\paragraph{Proof of (iv).}
Summing \cref{eq:Et_first_main} over
\(t=1,\ldots,T\), and using
\begin{align}
\sum_{t=1}^{T}
\left(
\frac{1+\mu^2}{2}
\right)^{(t-1)/2}
\le
\frac{1}{
1-\sqrt{(1+\mu^2)/2}
}
=
\frac{
2\left(
1+\sqrt{(1+\mu^2)/2}
\right)
}{
1-\mu^2
}
\le
\frac{4}{1-\mu^2},
\end{align}
proves \cref{eq:avg_Et_main}.

\paragraph{Proof of (v).}
The definitions in
\cref{lemma:sam_mismatch_recursion} also apply at \(t=1\).
Since \(M_0=0\), we have
\begin{align}
C_1
&=
(1-\mu)\nabla\ell(W_{0}+\delta_1;\mathcal B_1)-\nabla f(W_0)\notag\\
&=
-\mu\nabla f(W_0)
+
(1-\mu)\mathbb E[\nabla\ell(W_{0}+\delta_1;\mathcal B_1)-\nabla\ell(W_{0};\mathcal B_1)\mid\mathcal F_{0}]\notag\\
&\quad+
(1-\mu)(\nabla\ell(W_{0};\mathcal B_1)
-
\nabla f(W_{0})
+
\nabla\ell(W_{0}+\delta_1;\mathcal B_1)-\nabla\ell(W_{0};\mathcal B_1)\notag\\
&\quad\quad\quad\quad\quad-\mathbb E[\nabla\ell(W_{0}+\delta_1;\mathcal B_1)-\nabla\ell(W_0;\mathcal B_1)\mid\mathcal F_0]).
\label{eq:C1_same_batch_decomposition}
\end{align}
Because
\(\mathbb E[\nabla\ell(W_{0};\mathcal B_1)
-
\nabla f(W_{0})
+
\nabla\ell(W_{0}+\delta_1;\mathcal B_1)-\nabla\ell(W_{0};\mathcal B_1)-\mathbb E[\nabla\ell(W_{0}+\delta_1;\mathcal B_1)-\nabla\ell(W_0;\mathcal B_1)\mid\mathcal F_0]\mid\mathcal F_0]=0\),
the cross term between
\(-\mu\nabla f(W_0)+(1-\mu)\mathbb E[\nabla\ell(W_{0}+\delta_1;\mathcal B_1)-\nabla\ell(W_{0};\mathcal B_1)\mid\mathcal F_{0}]\)
and \(\nabla\ell(W_{0};\mathcal B_1)
-
\nabla f(W_{0})
+
\nabla\ell(W_{0}+\delta_1;\mathcal B_1)-\nabla\ell(W_{0};\mathcal B_1)-\mathbb E[\nabla\ell(W_{0}+\delta_1;\mathcal B_1)-\nabla\ell(W_0;\mathcal B_1)\mid\mathcal F_0]\) vanishes. Hence,
\begin{align}
\mathbb E\|C_1\|_F^2
&=
\left\|
-\mu\nabla f(W_0)+(1-\mu)\mathbb E[\nabla\ell(W_{0}+\delta_1;\mathcal B_1)-\nabla\ell(W_{0};\mathcal B_1)\mid\mathcal F_{0}]
\right\|_F^2\notag\\
&\quad+
(1-\mu)^2
\mathbb E\|\nabla\ell(W_{0};\mathcal B_1)
-
\nabla f(W_{0})
+
\nabla\ell(W_{0}+\delta_1;\mathcal B_1)-\nabla\ell(W_{0};\mathcal B_1)\notag\\
&\quad\quad\quad\quad\quad\quad\quad-\mathbb E[\nabla\ell(W_{0}+\delta_1;\mathcal B_1)-\nabla\ell(W_0;\mathcal B_1)\mid\mathcal F_0]\|_F^2
\nonumber\\
&\le
2\mu^2\|\nabla f(W_0)\|_F^2
+
2(1-\mu)^2L^2\rho^2
+
2(1-\mu)^2
\left(
\frac{\sigma^2}{B}+L^2\rho^2
\right)
\nonumber\\
&=
2\mu^2\|\nabla f(W_0)\|_F^2
+
4(1-\mu)^2L^2\rho^2
+
2(1-\mu)^2\frac{\sigma^2}{B}.
\label{eq:A1_before_gap}
\end{align}
Under
\cref{ass:lower_bound,lipschiz}, the standard gradient-gap
inequality yields
\(
\|\nabla f(W_0)\|_F^2
\le
2L\bigl(f(W_0)-f_\star\bigr).
\)
Consequently,
\begin{equation}
\mathbb E\|C_1\|_F^2
\le
4\mu^2L\bigl(f(W_0)-f_\star\bigr)
+
4(1-\mu)^2L^2\rho^2
+
2(1-\mu)^2\frac{\sigma^2}{B}.
\label{eq:A1_gap_form}
\end{equation}

Substituting \cref{eq:A1_gap_form} into
\cref{eq:avg_Et_main}, and then applying
\cref{eq:sam_avg_grad_start_main}, gives
\cref{eq:main_sam_bound_theorem}.

To recover the simplified main-text bound, use
\(
\sqrt{
\frac{\sigma^2}{B}+L^2\rho^2
}
\le
\frac{\sigma}{\sqrt B}+L\rho
\)
and
\(
\sqrt{\frac{1-\mu}{1+\mu}}\le1.
\)
This yields
\begin{align}
\frac{1}{T}
\sum_{t=0}^{T-1}
\mathbb E\|\nabla f(W_t)\|_*
&\lesssim
\frac{f(W_0)-f_\star}{\eta T}
+
\frac{\sqrt{r(1+\mu^2)}}{1-\mu^2}
L\bigl(
\mu\eta+(1-\mu)\rho
\bigr)
\nonumber\\
&\quad+
\frac{\sqrt r}{(1-\mu^2)T}
\sqrt{
\mu^2L\bigl(f(W_0)-f_\star\bigr)
+
(1-\mu)^2L^2\rho^2
+
(1-\mu)^2\frac{\sigma^2}{B}
}
\nonumber\\
&\quad+
\sqrt{
\frac{r(1-\mu)}{1+\mu}
}
\left(
\frac{\sigma}{\sqrt B}+L\rho
\right)
+
(1+\sqrt r)L\rho
+
L\eta.
\label{eq:main_sam_bound_gap}
\end{align}

Now set
\(
\eta=\rho=T^{-3/4}\) and
\(1-\mu=T^{-1/2},
\)
then 
\cref{eq:main_sam_bound_gap} implies
\begin{align}
&\frac{1}{T}
\sum_{t=0}^{T-1}
\mathbb E\|\nabla f(W_t)\|_*\notag\\
&=
\mathcal O\!\left(
\frac{f(W_0)-f_\star+\sqrt r\,L}{T^{1/4}}
+
\frac{\sqrt r\,\sigma}{\sqrt B\,T^{1/4}}
+
\frac{
\sqrt{rL(f(W_0)-f_\star)}
}{
T^{1/2}
}
+
\frac{(1+\sqrt r)L}{T^{3/4}}
\right).
\label{eq:main_sam_rate_gap}
\end{align}

Finally, for every \(t\),
\begin{align}
\|\nabla f(W_t+\delta_{t+1})\|_*
&\le
\|\nabla f(W_t)\|_*
+
\left\|
\nabla f(W_t+\delta_{t+1})
-
\nabla f(W_t)
\right\|_*
\nonumber\\
&\le
\|\nabla f(W_t)\|_*
+
L\|\delta_{t+1}\|_{\mathrm{op}}
\nonumber\\
&\le
\|\nabla f(W_t)\|_*+L\rho.
\label{eqepsilon}
\end{align}
Taking expectation and averaging gives
\begin{equation}
\frac{1}{T}
\sum_{t=0}^{T-1}
\mathbb E
\|\nabla f(W_t+\delta_{t+1})\|_*
\le
\frac{1}{T}
\sum_{t=0}^{T-1}
\mathbb E\|\nabla f(W_t)\|_*
+
L\rho.
\label{eq:perturbed_grad_avg_bound}
\end{equation}
By combining \cref{eq:perturbed_grad_avg_bound} and \cref{eq:main_sam_bound_gap}, and further setting $\eta=\rho=T^{-3/4}$ and $1-\mu=T^{-1/2}$, we obtain
\begin{align}
&\frac{1}{T}
\sum_{t=0}^{T-1}
\mathbb E
\|\nabla f(W_t+\delta_{t+1})\|_*\notag\\
&\lesssim
\frac{f(W_0)-f_\star}{\eta T}
+
\frac{\sqrt{r(1+\mu^2)}}{1-\mu^2}
L\bigl(
\mu\eta+(1-\mu)\rho
\bigr)
\nonumber\\
&\quad+
\frac{\sqrt r}{(1-\mu^2)T}
\sqrt{
\mu^2L\bigl(f(W_0)-f_\star\bigr)
+
(1-\mu)^2L^2\rho^2
+
(1-\mu)^2\frac{\sigma^2}{B}
}
\nonumber\\
&\quad+
\sqrt{
\frac{r(1-\mu)}{1+\mu}
}
\left(
\frac{\sigma}{\sqrt B}+L\rho
\right)
+
(1+\sqrt r)L\rho
+
L\eta\notag\\
&=\mathcal O\!\left(
\frac{f(W_0)-f_\star+\sqrt r\,L}{T^{1/4}}
+
\frac{\sqrt r\,\sigma}{\sqrt B\,T^{1/4}}
+
\frac{
\sqrt{rL(f(W_0)-f_\star)}
}{
T^{1/2}
}
+
\frac{(1+\sqrt r)L}{T^{3/4}}
\right).
\label{eq:main_sam_bound_gap111}
\end{align}

\end{proof}

\newpage
\subsection{Theoretical Analysis for \cref{alg:momusam}}\label{subsectionb3}
\begin{algorithm}[t]
\caption{Momentum-guided Sharpness-Aware Muon (\textsc{MoMuSAM})}
\label{alg:momusam}
\begin{algorithmic}[1]
\REQUIRE Loss \(\ell(W)\), initial matrix parameter \(W_0 \in \mathbb{R}^{m\times n}\), learning rate \(\eta\), momentum coefficient \(\mu\in[0,1)\), perturbation radius \(\rho>0\)
\STATE Initialize \(M_0 \gets 0\) and \(\delta_1 \gets 0\)
\FOR{\(t = 1,2,\dots\)}
    \STATE Compute perturbed stochastic gradient
    $\nabla \ell(W_{t-1}+\delta_t;\mathcal B_t)$
    \STATE Update momentum matrix
    $M_t \gets \mu M_{t-1} + (1-\mu)\nabla \ell(W_{t-1}+\delta_t;\mathcal B_t)$
    \STATE Orthogonalize the momentum matrix
    $O_t \in \arg\min_{O\in\mathbb{R}^{m\times n}} \|O-M_t\|_F,
    \!\!\!\!\!\quad
    \text{s.t.}\quad\!\!\!\!
    O^\top O = I_n$
    \STATE Update parameter
    $W_t \gets W_{t-1} - \eta O_t$
     \STATE Construct momentum-guided perturbation $\delta_{t+1} \gets \rho \frac{M_t}{\|M_t\|_F}$
\ENDFOR
\end{algorithmic}
\end{algorithm}

Let $E_t=M_t-\nabla f(W_{t-1}+\delta_t)$.
Using $M_t=\mu M_{t-1}+(1-\mu)\nabla \ell(W_{t-1}+\delta_t)$,
we have
\begin{align}
E_t
&=
\mu M_{t-1} + (1-\mu)\nabla \ell(W_{t-1}+\delta_t) -\nabla f(W_{t-1}+\delta_t)
\nonumber\\
&=
\mu(M_{t-1}-\nabla f(W_{t-2}+\delta_{t-1}))
+\mu(\nabla f(W_{t-2}+\delta_{t-1})-\nabla f(W_{t-1}+\delta_t))\notag\\
&\quad+(1-\mu)(\nabla \ell(W_{t-1}+\delta_t)-\nabla f(W_{t-1}+\delta_t))
\nonumber\\
&=
\mu E_{t-1}
+\mu(\nabla f(W_{t-2}+\delta_{t-1})-\nabla f(W_{t-1}+\delta_t))\notag\\
&\quad+(1-\mu)(\nabla \ell(W_{t-1}+\delta_t)-\nabla f(W_{t-1}+\delta_t)).
\label{eq:E_t_recursion_re}
\end{align}
Hence
\begin{align}
\label{eq:E_t_norm_re}
\|E_t\|_*
&\le
\mu\|E_{t-1}\|_*
+\mu\|\nabla f(W_{t-2}+\delta_{t-1})-\nabla f(W_{t-1}+\delta_t)\|_*\notag\\
&\quad+(1-\mu)\|\nabla \ell(W_{t-1}+\delta_t)-\nabla f(W_{t-1}+\delta_t)\|_*.
\end{align}

Conditioned on $\mathcal F_{t-1}$, both $E_{t-1}$ and $\nabla f(W_{t-2}+\delta_{t-1})-\nabla f(W_{t-1}+\delta_t)$ are deterministic. Therefore, the cross term vanishes:
\begin{align}
\mathbb E\!\left[\|E_t\|_F^2 \mid \mathcal F_{t-1}\right]
&=
\mathbb E\!\left[\|\mu(E_{t-1}+\nabla f(W_{t-2}+\delta_{t-1})-\nabla f(W_{t-1}+\delta_t))\right.\notag\\
&\quad\quad\quad\left.+(1-\mu)(\nabla \ell(W_{t-1}+\delta_t)-\nabla f(W_{t-1}+\delta_t))\|_F^2 \mid \mathcal F_{t-1}\right]
\nonumber\\
&=
\mu^2\|E_{t-1}+\nabla f(W_{t-2}+\delta_{t-1})-\nabla f(W_{t-1}+\delta_t)\|_F^2\notag\\
&\quad+
(1-\mu)^2\mathbb E\!\left[\|(\nabla \ell(W_{t-1}+\delta_t)-\nabla f(W_{t-1}+\delta_t))\|_F^2 \mid \mathcal F_{t-1}\right].
\label{eq:Et_second_expand}
\end{align}

By \cref{lipschiz}, we have
\begin{align}
\|\nabla f(W_{t-2}+\delta_{t-1})-\nabla f(W_{t-1}+\delta_t)\|_F
&\le 
\|\nabla f(W_{t-2}+\delta_{t-1})-\nabla f(W_{t-1}+\delta_t)\|_*
\nonumber\\
&\le
L\|(W_{t-2}+\delta_{t-1})-(W_{t-1}+\delta_t)\|_{\mathrm{op}}
\nonumber\\
&\le
L\|W_{t-1}-W_{t-2}\|_{\mathrm{op}}
+
L\|\delta_t-\delta_{t-1}\|_{\mathrm{op}}.
\end{align}
Since $W_{t-1}=W_{t-2}-\eta U_{t-1}$ and $\|U_{t-1}\|_{\mathrm{op}}\le 1$,
\[
\|W_{t-1}-W_{t-2}\|_{\mathrm{op}}\le \eta.
\]
Also $\|\delta_t\|_{\mathrm{op}}\le \|\delta_t\|_F=\rho$, hence
\[
\|\delta_t-\delta_{t-1}\|_{\mathrm{op}}\le 2\rho.
\]
Therefore,
\begin{equation}
\label{eq:Dt_bound}
\|\nabla f(W_{t-2}+\delta_{t-1})-\nabla f(W_{t-1}+\delta_t)\|_F \le L(\eta+2\rho).
\end{equation}

For any $\alpha>0$, we have
\[
\|A+B\|_F^2 \le (1+\alpha)\|A\|_F^2 + \Bigl(1+\frac{1}{\alpha}\Bigr)\|B\|_F^2.
\]
Applying the inequality above together with \cref{gradientvariance} to $A=E_{t-1}$,  $B=\nabla f(W_{t-2}+\delta_{t-1})-\nabla f(W_{t-1}+\delta_t)$ in \cref{eq:Et_second_expand}, we obtain
\begin{align}
&\mathbb E\!\left[\|E_t\|_F^2 \mid \mathcal F_{t-1}\right]\notag\\
&\le
\mu^2(1+\alpha)\|E_{t-1}\|_F^2
+
\mu^2\Bigl(1+\frac{1}{\alpha}\Bigr)\|\nabla f(W_{t-2}+\delta_{t-1})-\nabla f(W_{t-1}+\delta_t)\|_F^2
+
(1-\mu)^2\frac{\sigma^2}{B}.
\label{eq:Et_second_young}
\end{align}
For \(\mu\in(0,1)\), choose
\(
\alpha := \frac{1-\mu^2}{2\mu^2}.
\)
Then $\mu^2(1+\alpha)=\frac{1+\mu^2}{2}$
and
$\mu^2\Bigl(1+\frac{1}{\alpha}\Bigr)=\mu^2\frac{1+\mu^2}{1-\mu^2}.$
Using \cref{eq:Dt_bound}, we get
\begin{equation}
\label{eq:Et_second_recursion}
\mathbb E\!\left[\|E_t\|_F^2 \mid \mathcal F_{t-1}\right]
\le
\frac{1+\mu^2}{2}\|E_{t-1}\|_F^2
+
\frac{\mu^2(1+\mu^2)}{1-\mu^2}L^2(\eta+2\rho)^2
+
(1-\mu)^2\frac{\sigma^2}{B}.
\end{equation}

For \(\mu=0\), we have
\(
E_t
=
\nabla\ell(W_{t-1}+\delta_t;\mathcal B_t)
-
\nabla f(W_{t-1}+\delta_t),
\)
and hence
\(
\mathbb E[\|E_t\|_F^2\mid\mathcal F_{t-1}]
\le\frac{\sigma^2}{B}
\)
by \cref{gradientvariance}. This is bounded by the right-hand side
of \cref{eq:Et_second_recursion} evaluated at \(\mu=0\), so the same
recursion remains valid.

Then \cref{eq:Et_second_recursion} implies
\begin{equation}
\label{eq:at_recursion}
\mathbb E\|E_t\|_F^2 \le \frac{1+\mu^2}{2}\mathbb E\|E_{t-1}\|_F^2+
\frac{\mu^2(1+\mu^2)}{1-\mu^2}L^2(\eta+2\rho)^2+(1-\mu)^2\frac{\sigma^2}{B}.
\end{equation}
Unrolling gives
\begin{align}
\label{eq:at_unrolled}
\mathbb E\|E_t\|_F^2 &\le
 \left(\frac{1+\mu^2}{2}\right)^{t-1}\mathbb E\|E_{1}\|_F^2 + \frac{1-\left(\frac{1+\mu^2}{2}\right)^{t-1}}{1-\left(\frac{1+\mu^2}{2}\right)}\left(
\frac{\mu^2(1+\mu^2)}{1-\mu^2}L^2(\eta+2\rho)^2+(1-\mu)^2\frac{\sigma^2}{B}\right)\notag\\
&\le \left(\frac{1+\mu^2}{2}\right)^{t-1}\mathbb E\|E_{1}\|_F^2 + \frac{2}{1-\mu^2}\left(
\frac{\mu^2(1+\mu^2)}{1-\mu^2}L^2(\eta+2\rho)^2+(1-\mu)^2\frac{\sigma^2}{B}\right)\notag\\
&=\left(\frac{1+\mu^2}{2}\right)^{t-1}\mathbb E\|E_{1}\|_F^2 +
\frac{2\mu^2(1+\mu^2)}{(1-\mu^2)^2}L^2(\eta+2\rho)^2
+
\frac{2(1-\mu)}{1+\mu}\frac{\sigma^2}{B}.
\end{align}

Let $r=\min\{m,n\}$. Since $\|A\|_* \le \sqrt r\,\|A\|_F$, Jensen's inequality gives
\begin{align}
\mathbb E\|E_t\|_*
&\le
\sqrt r\,\mathbb E\|E_t\|_F
\le
\sqrt r\,\sqrt{\mathbb E\|E_t\|_F^2}.
\end{align}
Using \cref{eq:at_unrolled} and $\sqrt{x+y+z}\le \sqrt x+\sqrt y+\sqrt z$, we obtain
\begin{align}
\mathbb E\|E_t\|_*
&\le
\sqrt r\,\left(\frac{1+\mu^2}{2}\right)^{\frac{t-1}{2}}\sqrt{\mathbb E\|E_{1}\|_F^2}
+
\frac{\sqrt{2r}\,\mu\sqrt{1+\mu^2}}{1-\mu^2}\,L(\eta+2\rho)
+
\sqrt{\frac{2r(1-\mu)}{1+\mu}}\;\frac{\sigma}{\sqrt B}.
\label{eq:Et_first_moment_refined}
\end{align}

Summing \cref{eq:Et_first_moment_refined} over $t=1,\dots,T$ gives
\begin{align}
&\sum_{t=1}^T \mathbb E\|E_t\|_*\notag\\
&\le
\sqrt r\,\sqrt{\mathbb E\|E_{1}\|_F^2}\sum_{t=1}^T \left(\frac{1+\mu^2}{2}\right)^{\frac{t-1}{2}}
+
\frac{\sqrt{2r}\,\mu\sqrt{1+\mu^2}}{1-\mu^2}TL(\eta+2\rho)
+
\sqrt{\frac{2r(1-\mu)}{1+\mu}}\;\frac{T\sigma}{\sqrt B}.
\end{align}
Since
\(
\sum_{t=1}^T \left(\frac{1+\mu^2}{2}\right)^{\frac{t-1}{2}}
=
\sum_{s=0}^{T-1}\left(\sqrt{\frac{1+\mu^2}{2}}\right)^s
=
\frac{1-\left(\sqrt{\frac{1+\mu^2}{2}}\right)^T}{1-\sqrt{\frac{1+\mu^2}{2}}},
\)
we obtain
\begin{align}
\label{eq:avg_Et_refined}
\frac{2}{T}\sum_{t=1}^T \mathbb E\|E_t\|_*
&\le
\frac{2\sqrt r\,\sqrt{\mathbb E\|E_{1}\|_F^2}}{T}\cdot
\frac{1-\left(\sqrt{\frac{1+\mu^2}{2}}\right)^T}{1-\sqrt{\frac{1+\mu^2}{2}}}
+
\frac{2\sqrt{2r}\,\mu\sqrt{1+\mu^2}}{1-\mu^2}\,L(\eta+2\rho)
\notag\\
&\quad+
2\sqrt{\frac{2r(1-\mu)}{1+\mu}}\;\frac{\sigma}{\sqrt B}.
\end{align}
Using
\(
\frac{1-\left(\sqrt{\frac{1+\mu^2}{2}}\right)^T}{1-\sqrt{\frac{1+\mu^2}{2}}}
\le
\frac{4}{1-\mu^2},
\)
we further have
\begin{align}
\label{eq:avg_Et_refined_simple}
\frac{2}{T}\sum_{t=1}^T \mathbb E\|E_t\|_*
&\le
\frac{8\sqrt r\,\sqrt{\mathbb E\|E_{1}\|_F^2}}{(1-\mu^2)T}
+
\frac{2\sqrt{2r}\,\mu\sqrt{1+\mu^2}}{1-\mu^2}\,L(\eta+2\rho)
\notag\\
&\quad+
2\sqrt{\frac{2r(1-\mu)}{1+\mu}}\;\frac{\sigma}{\sqrt B}.
\end{align}

By combining equation \cref{eq:sam_avg_grad_start_main} and \cref{eq:avg_Et_refined}, we have
\begin{align}
\label{eq:avg_truegradfinal}
&\frac{1}{T}\sum_{t=0}^{T-1} \mathbb E\|\nabla f(W_{t})\|_*\notag\\
&\le
\frac{f(W_0)-f_*}{\eta T}
+\frac{2}{T}\sum_{t=1}^T \mathbb{E}\|E_t\|_*
+2L\rho
+\frac{L\eta}{2}\notag\\
&\le
\frac{f(W_0)-f_*}{\eta T}
+\frac{8\sqrt r}{(1-\mu^2)T}\sqrt{
2\mu^2L(f(W_0)-f_*)
+
(1-\mu)^2\frac{\sigma^2}{B}}\notag\\
&\quad+
\frac{2\sqrt{2r}\,\mu\sqrt{1+\mu^2}}{1-\mu^2}\,L(\eta+2\rho)+
2\sqrt{\frac{2r(1-\mu)}{1+\mu}}\;\frac{\sigma}{\sqrt B}
+2L\rho
+\frac{L\eta}{2}\notag\\
&\lesssim
\frac{f(W_0)-f_*}{\eta T}
+\frac{\sqrt r}{(1-\mu^2)T}\sqrt{
\mu^2L(f(W_0)-f_*)
+
(1-\mu)^2\frac{\sigma^2}{B}}\notag\\
&\quad+
\frac{\sqrt{r}\,\mu\sqrt{1+\mu^2}}{1-\mu^2}\,L(\eta+\rho)+
\sqrt{\frac{r(1-\mu)}{1+\mu}}\;\frac{\sigma}{\sqrt B}
+L\rho
+{L\eta}.
\end{align}

Let $\eta=\rho=\frac{1}{T^\frac{3}{4}}$ and $1-\mu=\frac{1}{T^\frac{1}{2}}$, we obtain
\begin{align}
\label{eq:avg_truegradfinal1}
&\frac{1}{T}\sum_{t=0}^{T-1} \mathbb E\|\nabla f(W_{t})\|_*\notag\\
&\le\frac{f(W_0)-f_*+\sqrt r L}{T^\frac{1}{4}}+
\frac{\sqrt{{r}}\sigma}{\sqrt B T^\frac{1}{4}}
+\frac{\sqrt{rL({f(W_0)-f_*})}}{T^\frac{1}{2}}
+\frac{L}{T^\frac{3}{4}}\notag\\
&=
\mathcal O\!\left(
\frac{f(W_0)-f_\star+\sqrt r\,L}{T^{1/4}}
+
\frac{\sqrt r\,\sigma}{\sqrt B\,T^{1/4}}
+
\frac{\sqrt{rL(f(W_0)-f_\star)}}{T^{1/2}}
+
\frac{L}{T^{3/4}}
\right).
\end{align}

Moreover, according to \cref{eqepsilon}, if we set $\eta=\rho=\frac{1}{T^\frac{3}{4}}$ and $1-\mu=\frac{1}{T^\frac{1}{2}}$, we further have
\begin{align}
\label{eq:main_perturbed_grad_rate_latex}
&\frac{1}{T}\sum_{t=0}^{T-1} \mathbb E\|\nabla f(W_{t}+\delta_{t+1})\|_*\notag\\
&=
\mathcal O\!\left(
\frac{f(W_0)-f_\star+\sqrt r\,L}{T^{1/4}}
+
\frac{\sqrt r\,\sigma}{\sqrt B\,T^{1/4}}
+
\frac{\sqrt{rL(f(W_0)-f_\star)}}{T^{1/2}}
+
\frac{L}{T^{3/4}}
\right).
\end{align}

\newpage

\subsection{Proof of \cref{thm:linearization_stability}}
\label{app:proof_linearization_stability}

We first establish a direction-sensitive comparison between a slope proxy
and the population-gradient perturbation.

\begin{lemma}[Achieved-linearization deviation]
\label{lem:achieved_linearization_deviation}
For every \(V\in\mathbb R^{m\times n}\),
\begin{equation}
0
\le
\mathcal E_t(V)
\le
2\rho
\left\|
V-\nabla f(W_{t-1})
\right\|_F.
\label{eq:achieved_linearization_deviation}
\end{equation}
\end{lemma}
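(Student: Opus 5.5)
Writing $g:=g_t=\nabla f(W_{t-1})$, the definitions in \cref{def:delta_stability} give
\[
\mathcal E_t(V)=\rho\bigl(\langle g,\mathsf n(g)\rangle-\langle g,\mathsf n(V)\rangle\bigr)=\rho\bigl(\|g\|_F-\langle g,\mathsf n(V)\rangle\bigr),
\]
so the plan is to bound this quantity directly. The constant $f(W_{t-1})$ cancels, and $\langle g,\mathsf n(g)\rangle=\|g\|_F$ holds also when $g=0$ by the convention $\mathsf n(0)=0$.

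\emph{Nonnegativity.} Since $\|\mathsf n(V)\|_F\le1$ (with equality unless $V=0$), Cauchy--Schwarz gives $\langle g,\mathsf n(V)\rangle\le\|g\|_F$. Hence $\mathcal E_t(V)\ge0$.

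\emph{Upper bound.} Split into cases.

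If $V=0$, then $\mathcal E_t(V)=\rho\|g\|_F\le2\rho\|V-g\|_F$ trivially.

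If $V\neq0$, let $u:=\mathsf n(V)$. Then
\[
\|g\|_F-\langle g,u\rangle=\langle g,\mathsf n(g)-u\rangle .
\]
I would use the standard normalization estimate
\[
\bigl\|\mathsf n(g)-\mathsf n(V)\bigr\|_F\le\frac{2\|g-V\|_F}{\|g\|_F}\qquad(g\neq0),
\]
obtained by writing
\[
\frac{g}{\|g\|_F}-\frac{V}{\|V\|_F}=\frac{g-V}{\|g\|_F}+V\Bigl(\frac{1}{\|g\|_F}-\frac{1}{\|V\|_F}\Bigr)
\]
and using $\bigl|\|V\|_F-\|g\|_F\bigr|\le\|g-V\|_F$. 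Combined with Cauchy--Schwarz this gives $\langle g,\mathsf n(g)-u\rangle\le2\|g-V\|_F$.

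A cleaner route avoids the division entirely:
\[
\|g\|_F-\langle g,u\rangle=\|g\|_F-\langle V,u\rangle-\langle g-V,u\rangle=\|g\|_F-\|V\|_F-\langle g-V,u\rangle\le2\|g-V\|_F,
\]
using $\langle V,u\rangle=\|V\|_F$, the reverse triangle inequality, and $\|u\|_F=1$. This also covers $g=0$. I will use this route; multiplying by $\rho$ gives the claim.

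The only delicate point is the zero conventions. Each degenerate case ($V=0$, $g=0$, or both) should be checked explicitly to confirm that $\langle g,\mathsf n(g)\rangle=\|g\|_F$ and that the bound survives. I expect this bookkeeping, rather than any real estimate, to be the main thing to get right.
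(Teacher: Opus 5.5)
Your proof is correct and is essentially the paper's argument. The paper writes $\mathcal E_t(V)=\rho\langle g_t-V,\mathsf n(g_t)-\mathsf n(V)\rangle+\rho\langle V,\mathsf n(g_t)-\mathsf n(V)\rangle$, drops the second term by optimality of $\mathsf n(V)$ over the unit Frobenius ball, and bounds the first by Cauchy--Schwarz with $\|\mathsf n(g_t)-\mathsf n(V)\|_F\le 2$; this is just a regrouping of your split into $\|g\|_F-\|V\|_F$ (reverse triangle inequality) and $-\langle g-V,u\rangle$ (Cauchy--Schwarz), and your check of the degenerate cases matches the paper's appeal to the convention $\mathsf n(0)=0$.
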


\begin{proof}
Let \(g_t:=\nabla f(W_{t-1})\). By
\cref{eq:directional_linearization_gap},
\begin{equation}
\mathcal E_t(V)
=
\rho
\left\langle
g_t,\mathsf n(g_t)-\mathsf n(V)
\right\rangle.
\label{eq:directional_gap_expansion}
\end{equation}
Since \(\mathsf n(g_t)\) maximizes
\(\langle g_t,U\rangle\) over \(\|U\|_F\le1\), it follows immediately
that \(\mathcal E_t(V)\ge0\).

Furthermore,
\begin{align}
\mathcal E_t(V)
=
\rho
\left\langle
g_t-V,\mathsf n(g_t)-\mathsf n(V)
\right\rangle+
\rho
\left\langle
V,\mathsf n(g_t)-\mathsf n(V)
\right\rangle.
\label{eq:directional_gap_decomposition}
\end{align}
Because \(\mathsf n(V)\) maximizes
\(\langle V,U\rangle\) over \(\|U\|_F\le1\),
\begin{equation}
\left\langle
V,\mathsf n(g_t)-\mathsf n(V)
\right\rangle
\le0.
\label{eq:proxy_optimality}
\end{equation}
Therefore, by Cauchy--Schwarz,
\begin{align}
\mathcal E_t(V)
&\le
\rho
\left\|
g_t-V
\right\|_F
\left\|
\mathsf n(g_t)-\mathsf n(V)
\right\|_F
\nonumber\\
&\le
2\rho
\left\|
g_t-V
\right\|_F,
\end{align}
where the last inequality uses
\(
\|\mathsf n(g_t)\|_F\le1
\)
and
\(
\|\mathsf n(V)\|_F\le1.
\)
The argument also covers \(g_t=0\) or \(V=0\) under the convention
\(\mathsf n(0)=0\).
\end{proof}

\subsubsection{Stability of the \textsc{MuSAM} perturbation}

Let $G_t^{\mathrm{clean}}
:=
\nabla\ell(W_{t-1};\mathcal B_t)$
denote the raw stochastic gradient used to construct the
\textsc{MuSAM} perturbation. Applying
\cref{lem:achieved_linearization_deviation} gives
\begin{equation}
\mathcal E_t(G_t^{\mathrm{clean}})
\le
2\rho
\left\|
G_t^{\mathrm{clean}}-\nabla f(W_{t-1})
\right\|_F.
\label{eq:musam_directional_first}
\end{equation}
Taking expectation and applying Jensen's inequality together with
\cref{gradientvariance}, we obtain
\begin{align}
\mathbb E\!\left[
\mathcal E_t(G_t^{\mathrm{clean}})
\right]
&\le
2\rho\,
\mathbb E\!\left[
\left\|
G_t^{\mathrm{clean}}-\nabla f(W_{t-1})
\right\|_F
\right]
\nonumber\\
&\le
2\rho
\sqrt{
\mathbb E\!\left[
\left\|
G_t^{\mathrm{clean}}-\nabla f(W_{t-1})
\right\|_F^2
\right]
}
\nonumber\\
&\le
\frac{2\rho\sigma}{\sqrt B}.
\label{eq:musam_stability_proof}
\end{align}
This proves \cref{eq:musam_stability}.

\subsubsection{Stability of the \textsc{MoMuSAM} perturbation}

For \(s\ge1\), define $X_s:=W_{s-1}+\delta_s$ and $G_s:=\nabla\ell(X_s;\mathcal B_s)$.
Unrolling the momentum recursion from \(M_0=0\) gives, for every
\(t\ge2\),
\begin{equation}
M_{t-1}
=
(1-\mu)
\sum_{s=1}^{t-1}
\mu^{t-1-s}G_s.
\label{eq:proof_momentum_unrolled}
\end{equation}

Define the bias-corrected momentum
$\overline M_{t-1}
:=
\frac{M_{t-1}}{1-\mu^{t-1}}
=
\sum_{s=1}^{t-1}w_{t,s}G_s$,
where
$
w_{t,s}
:=
\frac{
(1-\mu)\mu^{t-1-s}
}{
1-\mu^{t-1}
}.$
These weights are nonnegative and satisfy
$\sum_{s=1}^{t-1}w_{t,s}=1.$

Since \(1-\mu^{t-1}>0\), positive rescaling does not change the
normalized direction: $\mathsf n(\overline M_{t-1})
=
\mathsf n(M_{t-1})$.
Consequently, $\mathcal E_t(\overline M_{t-1})
=
\mathcal E_t(M_{t-1})$.
Thus, \(\overline M_{t-1}\) is used only for the analysis and induces
exactly the same perturbation direction as the momentum appearing in
\textsc{MoMuSAM}.

Define the stochastic-gradient noise $Z_s
:=
G_s-\nabla f(X_s)$.
We decompose the bias-corrected momentum error as
\begin{align}
\overline M_{t-1}-\nabla f(W_{t-1})
=
\underbrace{
\sum_{s=1}^{t-1}
w_{t,s}
\left(
\nabla f(X_s)-\nabla f(W_{t-1})
\right)
}_{\mathsf B_t}+
\underbrace{
\sum_{s=1}^{t-1}w_{t,s}Z_s
}_{\mathsf N_t}.
\label{eq:momentum_error_decomposition}
\end{align}
Here, \(\mathsf B_t\) is the tracking component and
\(\mathsf N_t\) is the martingale-noise component.

\paragraph{Bounding the tracking component.}
The exact-polar update satisfies
$W_j=W_{j-1}-\eta O_j$ and $\|O_j\|_{\mathrm{op}}\le1$.
Hence, for every \(s\le t-1\),
\begin{equation}
\|W_{t-1}-W_{s-1}\|_{\mathrm{op}}
\le
\sum_{j=s}^{t-1}
\|W_j-W_{j-1}\|_{\mathrm{op}}
\le
(t-s)\eta.
\label{eq:iterate_distance_stability}
\end{equation}
Since
\(
\|\delta_s\|_{\mathrm{op}}
\le
\|\delta_s\|_F
\le\rho,
\)
we have
\begin{equation}
\|X_s-W_{t-1}\|_{\mathrm{op}}
\le
(t-s)\eta+\rho.
\label{eq:query_current_distance}
\end{equation}

Using
\(
\|A\|_F\le\|A\|_*
\),
\cref{lipschiz}, and
$\sum_{s=1}^{t-1}w_{t,s}=1$, we obtain
\begin{align}
\|\mathsf B_t\|_F
&\le
\sum_{s=1}^{t-1}
w_{t,s}
\left\|
\nabla f(X_s)-\nabla f(W_{t-1})
\right\|_F
\nonumber\\
&\le
\sum_{s=1}^{t-1}
w_{t,s}
\left\|
\nabla f(X_s)-\nabla f(W_{t-1})
\right\|_*
\nonumber\\
&\le
L
\sum_{s=1}^{t-1}
w_{t,s}
\|X_s-W_{t-1}\|_{\mathrm{op}}
\nonumber\\
&\le
L\left(
\rho+
\eta
\sum_{s=1}^{t-1}w_{t,s}(t-s)
\right).
\label{eq:tracking_component_first}
\end{align}

The weighted delay admits the exact expression
\begin{align}
A_t(\mu)
&:=
\sum_{s=1}^{t-1}w_{t,s}(t-s)
\nonumber\\
&=
\frac{1-\mu}{1-\mu^{t-1}}
\sum_{k=0}^{t-2}(k+1)\mu^k
\nonumber\\
&=
\frac{
1-t\mu^{t-1}+(t-1)\mu^t
}{
(1-\mu)(1-\mu^{t-1})
}.
\label{eq:At_calculation}
\end{align}
Moreover, $1-t\mu^{t-1}+(t-1)\mu^t
\le
1-\mu^{t-1}$,
because the difference between the right- and left-hand sides is
$
(t-1)\mu^{t-1}(1-\mu)\ge0.$
Therefore,
$A_t(\mu)
\le
\frac{1}{1-\mu}$.
Substituting this into
\cref{eq:tracking_component_first} yields
\begin{equation}
\|\mathsf B_t\|_F
\le
L\bigl(\rho+\eta A_t(\mu)\bigr)
\le
L\left(
\rho+\frac{\eta}{1-\mu}
\right).
\label{eq:tracking_component_bound}
\end{equation}

\paragraph{Bounding the martingale-noise component.}
Let \(\mathcal F_{s-1}\) denote the history before sampling
\(\mathcal B_s\). Since \(X_s\) is
\(\mathcal F_{s-1}\)-measurable, the stochastic-oracle assumption gives
$\mathbb E[Z_s\mid\mathcal F_{s-1}]=0$ and 
$\mathbb E[
\|Z_s\|_F^2
\mid
\mathcal F_{s-1}
]
\le
\frac{\sigma^2}{B}$.
For \(i<j\), \(Z_i\) is \(\mathcal F_{j-1}\)-measurable. Thus,
\begin{align}
\mathbb E\langle Z_i,Z_j\rangle
=
\mathbb E\!\left[
\mathbb E[
\langle Z_i,Z_j\rangle
\mid
\mathcal F_{j-1}
]
\right]
=
\mathbb E\!\left[
\left\langle
Z_i,
\mathbb E[Z_j\mid\mathcal F_{j-1}]
\right\rangle
\right]
=0.
\label{eq:noise_cross_terms}
\end{align}
It follows that
\begin{align}
\mathbb E\|\mathsf N_t\|_F^2
&=
\sum_{s=1}^{t-1}
w_{t,s}^2
\mathbb E\|Z_s\|_F^2
\nonumber\\
&\le
\frac{\sigma^2}{B}
\sum_{s=1}^{t-1}w_{t,s}^2
\nonumber\\
&=
\frac{\sigma^2}{B}
\frac{(1-\mu)^2}{(1-\mu^{t-1})^2}
\sum_{k=0}^{t-2}\mu^{2k}
\nonumber\\
&=
\frac{\sigma^2}{B}
\frac{1-\mu}{1+\mu}
\frac{1+\mu^{t-1}}{1-\mu^{t-1}}.
\label{eq:noise_component_bound}
\end{align}
Define $\Gamma_t(\mu)
:=
\frac{1-\mu}{1+\mu}
\frac{1+\mu^{t-1}}{1-\mu^{t-1}}.$
By Jensen's inequality,
\begin{equation}
\mathbb E\|\mathsf N_t\|_F
\le
\sqrt{\Gamma_t(\mu)}
\frac{\sigma}{\sqrt B}.
\label{eq:noise_component_first_moment}
\end{equation}

Combining
\cref{eq:momentum_error_decomposition,eq:tracking_component_bound,eq:noise_component_first_moment}
with the triangle inequality gives
\begin{align}
\mathbb E
\left\|
\overline M_{t-1}-\nabla f(W_{t-1})
\right\|_F
&\le
L\left(
\rho+\frac{\eta}{1-\mu}
\right)
+
\sqrt{\Gamma_t(\mu)}
\frac{\sigma}{\sqrt B}.
\label{eq:bias_corrected_momentum_error}
\end{align}
Applying
\cref{lem:achieved_linearization_deviation}
and using
$\mathcal E_t(\overline M_{t-1})
=
\mathcal E_t(M_{t-1})$, we obtain
\begin{align}
\mathbb E[
\mathcal E_t(M_{t-1})
]
&=
\mathbb E[
\mathcal E_t(\overline M_{t-1})
]
\nonumber\\
&\le
2\rho\,
\mathbb E
\left\|
\overline M_{t-1}-\nabla f(W_{t-1})
\right\|_F
\nonumber\\
&\le
2\rho
\left[
L\left(
\rho+\frac{\eta}{1-\mu}
\right)
+
\sqrt{
\frac{1-\mu}{1+\mu}
\frac{1+\mu^{t-1}}{1-\mu^{t-1}}
}
\frac{\sigma}{\sqrt B}
\right].
\label{eq:momusam_stability_proof}
\end{align}
This proves the finite-time bound
\cref{eq:momusam_stability}.

\subsubsection{Average stability under the prescribed schedule}

We first bound the average of
\(\sqrt{\Gamma_t(\mu)}\). Let $a:=1-\mu$.
For \(k=t-1\ge1\),
\begin{equation}
\Gamma_{k+1}(\mu)
=
\frac{a}{2-a}
\frac{1+(1-a)^k}{1-(1-a)^k}.
\label{eq:Gamma_a_form}
\end{equation}
For \(a\in(0,1)\), Bernoulli's inequality gives
$
(1-a)^{-k}
\ge
(1+a)^k
\ge
1+ak,
$
and therefore
$
(1-a)^k
\le
\frac{1}{1+ak}.
$
Consequently,
$
1-(1-a)^k
\ge
\frac{ak}{1+ak}.
$
Using \(1+(1-a)^k\le2\) and \(2-a\ge1\), we obtain
\begin{align}
\Gamma_{k+1}(\mu)
\le
a\frac{2(1+ak)}{ak}=
2a+\frac{2}{k}.
\label{eq:Gamma_simple_upper}
\end{align}
Hence,
$
\sqrt{\Gamma_{k+1}(\mu)}
\le
\sqrt{2a}
+
\sqrt{\frac{2}{k}}.
$
Since
\(
\sum_{k=1}^{N}k^{-1/2}\le2\sqrt N,
\)
we have
\begin{align}
\frac{1}{T-1}
\sum_{t=2}^{T}
\sqrt{\Gamma_t(\mu)}
&\le
\sqrt{2(1-\mu)}
+
\frac{2\sqrt2}{\sqrt{T-1}}.
\label{eq:average_Gamma_bound}
\end{align}

It remains to include the initial perturbation. Since
\(M_0=0\), \(\mathsf n(M_0)=0\), and \(\delta_1=0\),
\begin{align}
\mathcal E_1(M_0)
=
L_1(\nabla f(W_0))-L_1(M_0)
=
\rho\|\nabla f(W_0)\|_F.
\label{eq:initial_stability_gap}
\end{align}
For completeness, the gradient-gap inequality used below follows
directly from
\cref{ass:lower_bound,lipschiz}. The corresponding descent lemma gives
\begin{align}
f\left(
W_0-\frac{1}{L}\nabla f(W_0)
\right)
&\le
f(W_0)
-
\frac{1}{L}\|\nabla f(W_0)\|_F^2
+
\frac{1}{2L}\|\nabla f(W_0)\|_{\mathrm{op}}^2
\nonumber\\
&\le
f(W_0)
-
\frac{1}{2L}\|\nabla f(W_0)\|_F^2.
\end{align}
Since \(f\ge f_\star\), this implies
$
\|\nabla f(W_0)\|_F
\le
\sqrt{
2L\bigl(f(W_0)-f_\star\bigr)
}.
$
Therefore,
$
\frac{1}{T}
\mathbb E[\mathcal E_1(M_0)]
\le
\frac{\rho}{T}
\sqrt{
2L\bigl(f(W_0)-f_\star\bigr)
}.$

Averaging
\cref{eq:momusam_stability_proof} over \(t=2,\ldots,T\),
using \cref{eq:average_Gamma_bound}, and adding the initial term gives
\begin{align}
&\frac{1}{T}
\sum_{t=1}^{T}
\mathbb E[\mathcal E_t(M_{t-1})]\notag\\
&\le
\frac{\rho}{T}
\sqrt{
2L\bigl(f(W_0)-f_\star\bigr)
}+
\frac{2\rho(T-1)}{T}
\left[
L\left(
\rho+\frac{\eta}{1-\mu}
\right)
+
\frac{\sigma}{\sqrt B}
\left(
\sqrt{2(1-\mu)}
+
\frac{2\sqrt2}{\sqrt{T-1}}
\right)
\right].
\label{eq:momusam_average_complete}
\end{align}

Now choose $\eta=\rho=T^{-3/4}$ and $1-\mu=T^{-1/2}$.
It follows that
\begin{align}
\frac{1}{T}
\sum_{t=1}^{T}
\mathbb E[\mathcal E_t(M_{t-1})]
&=
\mathcal O\!\left(
\frac{L+\sigma/\sqrt B}{T}
+
\frac{\sigma}{\sqrt B\,T^{5/4}}
+
\frac{L}{T^{3/2}}
+
\frac{
\sqrt{L(f(W_0)-f_\star)}
}{T^{7/4}}
\right)
\nonumber\\
&=
\mathcal O\!\left(
\frac{L+\sigma/\sqrt B}{T}
\right).
\label{eq:momusam_average_final}
\end{align}

Finally, averaging
\cref{eq:musam_stability_proof} over \(t=1,\ldots,T\) gives
\begin{align}
\frac{1}{T}
\sum_{t=1}^{T}
\mathbb E\!\left[
\mathcal E_t(G_t^{\mathrm{clean}})
\right]
\le
\frac{2\rho\sigma}{\sqrt B}=
\mathcal O\!\left(
\frac{\sigma}{\sqrt B\,T^{3/4}}
\right),
\label{eq:musam_average_final}
\end{align}
where the last equality uses \(\rho=T^{-3/4}\).
Together,
\cref{eq:momusam_average_final,eq:musam_average_final}
prove
\cref{eq:momusam_stability_rate}.
\hfill\(\square\)

\newpage
\section{Additional Experiments}\label{experimentsec}

\begin{figure*}[t]
\centering
\begin{minipage}[b]{0.34\textwidth}
    \includegraphics[width=\textwidth]{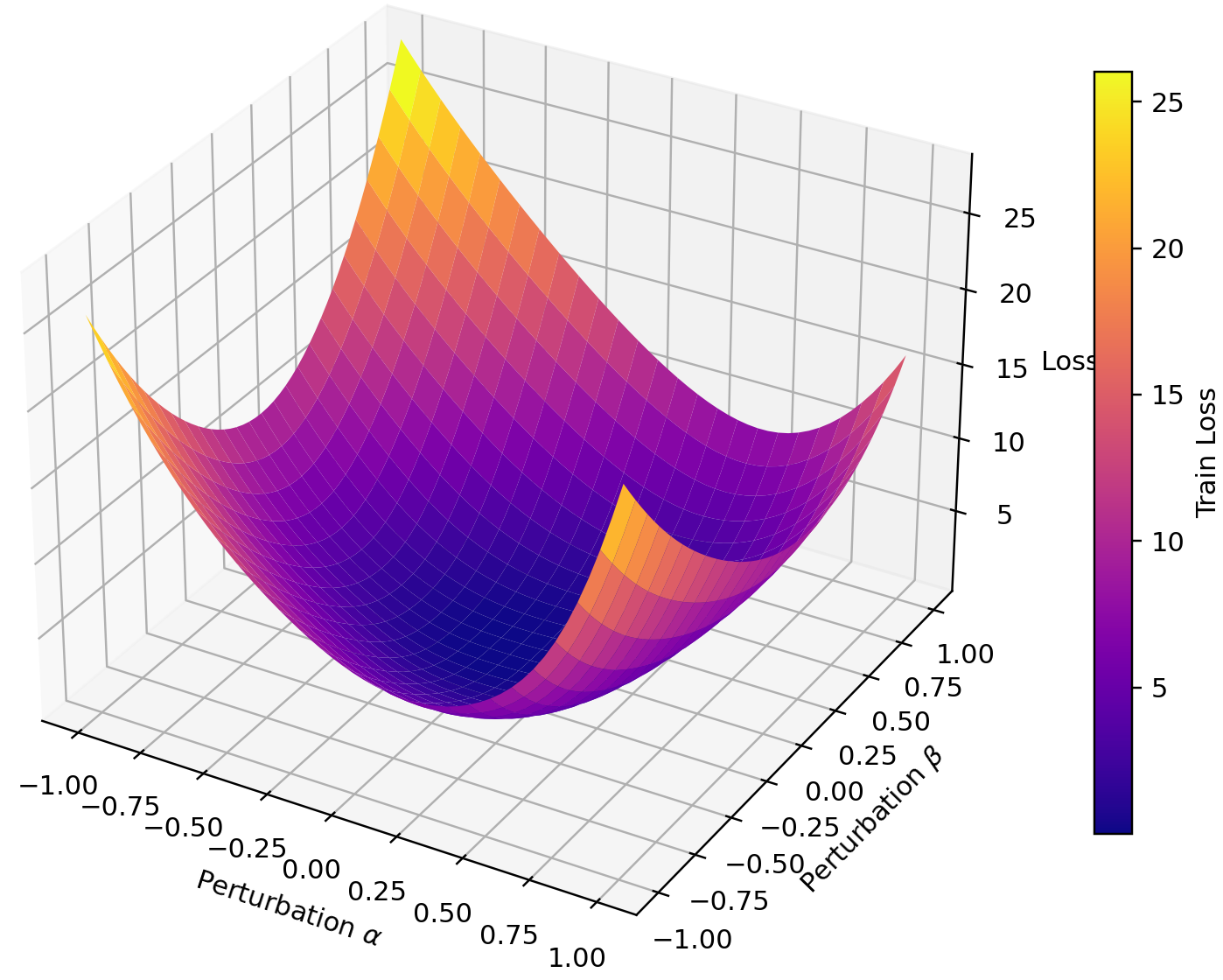}
    \par\vspace{-0.1cm} 
    \makebox[\textwidth]{{(a) \textsc{Muon}}}
\end{minipage}
\begin{minipage}[b]{0.32\textwidth}
    \includegraphics[width=\textwidth]{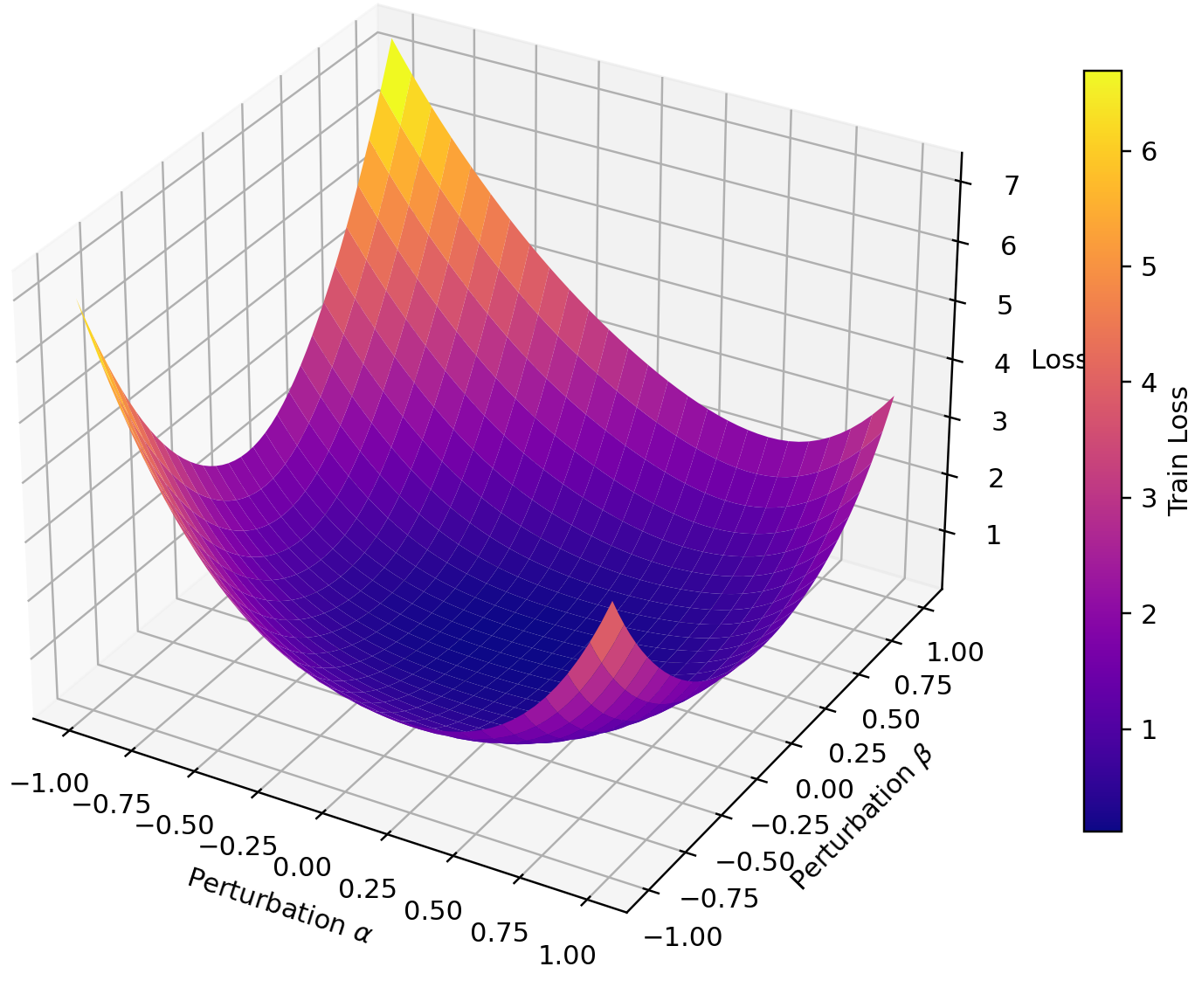}
    \par\vspace{-0.1cm} 
    \makebox[\textwidth]{{(b) \textsc{MuSAM}}}
\end{minipage}
\begin{minipage}[b]{0.32\textwidth}
    \includegraphics[width=\textwidth]{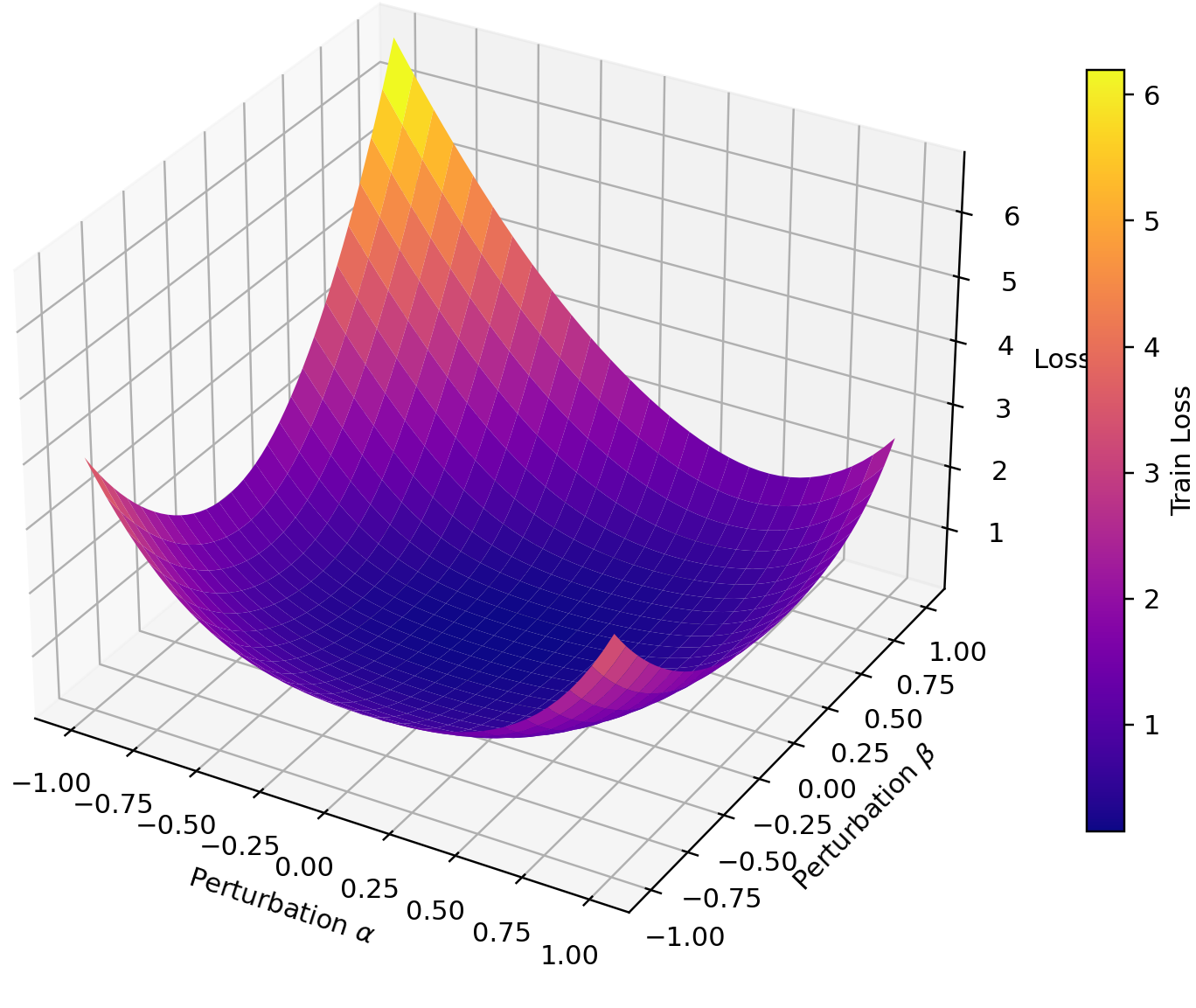}
    \par\vspace{-0.1cm} 
    \makebox[\textwidth]{{(c) \textsc{MoMuSAM}}}
\end{minipage}
\caption{3D loss landscapes for different algorithms.}
\label{fig:loss_landscape}
\end{figure*}

\begin{table}[t]
\centering
\caption{Test Accuracy under different datasets.}
\label{tab:test_acc}
\begin{tabular}{ccccccc}
\specialrule{1.9pt}{2pt}{2pt}
Dataset & \textsc{SAM} & \textsc{MSAM} & \textsc{VASSO} & \textsc{Muon} & \textsc{MuSAM}& \textsc{MoMuSAM} \\
\midrule
\textsc{FMNIST} & 93.83\textsubscript{$\pm$0.11} & {93.74\textsubscript{$\pm$0.14}} & {93.86\textsubscript{$\pm$0.08}} & {94.10\textsubscript{$\pm$0.06}} & {94.27\textsubscript{$\pm$0.08}} & {94.35\textsubscript{$\pm$0.17}} \\
\textsc{CIFAR-10} & 89.27\textsubscript{$\pm$0.05} & 89.40\textsubscript{$\pm$0.07} & 89.26\textsubscript{$\pm$0.25} & 90.72\textsubscript{$\pm$0.23} & 91.11\textsubscript{$\pm$0.18} & 91.21\textsubscript{$\pm$0.19} \\
\textsc{CIFAR-100} & 68.98\textsubscript{$\pm$0.29} & 68.48\textsubscript{$\pm$0.14} & 68.96\textsubscript{$\pm$0.33} & 74.18\textsubscript{$\pm$0.20} & 74.62\textsubscript{$\pm$0.17} & 75.14\textsubscript{$\pm$0.16} \\
\specialrule{1.9pt}{0pt}{0pt}
\end{tabular}
\end{table}

\paragraph{Experimental setup.} For the baseline optimizers, we tune the hyperparameters by grid search and report the best-performing configuration for each method. In contrast, for the proposed \textsc{MuSAM} and \textsc{MoMuSAM}, the hyperparameter choices are derived directly from the theoretical prescriptions in \cref{thm:musam_main,thm:momusam_main}. In particular, the resulting closed-form settings depend only on the total number of iterations \(T\), and are used throughout our experiments without additional task-specific search. This highlights an important practical advantage of our proposed methods: their hyperparameters are not only theoretically grounded, but also straightforward to instantiate in practice.
All image-classification experiments related to convergence and sensitivity analysis are repeated over 10 random seeds, and we report the mean and standard deviation. For loss-landscape visualization, we follow \citet{li2018visualizing} and plot the landscape along two random filter-normalized perturbation directions. The experiments were conducted on a host machine equipped with an Intel(R) Xeon(R) W9-3475X CPU running at 2.20\,GHz (maximum turbo frequency 4.80\,GHz), with 36 physical cores and 72 threads, 256\,GB DDR5 ECC RAM, and a single NVIDIA(R) RTX(TM) A6000 GPU with 48\,GB of memory.

\begin{table}[t]
\centering
\caption{Top 5 Hessian eigenvalues.}
\label{tab:hessian}
\begin{tabular}{cccccc}
\specialrule{1.9pt}{2pt}{2pt}
Algorithm & $\lambda_1$& $\lambda_2$ & $\lambda_3$ & $\lambda_4$ & $\lambda_5$ \\
\midrule
\textsc{Muon}  & 30.21 & 17.89 & 9.50 & 7.59 & 6.08 \\
\textsc{MuSAM} & 8.71 & 6.42 & 5.42 & 3.48 & 2.71 \\
\textsc{MoMuSAM} & 5.55 & 3.18 & 1.78 & 1.41 & 1.06 \\
\specialrule{1.9pt}{0pt}{0pt}
\end{tabular}
\end{table}

\begin{table}[t]
\centering
\setlength{\tabcolsep}{6pt}
\renewcommand{\arraystretch}{1.12}
\caption{Wall-clock time test accuracy of CIFAR-10 for different algorithms.}
\label{tab:wallclock}
\begin{tabular}{lccccccc}
\specialrule{1.9pt}{0pt}{0pt}
 & \multicolumn{6}{c}{\textbf{Wall-clock time test accuracy (\%)}} \\
\cmidrule(lr){2-8}
\textbf{Algorithm} & \textbf{200 sec} & \textbf{500 sec} & \textbf{800 sec} & \textbf{1100 sec} & \textbf{1400 sec} & \textbf{1700 sec}& \textbf{2000 sec} \\
\midrule
SAM    & 68.68 & 74.26 & 78.27 & 81.94 & 84.65 & 85.84 & 86.53 \\
MSAM   & 75.02 & 77.78 & 82.50 & 84.23 & 85.15 & 87.20 & 87.86 \\
VASSO  & 68.40  & 76.17 & 79.17 & 85.26 & 85.35 & 86.33 & 87.64 \\
Muon & 83.74  & 85.67 & 88.38 & 88.96 & 89.34 & 89.82& 90.02\\
MuSAM & 83.44  & 86.61 &87.57 & 89.10 & 89.49 & 89.60& 89.91\\
MoMuSAM & 84.30  & 87.27 & 88.45 & 89.22 & 89.67 & 89.93& 90.23
 \\
\specialrule{1.9pt}{2pt}{0pt}
\end{tabular}
\end{table}

\paragraph{Geometry analysis.}
The results in \cref{fig:loss_landscape}, \cref{tab:test_acc,tab:hessian} consistently support the central claim of our paper: introducing explicit sharpness awareness substantially strengthens Muon, and the momentum-guided design yields the strongest overall behavior. From \cref{tab:test_acc}, \textsc{Muon} already outperforms SAM-style baselines on all three image-classification benchmarks, confirming the benefit of matrix-aware orthogonalized updates; however, both \textsc{MuSAM} and \textsc{MoMuSAM} further improve upon \textsc{Muon} consistently, showing that matrix-aware update geometry and explicit sharpness control are indeed complementary. This gain becomes more pronounced on the harder dataset CIFAR-100, where \textsc{MoMuSAM} achieves the largest margin, suggesting that explicit local geometry control becomes increasingly valuable in more challenging and noisier optimization regimes. The geometry diagnostics in \cref{tab:hessian,fig:loss_landscape} explain this trend. Relative to \textsc{Muon}, \textsc{MuSAM} sharply reduces the top Hessian eigenvalues, and \textsc{MoMuSAM} reduces them even further, indicating substantially flatter local curvature around the converged solution. This is also visually reflected in the 3D loss landscapes: while \textsc{Muon} already reaches a relatively smooth basin, \textsc{MuSAM} produces a visibly flatter neighborhood, and \textsc{MoMuSAM} yields the broadest and smoothest basin among the three. Together, these results suggest that the accuracy gains are not merely optimization artifacts, but are tightly coupled with improved local flatness. Moreover, the consistent advantage of \textsc{MoMuSAM} over \textsc{MuSAM} indicates that the quality of the perturbation signal is critical: by constructing the perturbation from the momentum state rather than a single minibatch gradient, \textsc{MoMuSAM} probes sharpness with a direction that is temporally aggregated, less noisy, and more informed by Muon's accumulated optimization history, which in turn leads to a flatter solution and stronger final generalization.

\paragraph{Wall-clock time comparison.}
\cref{tab:wallclock} provides a more practical view of the efficiency--performance tradeoff by comparing test accuracy under the same wall-clock budget. The key observation is that \textsc{MoMuSAM} consistently matches or exceeds \textsc{Muon} across almost the entire time range, and achieves the best final accuracy overall. This is particularly significant because, unlike \textsc{MuSAM}, \textsc{MoMuSAM} does not require an additional gradient computation to construct the perturbation; instead, it reuses the momentum state already maintained by Muon, and therefore preserves essentially the same per-iteration computational profile as \textsc{Muon}. In other words, the gain of \textsc{MoMuSAM} is obtained without paying an additional wall-clock overhead relative to the base matrix-aware optimizer. By contrast, although \textsc{MuSAM} also improves generalization, its extra clean-gradient evaluation makes its advantage less pronounced under a fixed time budget. These results therefore highlight a central strength of our design: \textsc{MoMuSAM} is not only more accurate than \textsc{Muon}, but also achieves this improvement at nearly the same computational cost, yielding a distinctly better accuracy--efficiency tradeoff in practical training.

\end{document}